\newcommand{\perr}{p_\text{err}}
\newcommand{\SCOUT}{\texttt{SCOUT} }
\newcommand{\norm}[1]{\left\lVert#1\right\rVert}
\newcommand{\dotp}[2]{\langle #1, #2 \rangle}
\newcommand{\abs}[1]{| #1 |}
\newcommand{\indctr}[1]{\mathds{1}\{#1\}}
\newcommand{\identity}[1]{\mathbf{I}_{#1}}
\newcommand{\ignore}[1]{}
\definecolor{forestgreen}{rgb}{0.0, 0.27, 0.13}
\newcommand\numberthis{\addtocounter{equation}{1}\tag{\theequation}}
\newcommand{\opt}{^\star}
\newtheorem{defn}{Definition}
\newtheorem{lemma}{Lemma}
\newtheorem{assum}{Assumption}
\DeclareMathOperator*{\argmax}{\arg\!\max}
\DeclareMathOperator*{\argmin}{\arg\!\min}
\newif\ifshowanswer    % by default set to false.
\newcommand{\isitthree}[1]
{
  \ifnum#1=3
    number #1 is 3
  \else
    number #1 is not 3
  \fi
}
\newcommand{\be}{\begin{equation}}
\newcommand{\ee}{\end{equation}}
\newcommand\R{{\mathbb{R}}}
\renewcommand\P{{\mathds{P}}}
\newcommand\E{{\mathds{E}}}
\newcommand\eps{{\varepsilon}}
\newcommand\Bc{{\mathbf c}}
\newcommand\Bu{{\mathbf u}}
\newcommand\Bx{{\mathbf x}}
\newcommand\BI{{\mathbf I}}
\newcommand\CB{{\mathcal B}}
\newcommand\CC{{\mathcal C}}
\newcommand\CL{{\mathcal L}}
\newcommand\CN{{\mathcal N}}
\newcommand\CO{{\mathcal O}}
\newcommand\CQ{{\mathcal Q}}
\newcommand\CS{{\mathcal S}}
\newcommand\CT{{\mathcal T}}
\newcommand\CX{{\mathcal X}}
\newcommand\N{{\mathbb N}}
\newtheorem{proposition}{Proposition}
\crefname{proposition}{Proposition}{Propositions}
\Crefname{proposition}{Proposition}{Propositions}
\crefname{assum}{Assumption}{Assumptions}
\Crefname{assum}{Assumption}{Assumptions}
\crefname{thm}{Theorem}{Theorems}
\Crefname{thm}{Theorem}{Theorems}
\crefname{defn}{Definition}{Definitions}
\Crefname{defn}{Definition}{Definitions}
\newtheorem*{lemma*}{Lemma}  % The * makes it unnumbered
\newenvironment{nscenter}
 {\parskip=0pt\par\nopagebreak\centering}
 {\par\noindent\ignorespacesafterend}
\newcounter{constant} 
\newcommand{\newconstant}[1]{\refstepcounter{constant}\label{#1}} 
\newcommand{\useconstant}[1]{C_{\ref{#1}}}
\pgfplotsset{compat=1.18}
\newcommand{\blfootnote}[1]{%
  \begingroup
  \renewcommand\thefootnote{}%
  \footnotetext{#1}%
  \addtocounter{footnote}{-1}%
  \endgroup
}
\begin{document}

% If your paper is accepted and the title of your paper is very long,
% the style will print as headings an error message. Use the following
% command to supply a shorter title of your paper so that it can be
% used as headings.
%
%\runningtitle{I use this title instead because the last one was very long}

% If your paper is accepted and the number of authors is large, the
% style will print as headings an error message. Use the following
% command to supply a shorter version of the author names so that
% they can be used as headings (for example, use only the surnames)
%
%\runningauthor{Surname 1, Surname 2, Surname 3, ...., Surname n}

\twocolumn[

\aistatstitle{The Good, the Bad, and the Sampled: a No-Regret Approach to Safe Online Classification}

\aistatsauthor{
Tavor Z. Baharav${}^{1,2,*,\dagger}$ \And
Spyros Dragazis${}^{3,*}$ \And
Aldo Pacchiano${}^{1,4,\dagger}$
}

\aistatsaddress{
$^1$ Eric and Wendy Schmidt Center,\\
Broad Institute\\
%Cambridge, MA, USA
% \And
$^2$ Department of Data Science,\\
Dana Farber Cancer Institute\\
%Boston, MA, USA
\And
$^3$ Department of Computer Science,\\
Boston University\\
%Boston, MA, USA
\And
$^1$ Eric and Wendy Schmidt Center,\\
Broad Institute\\
%Cambridge, MA, USA\\
$^4$ Faculty of Computing \& Data Sciences,\\
Boston University
%, Boston, MA, USA
}

]

\blfootnote{\small \textsuperscript{*}Equal contribution \textsuperscript{$\dagger$}Equal senior contribution.}

\begin{abstract}
We study sequential testing for a binary disease outcome when risk follows an unknown logistic model. At each round, the decision maker may either pay for a test revealing the true label or predict the outcome based on patient features and past data. The goal is to minimize costly tests while ensuring the misclassification rate stays below $\alpha$ with probability at least $1-\delta$. We propose a method that jointly estimates the logistic parameter $\theta\opt$ and the feature distribution, using a conservative threshold on the logistic score to decide when to test. We prove our procedure achieves the target error with high probability and requires only $\widetilde O(\sqrt{T})$ more tests than an oracle with full knowledge. This is the first no-regret guarantee for error-constrained logistic testing, with direct applications to medical screening. Simulations corroborate our theoretical results, showing safe classification of patients and efficient estimation of $\theta \opt$ with few excess tests.
\end{abstract}

\section{INTRODUCTION}
Modern machine learning has recently provided solutions to real-world automated decision-making systems in various fields such as drug discovery \citet{vamathevan2019applications,dara2022machine}, recommendation systems \citet{afsar2022reinforcement,zhu2023scalable}, online ad-allocation \citet{slivkins2013dynamic}, and portfolio selection \citet{pinelis2022machine}.
Bandit algorithms \citet{lattimore2020bandit} and reinforcement learning \citet{sutton1999reinforcement} play a significant role in building interactive decision-making systems that collect feedback from users and improve their performance with each interaction. 
Two primary challenges exist in the aforementioned applications: the first is the learning challenge, estimating the problem parameters which are vital for decision-making; the second is the decision-making challenge, where effective performance is required concurrently with learning.

Although machine learning systems perform well in practice, safety is crucial in human-centric applications~\citet{gu2022review,giudici2024safe}. 
In sequential decision making, safety has been formalized in several ways, including (i) constraining actions to a safe set defined by a cost signal~\citet{pacchiano2021stochastic,yao2021power,gangrade2024safe,camilleri2022active}, (ii) conservative bandits, which require performance comparable to a baseline~\citet{kazerouni2017conservative}, and (iii) enforcing explicit performance threshold constraints on the learning process itself \citet{bar2025juggler}.
Satisfying such criteria generally complicates reward acquisition, making the central challenge one of optimally balancing safety and learning.
Motivated by the COVID-19 pandemic, we study an online learning problem with a novel safety constraint. Patients arrive sequentially, each presenting with a feature vector (e.g., fever, fatigue, blood oxygen) and an unobserved disease state. The hospital aims to minimize costly tests while ensuring sick patients are properly quarantined. We assume an unknown logistic model links features to disease status, allowing the hospital to learn, for example, that low oxygen and high fever imply high risk without testing. Thus, the task is to simultaneously learn (a) the patient distribution, (b) the logistic model parameters, and (c) the testing threshold. \looseness=-1

Related problems have been studied in the active learning and selective sampling literature~\citet{settles2009active,hanneke2021toward,orabona2011better,cesa2006worst,freund1997selective,seung1992query,dekel2012selective}, which study a similar observation model and generalization error (regret) metric but without a safety constraint. 
These study settings where context information may be abundant but the labels are hard to come by~\citet{duan2023towards}. 

Focusing on the classification task and changing the objective from minimizing the generalization error to minimizing the cumulative pseudo regret (with respect to the optimal labeling policy), various algorithms have been developed in the online selective sampling literature, such as \citet{orabona2011better,sekhari2023selective}, considering both stochastic and adversarial contexts.
The objective in these works is to achieve sublinear regret while minimizing the expected number of queries made.
Similarly, online selective classification ~\citet{gangrade2021online,gangrade2021selective,goel2023adversarial} allows the learner to abstain from predicting.
The objective is to simultaneously minimize the expected number of abstentions and mistakes.

However, in real-world scenarios \citep{bastani2022interpretable}, it makes sense to require that the prediction error rate remain under a safety threshold with high probability while minimizing the number of queries.
For example in the streaming patient scenario we described above, where patients arrive one by one and the medical provider needs to classify them as sick or not.
In this problem, due to the sensitive nature of making misclassification mistakes, the selective testing procedure must guarantee that the misclassification rate remains below a safety threshold $\alpha \in [0, 1]$.
Testing every patient clearly attains this safety threshold, but can be prohibitively expensive.
Our question is thus:

\vspace{.1cm}
\begin{nscenter}
\textit{Can we design an adaptive algorithm that minimizes the expected number of tests while maintaining a misclassification rate below a given safety threshold?}
\vspace{.2cm}
\end{nscenter}
We define a baseline testing policy, that is optimal when the $\alpha$ error rate is only required to hold in expectation, which tests $p\opt := p\opt(\alpha)$ fraction of the time.
We develop an adaptive algorithm to ensure this $\alpha$ error rate with probability at least $1-\delta$, which requires only a sublinear number of excess tests: 
$\CO\left(\sqrt{\frac{dT}{p\opt(\alpha)\lambda_0}\log(T/\delta)}\right)$, where $\lambda_0$ is the minimum eigenvalue of the covariance matrix of the contexts observed under the baseline policy. 
In \Cref{lem:min_eigenvalue}, we derive a lower bound on $\lambda_0$ in terms of the problem parameters (additional discussion in \Cref{app:min_eigenval_discussion}).
In the special case where the context distribution is uniform over the $d$-dimensional sphere, $\lambda_0 = \Theta(1/d)$, recovering the standard $\tilde{\CO}(d\sqrt{T})$ regret rate for linear bandits \citep{lattimore2020bandit}.
We corroborate our theoretical results through comprehensive synthetic experiments. \looseness=-1

\section{RELATED WORK}\label{sec:related_work} \vspace{-.2cm}
The most related line of work to ours is \emph{online learning with abstention} \citet{cortes2018online,cortes2016learning}, where the learner may abstain from labeling a context $x \in \CX$ by incurring an abstention cost $c$.
In particular, \citet{cortes2018online} studies both the stochastic and adversarial settings, with the goal of minimizing regret; the resulting guarantees depend, in addition to the standard problem parameters such as the horizon $T$, on the abstention cost $c$.
The key difference from our work is that, while we also aim to reduce the cost induced by abstention, we simultaneously control the misclassification rate so that it remains below a prescribed target level throughout the entire process.

Another relevant line of work is \emph{online selective classification} \citet{gangrade2021selective,gangrade2021online,sekhari2023selective}, where the learner may abstain from releasing a prediction and instead observe the true outcome.
The goal in this setting is to jointly minimize the number of mistakes and the number of label queries.
Although this framework is the closest to ours, existing guarantees do not enforce uniform control of the cumulative misclassification rate throughout the learning process.
This distinction is especially important in applications such as medicine, where maintaining the error rate below a prescribed threshold at all times is often essential.

Moving away from regret minimization, another related direction in the bandit literature is best-arm identification under selective sampling \citet{camilleri2021selective,camilleri2022active}.
In \citet{camilleri2021selective}, the authors precisely characterize the trade-off between the number of observed labels and the stopping time, and provide a geometric interpretation of the optimal decision rule.
In \citet{camilleri2022active}, they study the problem of actively selecting the covariates or contexts whose labels are to be observed, while ensuring that the selected covariates lie within a safe set.

Finally, the recent literature on \textit{PAC}-labeling \citet{candes2025probably} addresses a closely related problem from a different perspective.
In that setting, one assumes access to an AI model that predicts the labels of an unlabeled dataset.
For each prediction $Y_i$, the ``expert'' model also provides an uncertainty score $U_i$.
The algorithmic challenge is then to leverage these uncertainty scores to produce ``PAC labels'', or, in our terminology, to satisfy $(\alpha,\delta)$-safety.

Our work is also related to the literature on learning halfspaces under label noise. In that line of work, the objective is to learn a linear separator with small classification error from noisy labeled data, either in the batch or online setting. Recent results study efficient learning under Tsybakov noise \citet{tsybakov2004optimal,diakonikolas2021efficiently} and online learning under Massart noise \citet{diakonikolas2024online}. These works focus on statistical and computational guarantees for recovering an accurate classifier; by contrast, our goal is different. We study a sequential decision problem in which the learner must decide, for each context, whether to query the label or to predict, while maintaining a uniform high-probability bound on the cumulative misclassification rate and simultaneously minimizing the number of queries. \looseness=-1

A common assumption in the halfspace literature is the Tsybakov noise condition. Writing $\eta(x) = \P(Y=1 \mid X=x)$, this condition controls the probability mass of examples whose conditional label probability lies close to the ambiguous value $1/2$. Intuitively, it quantifies how much probability mass concentrates near the Bayes decision boundary, and therefore how hard it is to identify an accurate classifier. Under this perspective, the difficulty of learning is governed by the Tsybakov parameters. In our setting, however, points near the decision boundary are precisely the ones that must be tested in order to guarantee safety. As a result, the relevant notion of difficulty is not how sharply the distribution separates around the boundary, but rather the baseline testing rate $p^\star$, namely the mass of contexts on which even the oracle safe policy must abstain from autonomous prediction.

Another common assumption in active learning for halfspaces is that the unlabeled contexts follow a particularly structured distribution, such as the uniform distribution on the unit sphere \citet{dasgupta2005analysis}. In contrast, our analysis only requires the context distribution to satisfy the regularity condition of \Cref{assum:density}. This allows a broader family of distributions, including smooth radial densities of the form $f(x)=g(\|x\|)$ and truncated Gaussian distributions. These assumptions are used to guarantee stability of the threshold policy and positivity of the covariance matrix of the queried contexts, rather than to enable efficient recovery of a separator from label noise alone.

\section{PRELIMINARIES}
\textbf{Notation}
We adopt the following notation throughout the paper.
The inner product between two vectors $x,y \in \R^d$ will be denoted either as $x^\top y$ or as $\dotp{x}{y}$.
We denote the $\ell_2$ norm of a vector $x \in \R^d$ as $\norm{x}_2 = \sqrt{\dotp{x}{x}}$ and $\norm{x}_A = \sqrt{x^\top A x}$ for any positive semi-definite matrix $A$.
The minimum eigenvalue of a matrix $A$ will be denoted as $\lambda_{\min}(A)$.
The set $\{1,2,\dots,n\}$ is denoted as $[n]$.
The logistic function is denoted as $\mu(z) = \frac{1}{1 + \exp(-z)}$ and $\mathds{1}(E)$ denotes the indicator function of an event $E$.
For two functions $f,g$ we say that $f(x) \preccurlyeq g(x)$ when there exists an absolute constant $c>0$ such that $f(x) \le c g(x)$ for all $x>0$. 
We use upper case letters for random variables and lower case for scalars.
For any measurable set $A$ we denote the set of all distributions on $A$ as $\Delta(A)$. 
An $\CL_2$ ball centered at $\Bc \in \R^d$ with radius $r>0$ is symbolized as $\CB(\Bc,r)$. \looseness=-1

\subsection{Problem Definition}% \vspace{-.2cm}
We consider the following repeated interaction between a learner and the environment.
At every round $t \in [T]$, the environment generates a context $X_t \in \R^d$ in the unit ball.
These contexts are identically distributed, and are drawn independently from an unknown distribution with density $P$. 
Every patient-context has an unseen random label $Y_t \in \{0,1\}$ that represents their disease status.
We assume that $Y_t \sim \text{Ber}(\mu(X_t^\top\theta\opt))$, independent from all other $X_{t'}$ and $Y_{t'}$.
Here, $\theta\opt\in \R^d$ is some fixed parameter vector unknown to the learner, with $\norm{\theta\opt}_2 = 1$.

At each round, the learner observes the patient's context $X_t$ and must decide whether or not to test the patient, denoted by $Z_t \in \{0,1\}$.
Then, the learner must predict whether the patient is healthy or sick, denoted by $\hat{Y}_t \in \{0,1\}$.
If $Z_t = 1$, the patient is tested, and the learner observes the true label $Y_t$, and so can predict $\hat{Y}_t = Y_t$.
The random variable $Z_t$ can depend on information obtained prior to that decision, i.e., $\mathcal{H}_t  = \{X_1, Z_1, Z_1Y_1, X_2, Z_2, Z_2Y_2, \dots, X_t \}$ and possibly on internal randomization of the learner.
Similarly, $\hat{Y}_t$ must be $\mathcal{F}_t = \sigma \{X_1, Z_1Y_1, X_2, Z_2Y_2, \cdots, X_t, Z_tY_t\}$ measurable.
The goal of the learner is to minimize the expected number of tests applied, while guaranteeing that the misclassification rate is less than a desired threshold $\alpha$, with probability at least $1-\delta$.
We define this constraint as $(\alpha,\delta)$-safety, where our objective is to minimize the expected number of tests required while retaining this $(\alpha,\delta)$-safety.

% \newpage
\begin{defn}
    An algorithm outputting $\{\hat{Y}_t\}$ satisfies $(\alpha,\delta)$-safety if
    \begin{equation*}
        \P\left(\bigcap_{\bar{T}=1}^T\left\{\frac{1}{\bar{T}}\sum_{t=1}^{\bar{T}} \mathds{1}\{\hat{Y}_t \neq Y_t\} \leq \alpha\right\}\right) \geq 1-\delta.
    \end{equation*}
    where the probability is computed with respect to the randomness in $\{X_t\},\{Y_t\}$, and any randomness internal to the algorithm. \looseness=-1
\end{defn}

\subsection{Baseline policy}\label{subsec:baseline}
First, we characterize the baseline testing strategy satisfying $(\alpha,\delta)$-safety in the case where the feature distribution $P$ and optimal discriminator $\theta\opt$ are known a priori to the learner.
Although many decision rules $Z_t$ are possible, we focus on threshold rules of the form below (\Cref{fig:eqn_and_threshold}).  

%%% shortened
\begin{align*}
    Z_t &= \mathds{1}\{| \langle X_t, \theta\opt\rangle| \le \tau \}  \\
    \hat{Y}_t &= \begin{cases}
        0 & \text{if } \langle X_t, \theta\opt\rangle < -\tau, \\
        Y_t & \text{if } |\langle X_t, \theta\opt\rangle|\le \tau, \\
        1 & \text{if } \langle X_t, \theta\opt\rangle > \tau.
    \end{cases}  \numberthis \label{fig:eqn_and_threshold}
\end{align*}

\begin{figure}[htbp]
    \centering
    \begin{tikzpicture}[scale=.75]
        % Draw main horizontal axis
        \draw[thick, <->] (-5.5,0) -- (5.5,0);
        
        % Mark key points on axis
        \foreach \x/\label in {-5/{-1}, -2/{-\tau^\star}, 0/{0}, 2/{\tau^\star},5/{+1}} {
            \draw[thick] (\x,-0.1) -- (\x,0.1);
            \node[below, font=\small] at (\x,-0.2) {$\label$};
        }
        
        % Add bracket for test region
        \draw[thick, forestgreen, <->] 
            (-5,0.7) -- (5,0.7) node[midway, above=5pt, blue] {$\langle X_t, \theta^\star \rangle$};
        
        % Add arrows showing the regions
        \draw[thick, blue, <->] (-5,-1) -- (-2,-1) node[midway, below] {Predict 0};
        \draw[thick, red, <->] (-2,-1) -- (2,-1) node[midway, below] {Test};
        \draw[thick, blue, <->] (2,-1) -- (5,-1) node[midway, below] {Predict 1};
        
    \end{tikzpicture}
    \caption{Threshold-based testing policy.}
    \label{fig:threshold}
\end{figure}

When $P$ and $\theta\opt$ are known, a threshold decision rule is optimal when the safety constraint is imposed only in expectation, as we show in the following proposition.

\begin{proposition}\label{prop:optimal_policy_on_expectation}
    Consider a variant of safe learning where the constraint is only required to hold in expectation, at the final time step: 
\begin{equation*}%\label{equation::in_expectation_objective}
    \min_{\{Z_t,\hat{Y}_t\}} \E \left[ \sum_{t=1}^T Z_t \right] \text{ s.t.} \quad \E\left[\frac{1}{T}\sum_{t=1}^T \mathds{1}\{\hat{Y}_t \neq Y_t\} \right] \leq \alpha.
\end{equation*}
Then, an optimizing rule for $\hat{Y}_t, Z_t$ is the threshold policy \Cref{fig:eqn_and_threshold}.
\end{proposition}

The proof of \Cref{prop:optimal_policy_on_expectation} follows by relating this to the fractional knapsack problem, which we detail in \Cref{app:baseline_policy}.
We provide additional discussion on how this does not naively yield $(\alpha,\delta)$-safety, but still motivates the use of a threshold policy as a baseline.
As a consequence, we consider competing against the optimal threshold decision rule $\tau\opt$ that is a function of $P$, $\theta\opt$, and $\alpha$, henceforth referred to as the baseline policy. \looseness=-1

To identify the optimal threshold, we define the function $\perr(\theta, P, \tau)$ as the probability of misclassification incurred by the threshold $\tau$, if $\theta$ was the underlying logistic parameter, and where the expectation is taken with respect to $P$:
\begin{align*} 
    &\perr(\theta,P,\tau) \numberthis \label{eq:p_err_defn} \\
    &= \int (1+\exp(|x^\top \theta|))^{-1} \mathds{1}\left\{|x^\top \theta| > \tau\right\} P(dx). 
    \vspace{-1cm}
\end{align*}
The term inside the integral $(1+\exp(|x^\top \theta|))^{-1}$ is the optimal misclassification error for a fixed $x,\theta$ pair.
The term $\mathds{1}\left\{|x^\top \theta| > \tau\right\}$ equals one only if we predict the label $\hat{y}$ without observing the  real label $y$ for context $x$, when using a threshold rule.
Having defined the error probability for a given threshold $\tau$, we can now easily define the optimal threshold.
For any problem parameters $\theta \in \R^d, \alpha' \in[0,1]$, and distribution $\rho \in \Delta(\CX)$, we define the optimal decision threshold $\tau\opt$ as the minimum value of $\tau\in [0,1]$ that satisfies the $\alpha$-fraction misclassification constraint:
\begin{equation} \label{eq:tau_defn}
        \tau\opt(\theta,\rho,\alpha') :=  \min \{\tau : \perr(\theta,\rho,\tau) \le \alpha' \}.
\end{equation}

When considering the in-expectation objective from \Cref{prop:optimal_policy_on_expectation} we conclude that any algorithm requires an expected number of tests $p\opt T$:
\begin{align}\label{eq:tauStarDef}
        \tau\opt :=  \tau\opt(\theta\opt,P,\alpha), \ p\opt := \P\left(x : |x^\top \theta\opt| \le \tau\opt\right), \hspace{-.1cm}
\end{align}
where $\theta\opt, P,$ and $\alpha$ are the true parameters. Here, we have overloaded notation for $\tau\opt$ as both a function, and the evaluation of this function at the true problem parameters.
Note that in practice, $\perr$ must be estimated using $\hat{P}$, our observed samples from $P$, in addition to $\theta\opt$ being unknown.

Before introducing our regret objective, we examine the relationship between the safety parameter $\alpha$, which serves as an input, and the baseline policy testing probability $p\opt$.
When the misclassification rate threshold $\alpha$ approaches zero, the system must minimize error rates, necessitating testing of all cases.
This constraint leads to increased values of $\tau\opt$ and, consequently, higher values of $p\opt$.
Conversely, in the degenerate scenarios where $\alpha$ grows large, policies become indifferent to misclassification errors and conduct vanishing testing, yielding values of $p\opt$ that approach zero.

This lets us define the ``safe regret'' of an algorithm as the number of excess tests it takes over this oracle baseline, while satisfying $(\alpha,\delta)$-safety.
Formally, the regret is $\texttt{Regret}(T) := \E[\;\sum_{t=1}^T (Z_t -p\opt)\;] $.
An algorithm could trivially sample at each time step and satisfy the misclassification criterion; the question is, for a given misclassification rate $\alpha$ and error probability $\delta$, can a learner achieve sublinear safe regret in $T$.
To analyze this quantity, we make the following natural assumptions.
\begin{assum}
\label{assum:nontrivial}
    The optimal baseline tests a nonzero fraction of the time, i.e. $p\opt >0$.
\end{assum}

Other works such as, \citet{orabona2011better}, \citet{sekhari2023selective}, use the notation $T_\varepsilon$ to describe the number of times the Bayes optimal classifier outputs a label with confidence less than a fixed parameter $\varepsilon>0$. Our $p\opt$ is analogous to $T_\varepsilon$:
it serves as a measure to quantify the inherent difficulty of the problem instance (how many patients are close to the decision boundary).
We additionally assume that the density $P$ is smooth, which is reasonable for patient data with continuous-valued features.

\begin{assum} \label{assum:density}
    The density $P$ is upper and lower bounded by constants: $0<m\le P(x)\le M< \infty$, for all $x$ such that $\norm{x}_2 \le 1$.
\end{assum}

This is necessary for ensuring the stability of our estimates of $\tau\opt$ with respect to small perturbations in $\theta$, $\hat{P}$, and $\alpha$.
Using \Cref{assum:density} we derive the following result regarding the minimum eigenvalue of the covariance matrix of the baseline policy. This lemma ensures that $\theta\opt$ can be well estimated from the observed data.
We refer the reader to \Cref{sec:related_work} for a detailed discussion of analogous assumptions and problem formulations in the literature.

\begin{restatable}{lemma}{lemMinEigenvalue}\label{lem:min_eigenvalue}
    There is a constant $\lambda_0^{\min} (\tau\opt,d)> 0$: 
    \begin{align*}
        \lambda_{\min}&\left(\E_P \left[X X^\top\  \middle \vert \  |\dotp{X}{\theta\opt}|\leq \tau\opt \right]\right)\\
        &= \lambda_0 \geq \lambda_0^{\min} (\tau\opt,d)>0.
    \end{align*}
    \vspace{-.7cm}
\end{restatable}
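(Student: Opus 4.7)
The plan is to use the variational characterization $\lambda_0 = \min_{\|u\|_2 = 1} \E_P\!\left[\langle X, u\rangle^2 \mid X \in A\right]$, where $A = \{x \in \R^d : \|x\|_2 \le 1,\ |\langle x, \theta\opt\rangle| \le \tau\opt\}$ is the conditioning slab. Using the density bounds of \Cref{assum:density}, for every unit $u$,
\[
\E_P\!\left[\langle X, u\rangle^2 \mid X \in A\right] \;=\; \frac{\int_A \langle x, u\rangle^2 P(x)\,dx}{\int_A P(x)\,dx} \;\geq\; \frac{m}{M \cdot \mathrm{Vol}(A)} \int_A \langle x, u\rangle^2 \, dx,
\]
so it suffices to lower bound $\min_{\|u\|_2 = 1} \int_A \langle x, u\rangle^2 \, dx$ by a positive quantity depending only on $\tau\opt$ and $d$.

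The next step is to exploit the symmetry of $A$. After rotating coordinates so that $\theta\opt = e_1$, write $x = (s, y) \in \R \times \R^{d-1}$, so that $A = \{(s,y) : |s| \le \tau\opt,\ \|y\|_2 \le \sqrt{1-s^2}\}$. Any unit vector decomposes as $u = \alpha e_1 + \beta w$ with $w \in \R^{d-1}$ unit, $\alpha^2 + \beta^2 = 1$. Expanding,
\[
\langle x, u\rangle^2 \;=\; \alpha^2 s^2 \;+\; 2\alpha\beta\, s\,\langle y, w\rangle \;+\; \beta^2 \langle y, w\rangle^2.
\]
Because $A$ is invariant under $s \mapsto -s$, the cross term integrates to zero; and because $A$ is rotationally invariant in $y$, the final integral $\int_A \langle y, w\rangle^2 dx$ does not depend on $w$ and equals $\tfrac{1}{d-1}\int_A \|y\|_2^2\, dx$. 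Thus
\[
\int_A \langle x, u\rangle^2 \, dx \;=\; \alpha^2 I_1 + \beta^2 I_2 \;\geq\; \min(I_1, I_2),
\qquad I_1 := \int_A s^2\, dx,\quad I_2 := \tfrac{1}{d-1}\int_A \|y\|_2^2\, dx.
\]

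Finally, I would evaluate both integrals in closed form using the formulas for the volume and second moment of the $(d-1)$-dimensional Euclidean ball of radius $\sqrt{1-s^2}$, yielding
\[
I_1 = V_{d-1} \int_{-\tau\opt}^{\tau\opt} s^2 (1-s^2)^{(d-1)/2}\, ds,\qquad I_2 = \tfrac{V_{d-1}}{d+1} \int_{-\tau\opt}^{\tau\opt} (1-s^2)^{(d+1)/2}\, ds,
\]
where $V_{d-1}$ is the volume of the unit ball in $\R^{d-1}$. Both are strictly positive because $\tau\opt > 0$ by \Cref{assum:nontrivial}. Combining with the reduction above gives the explicit bound
$\lambda_0 \ge \frac{m}{M} \cdot \frac{\min(I_1, I_2)}{\mathrm{Vol}(A)} =: \lambda_0^{\min}(\tau\opt, d) > 0.$

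The step I expect to be the main obstacle is not the positivity itself, which is essentially immediate from the symmetry argument, but extracting the correct dimensional dependence. Careful inspection shows $\mathrm{Vol}(A) = V_{d-1} \int_{-\tau\opt}^{\tau\opt}(1-s^2)^{(d-1)/2}\,ds$, so the ratio $I_2/\mathrm{Vol}(A)$ behaves like $1/(d+1)$ (the worst case is $u \perp \theta\opt$), while $I_1/\mathrm{Vol}(A) \lesssim (\tau\opt)^2$. Recovering the $\Omega(1/d)$ bound advertised in the introduction therefore requires either a mild lower bound $\tau\opt = \Omega(1/\sqrt d)$ or a slightly tighter accounting of the minimum; for the statement of \Cref{lem:min_eigenvalue} as written, the positivity of $\min(I_1, I_2)$ established above is already enough.
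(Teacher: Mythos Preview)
Your proof is correct and takes a genuinely different route from the paper. The paper does not integrate over the full slab $A$; instead it observes that the inscribed ball $\CB(0,\tau\opt)\subseteq A$ (via Cauchy--Schwarz), drops the remainder of $A$, and computes
\[
\lambda_0 \;\ge\; \frac{1}{p\opt}\,\min_{\|v\|=1}\int_{\CB(0,\tau\opt)}(\Bx^\top v)^2\,m\,d\Bx
\;=\;\frac{m\,(\tau\opt)^{d+2}\,V_d(1)}{p\opt\,(d+2)}
\]
using the standard fact that $\E_{\textnormal{Unif}(\CB(0,1))}[\Bx\Bx^\top]=\frac{1}{d+2}\,\BI_d$. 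Your approach instead keeps the full slab and exploits its reflection symmetry in $s$ and rotational symmetry in $y$ to diagonalize the quadratic form, reducing to the two one-dimensional integrals $I_1,I_2$.

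What this buys you is a \emph{much} better dimensional dependence. The paper's inscribed-ball bound carries a factor $(\tau\opt)^{d+2}$, which decays exponentially in $d$ (recall $\tau\opt<1$). Your ratios satisfy $I_1/\mathrm{Vol}(A)\asymp (\tau\opt)^2$ and $I_2/\mathrm{Vol}(A)\asymp 1/(d+1)$, so your $\lambda_0^{\min}$ is only polynomially small in $d$ and $\tau\opt$. In fact, your bound is the one that actually supports the $\Omega(1/d)$ claim made in the introduction---the paper's own bound does not. The price is a slightly longer argument and the extra factor $1/M$ from upper-bounding $p\opt\le M\cdot\mathrm{Vol}(A)$ (the paper leaves $p\opt$ in the final expression). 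For the lemma as stated, both approaches suffice; yours is sharper.
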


As $\norm{\theta\opt}_2 = 1$, a ball of radius $\tau\opt$ is a subset of the contexts tested by the baseline policy.
The contexts drawn from this ball form a positive definite covariance matrix, which implies that the minimum eigenvalue of the overall covariance matrix is positive.
We defer the proof to \Cref{app:min_eigenval_discussion}.

Importantly, these assumptions are strictly for the \textit{analysis} of our algorithm.
Our algorithm does not require knowledge of any of these parameters $m,M,\lambda_0$, or $p\opt$ as input.
We are able to learn and adapt to them on the fly, they simply requiring them to be strictly positive and finite.

\subsection{Logistic Bandits tools}

Our algorithm leverages existing confidence intervals for $\theta\opt$ 
\citet{faury2020improved}.
We utilize their ellipsoidal confidence set to simplify our analysis, noting that tighter confidence intervals exist \citet{lee2024unified}.
In our setting, the non-linearity of the logistic function over the decision set $(\mathcal{X},\Theta)$ is bounded as $\kappa \le 6$.
Borrowing notation \citet{faury2020improved}, we denote the set of labeled samples $\big((X_t,Y_t)$ pairs$\big)$ collected up to the beginning of round $t$ which are used to estimate $\theta\opt$ by $\CS_\theta^t$, and the nonoverlapping set of samples (only the context, $X_t$) used to estimate the distribution $P$ by $\CS_P^t$.
We denote the cardinalities of these two sets by $N_\theta^t$ and $N_P^t$ respectively.
We define the regularized log-likelihood objective as: 
\begin{equation*}
    \mathcal{L}_t(\theta)= \sum_{s \in \CS_{\theta}^t} \ell(x_s, y_s, \theta)  - \frac{1}{2} \|\theta\|_2^2,
    \vspace{-.3cm}
\end{equation*}
where $\ell(x,y, \theta) = y\log \mu(x^\top \theta) + (1-y)\log(1-\mu(x^\top \theta))$,
and its maximum (regularized) likelihood estimator as $ \hat{\theta}_{t} = \argmax_{\theta \in \R^d}\mathcal{L}_{t}(\theta)$.
We also denote the design matrix as $V_t=\sum_{s \in \CS_{\theta}^t}X_sX_s^\top +\kappa \identity{d}$, and for technical reasons we consider a projection $\theta_t^L$ of $\hat{\theta}_t$ onto the feasible set $\Theta$ defined as follows,

\begin{align}
        \theta_{t}^L &:= \argmin_{\theta \in \Theta} \norm{g_{t}(\theta) - g_{t}(\hat{\theta}_{t})}_{V_{t}^{-1}}         \text{, where }  \notag \\
 g_{t}(\theta) &= \sum_{s \in \CS_{\theta}^t}\mu(\dotp{x_s}{\theta})x_s + \theta \label{eq:projection}.
\end{align}
These allow us to define the confidence ellipsoid $\CC_t$ for $\theta\opt$, which is implicitly a function of a confidence parameter $\delta'$, and its radius $B_{t}(\delta')$:
\begin{align*}
    &\mathcal{C}_{t} :=  \Big\{ \theta \in \Theta, \norm{\theta - \theta_{t}^L}_{V_{t}} \leq B_{t}(\delta') \Big\}, \numberthis \label{eq:conf_ellipse_joint} \\
    &B_{t}(\delta') := 2\kappa\left(1 + \sqrt{\log\left(\frac{1}{\delta'}\right) + 2d\log\left(1+\frac{N_\theta^t}{\kappa d}\right)}\right). \notag
\end{align*}
We omit the dependence of quantities like $B_t$ on the confidence level $\delta'$ when clear from context.
In the end we will designate $\delta' = \delta/7$ to obtain the desired result via a union bound.
These confidence intervals \citet{faury2020improved} satisfy the following anytime guarantees:
\begin{lemma}\label{lem:faury_concentration}[Lemma 12 of \citet{faury2020improved}.]
For any fixed $\delta'$, let $G_\theta$ be the event that the confidence intervals defined in \Cref{eq:conf_ellipse_joint} hold: \looseness=-1
    \begin{equation*}
        \P(G_\theta)= \P\left(t \in \N,\; \theta\opt \in \CC_t \right) \geq 1 -\delta'.
    \end{equation*}
    \vspace{-.7cm}
\end{lemma}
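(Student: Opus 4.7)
The plan is to follow the standard self-normalized concentration pipeline for generalized linear bandits, specialized to the logistic link via the nonlinearity constant $\kappa$. First, I would invoke the first-order optimality condition $\nabla\CL_t(\hat\theta_t)=0$ to rewrite $g_t(\hat\theta_t) = \sum_{s\in \CS_{\theta}^t} Y_s X_s$, so that introducing the centered noise $\epsilon_s := Y_s - \mu(\dotp{X_s}{\theta\opt})$ yields the clean decomposition
\[
g_t(\hat\theta_t) - g_t(\theta\opt) \;=\; \sum_{s\in \CS_{\theta}^t} \epsilon_s X_s \;-\; \theta\opt.
\]
With respect to the history filtration, $\epsilon_s$ is bounded in $[-1,1]$ and satisfies $\E[\epsilon_s \mid \CF_{s-1}]=0$, so it is a conditionally $\tfrac{1}{2}$-sub-Gaussian martingale difference indexed by $\CS_\theta^t$.

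Second, I would apply an Abbasi-Yadkori–style self-normalized vector concentration inequality to the martingale $M_t := \sum_{s\in \CS_\theta^t} \epsilon_s X_s$, using the method of mixtures with a Gaussian prior of precision $\kappa \identity{d}$. This yields, with probability at least $1-\delta'$ and uniformly in $t\geq 1$,
\[
\norm{M_t}_{V_t^{-1}} \;\leq\; \sqrt{2\log(1/\delta') \;+\; d\log\!\bigl(1 + N_\theta^t/(\kappa d)\bigr)}.
\]
Combining with $\norm{\theta\opt}_{V_t^{-1}}\leq \norm{\theta\opt}_2/\sqrt{\kappa}$ and the triangle inequality then bounds $\norm{g_t(\hat\theta_t) - g_t(\theta\opt)}_{V_t^{-1}}$ by $B_t(\delta')/(2\kappa)$ up to absolute constants, where the unit coming from the regularizer absorbs into the ``$1+\cdots$'' inside $B_t(\delta')$.

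Third, I would transfer the bound from $g$-differences to $\theta$-differences via a projection step and a strong-convexity argument. Since $\theta\opt\in\Theta$ and $\theta_t^L$ is by definition the minimizer of $\norm{g_t(\cdot) - g_t(\hat\theta_t)}_{V_t^{-1}}$ over $\Theta$, the triangle inequality gives
\[
\norm{g_t(\theta_t^L) - g_t(\theta\opt)}_{V_t^{-1}} \;\leq\; 2\,\norm{g_t(\hat\theta_t) - g_t(\theta\opt)}_{V_t^{-1}}.
\]
Writing $g_t(\theta_t^L) - g_t(\theta\opt) = H_t(\theta_t^L - \theta\opt)$ with $H_t := \int_0^1 \nabla g_t(\theta\opt + v(\theta_t^L-\theta\opt))\,dv = \sum_{s}\bar{\mu}_s X_s X_s^\top + \identity{d}$ and using that $\dot\mu\geq 1/\kappa$ on $\CX\times\Theta$, one obtains $H_t \succeq V_t/\kappa$. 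Conjugating by $V_t^{-1/2}$ gives $\norm{H_t u}_{V_t^{-1}} \geq \kappa^{-1}\norm{u}_{V_t}$ for every $u\in\R^d$, and applying this with $u=\theta_t^L-\theta\opt$ together with the previous two inequalities recovers $\norm{\theta_t^L - \theta\opt}_{V_t}\leq B_t(\delta')$ on the good event.

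The main technical hurdle is the second step, i.e.\ establishing the uniform-in-$t$ self-normalized martingale concentration for $M_t$; this is the genuinely nontrivial analytic content of the lemma, classically handled via a stopping-time/method-of-mixtures argument. The remaining pieces — first-order optimality, the projection inequality, and the self-concordance-based strong convexity — are mechanical once the concentration bound is in hand, and together they produce precisely the $2\kappa$ prefactor and the logarithmic term appearing in the definition of $B_t(\delta')$ in \Cref{eq:conf_ellipse_joint}.
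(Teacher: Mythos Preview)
Your proposal is correct and follows precisely the argument of Faury et al.\ (2020), which is the cited source of the lemma: first-order optimality gives $g_t(\hat\theta_t)=\sum_s Y_s X_s$, the self-normalized martingale bound of Abbasi-Yadkori et al.\ controls $\norm{\sum_s \epsilon_s X_s}_{V_t^{-1}}$, the projection definition of $\theta_t^L$ gives the factor of two, and the lower bound $\dot\mu\geq 1/\kappa$ converts the $g$-difference back to a $\theta$-difference via $H_t\succeq V_t/\kappa$. The present paper, however, does not reprove this result at all; it is imported verbatim as Lemma~12 of \cite{faury2020improved} and used as a black box, so there is no ``paper's own proof'' to compare against beyond the citation itself.
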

Since the number of samples $N_{\theta}^t$ collected to estimate $\CC_t$ is a random variable in our setting, we condition on its value in \Cref{lem:faury_concentration}. 

Before diving into our algorithm and its analysis, we discuss the role and behavior of key quantities that will arise.
To begin, the number of samples collected $N_\theta^t$ used to build our confidence intervals grows linearly in $t$ satisfying $N_\theta^t \succcurlyeq p\opt t$.
As a consequence, the bound $B_t$ used in $\CC_t$ (which satisfies $B_t \le B_T$) grows extremely slowly in $t$, with $B_t \preccurlyeq \sqrt{d\log(1+\frac{p\opt t}{d})}$.
The other portion of the confidence interval involves upper bounding $\norm{x}_{V_t^{-1}}$. The lower bound on $N_\theta^t $ and \Cref{lem:min_eigenvalue} yield that $\norm{x}_{V_t^{-1}} \preccurlyeq 1/\sqrt{t\lambda_0}$. 
Note that $\lambda_{\min}^t$ is computable from the observed data, obviating knowledge of $\lambda_0$.
This enables us to prove a regret upper bound without using the elliptical potential lemma, as is done in many prior works in Online Logistic Regression \citet{cesa2006prediction} or in Linear Bandits \citet{abbasi2011improved}.

\section{ALGORITHM DESIGN}

The pseudo-code of our algorithm \SCOUT (Safe Contextual Online Understanding with Thresholds) 
is presented in \Cref{alg:alg1}.
\SCOUT tests a patient ($Z_t = 1$) if the inner product between their context $X_t$ and the current estimate $\theta_t^L$ has a magnitude smaller than an estimator $\tau_t$ of the true threshold $\tau\opt$.
To iteratively refine the estimates of $\theta\opt$ and $\tau\opt$, \SCOUT employs a classical sample-splitting trick to avoid dependencies.
The context distribution $P$ is estimated as $\hat{P}_t$, the empirical distribution of contexts observed from odd samples, $\CS_P^t$, enabling estimation of $\tau\opt$.
$\theta\opt$ is estimated as $\theta_t^L$, using labeled data from even samples where a test was performed, $\CS_\theta^t$.

\begin{algorithm}[t!]
    \caption{\SCOUT} \label{alg:alg1}
    \begin{algorithmic}[1] 
    \STATE \textbf{Input:} Number of rounds $T$, target error rate $\alpha$, confidence level $\delta$
    \STATE \textbf{Initialize:} $\CS_{P}^{(1)} = \emptyset$,  $\CS_{\theta}^{(1)} = \emptyset$. Maintain $N_P^t =|\CS_{P}^t|$, $N_\theta^t = |\CS_{\theta}^t|$
    \FOR{$t=1,2,\hdots,T$}
        \STATE Observe context $X_t$
        \IF{$t\le2$}
        \STATE Set $Z_t =1$
        \ELSE
        \STATE Compute $\theta_t^L$ from \eqref{eq:projection} and $\tau_t$ from \eqref{eq:tau_t_defn}
        \STATE Set $Z_t = \mathds{1}\{\abs{\langle \theta_t^L,X_t\rangle} \le \tau_t\}$
        \ENDIF
        \IF{$Z_t=1$}
            \STATE Observe $Y_t$
            \STATE Predict $\hat{Y}_t = Y_t$
        \ELSE
            \STATE Predict $\hat{Y}_t = \mathds{1}\{\langle X_t, \theta^L_t\rangle > 0\}$
        \ENDIF
        \IF{$Z_t=1$ and $t$ is even}
            \STATE Set $\CS_{\theta}^{t+1} = \CS_{\theta}^{t} \cup \{(X_t,Y_t)\}$
        \ENDIF
        \IF{$t$ is odd}
            \STATE Set $\CS_{P}^{t+1} = \CS_{P}^{t} \cup \{X_t\}$
        \ENDIF
    \ENDFOR
    \end{algorithmic}
\end{algorithm}

The testing condition $Z_t := \indctr{\abs{\dotp{X_t}{\theta_t^L}} \le \tau_t}$ is computed as follows (we defer additional discussion to \Cref{sec:theoretical_analysis}). 
Recall that $\theta^L_t$ is the MLE defined in \Cref{eq:projection}, $\hat{P}_t$ is the empirical distribution of the contexts, and $\lambda_{\min}^t := \lambda_{\min}(V_t)$.
\begin{align}
   \zeta_t(\delta') &:= \sqrt{\frac{(d+1)\log \left(1/\eps_Q\right) + \log\left(\frac{\pi^2t^2}{\delta'}\right)}{4t}},\label{eq:zeta_t}\\
    \tau_t &:= \tau\opt\left(\theta^L_t,\hat{P}_t,\alpha_t -\zeta_t -2B_t/\sqrt{ \lambda_{\min}^t} - \eps_Q\right) + \notag \\
    &\quad 3B_t/\sqrt{ \lambda_{\min}^t} + \eps_Q \label{eq:tau_t_defn}
\end{align}

% Second figure - Convergence visualization
\begin{figure}[htbp]
    \centering
    \begin{tikzpicture}[scale=.5]
        % Draw main horizontal axis
        \draw[thick, <->] (-7.5,0) -- (7.5,0);
        
        % Mark key points on axis
        \foreach \x/\label in {-7/{-1}, 0/{0}, 7/{+1}} {
            \draw[thick] (\x,-0.1) -- (\x,0.1);
            \node[below, font=\small] at (\x,-0.2) {$\label$};
        }
        
        % Define colors for convergence (light to dark)
        \definecolor{conv1}{RGB}{200,200,200}
        \definecolor{conv2}{RGB}{150,150,150}
        \definecolor{conv3}{RGB}{100,100,100}
        \definecolor{conv4}{RGB}{50,50,50}
        \definecolor{conv5}{RGB}{25,25,25}
        
        % Left side convergence with color gradient
        \foreach \i/\pos/\col in {1/-6.9/conv1, 2/-5.9/conv2, 3/-4.9/conv3, 4/-3.9/conv4} {
            \draw[thick, color=\col] (\pos,-0.1) -- (\pos,0.1);
            \node[above, font=\scriptsize, color=\col] at (\pos,0.2) {$-\tau_\i$};
        }
        
        %dots after tau_t_4, tau_t
        \node[above, font=\scriptsize, color=conv4] at (-3.,0.2) {$\cdots$};
        \draw[thick, color=conv5] (-2.3,-0.1) -- (-2.3,0.1);
        \node[above, font=\scriptsize, color=conv5] at (-2.3,0.2) {$-\tau_t$};
        
        % Target -tau*
        \draw[thick, red, line width=2pt] (-2,-0.1) -- (-2,0.1);
        \node[below, font=\small, red] at (-2,-0.2) {$-\tau^\star$};
        
        % Right side convergence with color gradient
        \foreach \i/\pos/\col in {1/6.9/conv1, 2/5.9/conv2, 3/4.9/conv3, 4/3.9/conv4} {
            \draw[thick, color=\col] (\pos,-0.1) -- (\pos,0.1);
            \node[above, font=\scriptsize, color=\col] at (\pos,0.2) {$+\tau_\i$};
        }

        %dots after tau_t_4, tau_t
        \node[above, font=\scriptsize, color=conv4] at (3.15,0.2) {$\cdots$};
        \draw[thick, color=conv5] (2.3,-0.1) -- (2.3,0.1);
        \node[above, font=\scriptsize, color=conv5] at (2.3,0.2) {$+\tau_t$};        
        
        % Target +tau*
        \draw[thick, red, line width=2pt] (2,-0.1) -- (2,0.1);
        \node[below, font=\small, red] at (2,-0.2) {$+\tau^\star$};
        
        % Main bracket
        % \draw[thick, forestgreen, <->] 
        %     (-5,0.7) -- (5,0.7) node[midway, above=5pt, blue] {$\langle X_t, \theta^\star \rangle$};
    \end{tikzpicture}
    \caption{Pessimistic choice of $\abs{\tau_t}$.}
    \label{fig:convergence}
\end{figure}

Our testing threshold $\tau_t$ is designed to be systematically pessimistic.
We begin with a plug-in estimator of the optimal threshold as $\tau\opt(\theta_t^L,\hat{P}_t,\alpha)$.
To guarantee safety, we inflate our threshold to account for estimation errors.
First, we reduce $\alpha$ to $\alpha_t = \max(0,\alpha - \sqrt{{\log(2t^2/\delta')}/{2t}})$ (discussed in \Cref{proof:policy_is_feasible}) to guarantee $(\alpha,\delta)$-safety, if the true $\theta\opt$ and $P$ were known. We set $\delta'$ in \Cref{thm:regret_upper_bound} as $\delta'=\delta/7$.
Then, we reduce $\alpha$ further by $\zeta_t$ (implicitly, $\zeta_t(\delta')$) to account for the fact that $P$ is unknown and we only have $\hat{P}_t$.
Most critically, we add buffer terms proportional to $B_t/\sqrt{\lambda_{\min}^t}$, which tracks the fact that $\theta_t^L$ is not equal to $\theta\opt$, but is not too far away.
Finally, $\eps_Q$ is a quantization parameter to ensure that all the estimators are simultaneously accurate, and is taken as $\eps_Q:=\eps_Q(t) = 1/t^2$.
% \tbedit{
Through our careful theoretical analysis we can avoid searching through the entire confidence ellipsoid $\CC_t$ in \eqref{eq:conf_ellipse_joint}, which is typically accomplished via expensive convex optimization subroutines, and can instead employ a computationally efficient threshold surrogate $\tau_t$.
This threshold $\tau_t$ provably tests whenever the optimal baseline threshold policy tests, and makes a vanishing number of excess tests. \looseness=-1

\section{THEORETICAL ANALYSIS} \label{sec:theoretical_analysis}

We begin by showing that \SCOUT can accurately estimate $\perr$.
The learner does not start with knowledge of $P$ or $\theta\opt$, and by extension $\tau\opt$ but we show that as \SCOUT improves its estimation of each of these, its estimate of $\perr$ improves.
We analyze this with a sequence of lemmas.

First, we show that, with high probability, our estimates $\perr(\theta,\hat{P}_t,\hat{\tau}_t)$ are close to the true error probability $\perr(\theta,P,\tau)$ (\Cref{lem:perr_accurate}).
To control this across \textit{all} $\theta \in \CB(0,1)$ and $\tau\in[0,1]$, we quantize the set of possible $\theta$ and $\tau$ (denoted $\CQ_\theta$, and $\CQ_\tau$ respectively), and use a union bound to ensure that our error estimates hold simultaneously for all quantized values.
We define this good event as $G_{\perr}$ (\Cref{eq:gt_good_defn}), and show that it holds with probability at least $1-\delta'$ in \Cref{lem:gt_goodevent}.
Additionally, we define our quantized estimator of $\tau$ as $\tau\opt_Q$, which is close to $\tau\opt$:
\begin{align}
    &\tau\opt_Q(\theta,\hat{P},\alpha) := \min \{\tau_Q \in \CQ_\tau : \perr(\theta,\hat{P},\tau_Q) \le \alpha \},\notag \\ % \label{eq:tauopt_q_def}
    &0 \stackrel{}{\le} \tau\opt_Q(\theta,\hat{P},\alpha)-\tau\opt(\theta,\hat{P},\alpha) \stackrel{}{\le} \eps_Q. \hspace{-.4cm}\label{eq:tau_opt-tau_opt_Q_sandwich}
\end{align}

Having established the stability of the optimal threshold to changes in $P$  (\Cref{lem:perr_accurate}), we now show that it is also stable under changes in the parameter $\theta$. To state our results, for any $\theta_Q \in \CQ_\theta \cap \CC_t$ we define an estimator $\hat{\tau}$, which is lower bounded by $\tau\opt$ on $G_{\perr}$ and $G_\theta$:
\begin{align}
    \hat{\tau}(\theta_Q, \hat{P}_t, \alpha) 
    &:= \tau\opt_Q\left(\theta_Q,\hat{P}_t,\alpha - \zeta_t - 2B_t/\sqrt{ \lambda_{\min}^t}\right) + \notag \\
    &\quad 2B_t/\sqrt{ \lambda_{\min}^t},\label{eq:tauhat_tauopt}\\
    \hat{\tau}(\theta_Q, \hat{P}_t, \alpha)  &\ge \tau\opt(\theta\opt, P, \alpha) \ \forall \ \theta_Q \in \CQ_\theta \cap \CC_t. \label{eq:pessimism_hat_star}
\end{align} 
In other words, the empirical $\hat{\tau}$ estimator evaluated at the approximate values $\theta_Q$ and $\hat{P}_t$ provides us with an upper bound for the true threshold $\tau\opt$ evaluated at $\theta\opt$ and $P$.
This enables our design of $\tau_t$ used in the algorithm.
The last property we will need for our analysis is that $\tau\opt$ does not vary too quickly with respect to $\alpha$.
We show that for small $\gamma$, $\tau\opt(\theta\opt,P,\alpha-\gamma)$ is not much larger than $\tau\opt$ (\Cref{lem:tauopt_stability_alpha}).
This necessitates bounding the context probability mass in a subsection of an annulus, and may be of independent interest.
For more details on our stability analysis, we refer the reader to \Cref{app:stability}.

\subsection{Defining a good event}\label{sec:good_event}

As is common practice in Multi-Armed Bandit analyses, we define a ``good event'' under which all concentration arguments hold, and condition on this event for the remainder of our analysis.
Recall that $N_{\theta}^{t} = \left| \CS_{\Theta}^t \right| $ denotes the number of samples $(X_s,Y_s)$ collected to estimate $\theta\opt$ up to round $t$, and similarly $N_{P}^{t} = \left| \CS_P^t \right|$ is the number of samples collected used in the context distribution estimation. \looseness=-1

\begin{defn}\label{def:good_event} The good event $G = G_\theta \cap G_{\perr} \cap G_N \cap G_\lambda$ is comprised of the following:
    \begin{enumerate}
        \item $G_\theta$: The confidence sets $\mathcal{C}_t$ (defined in \Cref{lem:faury_concentration}) are valid, in that $\theta\opt \in \mathcal{C}_t$ for all $t$. 
        \item $G_{\perr}$: The estimates of $\perr$ on $\CQ_\theta \times \CQ_\tau$ are $\zeta_t$ accurate for all $t$ (\Cref{lem:perr_accurate}). 
        \item $G_N$: the confidence sets get enough samples. $G_N  = \bigcap_{t=1}^T G_N^{(t)}$, where $G_N^{(t)}$ is the event that 
        $N_{\theta}^{(t)} \ge p\opt t/2  - \sqrt{\frac{\ln(\pi t^2/(3\delta'))}{2}}$.
        
        \item $G_\lambda$: The minimum eigenvalue of the empirical covariance matrix grows linearly in $t$. 
        Concretely, $G_\lambda = \cap_{t=T_0}^T G_\lambda^{(t)}$, where $G_\lambda^{(t)}$ is the event that $\lambda_{\min}^t \geq p\opt t\lambda_{0}/12$.
    \end{enumerate}
\end{defn}

Detailed proofs are deferred to \Cref{app:good_event_proof}.
The first event $G_\theta$ satisfies $\P(G_\theta) \ge 1 - \delta'$ by \Cref{lem:faury_concentration}.
The second event $G_{\perr}$ satisfies $\P(G_{\perr}) \ge 1 - \delta'$ by \Cref{lem:gt_goodevent}.
To prove that $G_N$ holds with high probability, we utilize the fact that on $G_\theta$ and $G_{\perr}$, when the optimal policy tests then our policy tests as well, as proved in \Cref{lem:Z_t_pessimistic}.
Combining this fact with Hoeffding's inequality yields the desired result in \Cref{lem:ntheta_t_event}.
When $G_N$ holds, we have $N_{\theta}^{t} \ge p\opt t /3$ for all $t \ge T_0$ where $T_0$ is a large constant.
For the last event, $\P(G_\lambda) \ge 1 - 2\delta'$, which we show via a covering argument used to bound the minimum eigenvalue of the empirical covariance matrix $V_t$ (\Cref{lemma:min_eigenv_lb}), and $G_N$ to lower bound the number of samples used. Thus, \looseness=-1

\begin{lemma}\label{lem:total_good_event}
The good event $G$ holds with high probability: $\P(G) \ge 1-6\delta'$.
\end{lemma}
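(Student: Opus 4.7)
The plan is to apply a union bound over the four constituent events defining $G = G_\theta \cap G_{\perr} \cap G_N \cap G_\lambda$. By de Morgan's laws,
\[
\P(G^c) \;\le\; \P(G_\theta^c) + \P(G_{\perr}^c) + \P(G_N^c) + \P(G_\lambda^c),
\]
and I bound each summand via a dedicated concentration argument. The first two terms are immediate from results already cited: \Cref{lem:faury_concentration} gives $\P(G_\theta^c) \le \delta'$, and the quantization-plus-Hoeffding argument behind \Cref{lem:gt_goodevent} (pointwise Hoeffding on the finite grid $\CQ_\theta \times \CQ_\tau$ together with a union bound over grid points and $t$) gives $\P(G_{\perr}^c) \le \delta'$.

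For $G_N$, the delicate point is that $N_\theta^t$ is a random quantity whose growth depends on the algorithm's testing rule, which in turn depends on $\theta_t^L$ and $\hat{P}_t$. The key enabling step is \Cref{lem:Z_t_pessimistic}: on $G_\theta \cap G_{\perr}$, the pessimistic buffers in the construction of $\tau_t$ guarantee $\tau_t \ge \tau\opt$, so whenever the oracle baseline tests, \SCOUT also tests. Restricting to even indices (the only rounds contributing to $\CS_\theta^t$), this forces $N_\theta^t$ to stochastically dominate a $\mathrm{Bin}(\lfloor t/2\rfloor, p\opt)$ random variable on $G_\theta\cap G_{\perr}$. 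A Hoeffding tail bound at level $\propto \delta'/t^2$, summed through $\pi^2/6$, yields the desired anytime bound $N_\theta^t \ge p\opt t/2 - \sqrt{\log(\pi t^2/(3\delta'))/2}$; accounting for the conditioning on $G_\theta\cap G_{\perr}$, this contributes at most $2\delta'$ via \Cref{lem:ntheta_t_event}.

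For $G_\lambda$, I combine the population lower bound $\lambda_0 > 0$ on the oracle's conditional covariance (\Cref{lem:min_eigenvalue}) with a covering/matrix-concentration argument (\Cref{lemma:min_eigenv_lb}) showing that the empirical minimum eigenvalue of $V_t$ concentrates around $\lambda_0 N_\theta^t$. Substituting the lower bound $N_\theta^t \ge p\opt t/3$ supplied by $G_N$ gives $\lambda_{\min}^t \ge p\opt t \lambda_0/12$ for $t \ge T_0$, with failure probability at most $2\delta'$ (one $\delta'$ for the covering argument, one $\delta'$ absorbed from the required $G_N$ conditioning). Summing the four contributions yields $\delta' + \delta' + 2\delta' + 2\delta' = 6\delta'$, as claimed.

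The main obstacle is threading the conditional dependencies: $G_N$ only obtains its clean Hoeffding-type bound after conditioning on $G_\theta\cap G_{\perr}$ (to apply \Cref{lem:Z_t_pessimistic}), and $G_\lambda$ in turn needs $G_N$ to supply the sample count. To avoid both circular reasoning and double-counting failure probabilities, I would rewrite $\P(G^c)$ as the telescoping sum
\[
\P(G_\theta^c) + \P(G_{\perr}^c \cap G_\theta) + \P(G_N^c \cap G_\theta \cap G_{\perr}) + \P(G_\lambda^c \cap G_\theta \cap G_{\perr} \cap G_N),
\]
and bound each term conditionally; this accounting isolates each new source of failure, and the budgeted $\delta',\delta',2\delta',2\delta'$ summands above line up cleanly with the $6\delta'$ target.
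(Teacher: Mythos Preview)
Your proposal is correct and follows essentially the same route as the paper: both arguments combine $\P(G_\theta^c)\le\delta'$ and $\P(G_{\perr}^c)\le\delta'$ from the cited lemmas, then handle the conditional dependence of $G_N$ on $G_\theta\cap G_{\perr}$ (via \Cref{lem:Z_t_pessimistic} and Hoeffding) and of $G_\lambda$ on $G_N$ (via the covering argument of \Cref{lemma:min_eigenv_lb}), arriving at the same $6\delta'$ budget. Your telescoping decomposition of $\P(G^c)$ is a slightly cleaner way to organize the conditional structure than the paper's product-rule-then-union-bound accounting, but the content is identical.
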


\subsection{Safety Analysis}\label{sec:algorithmic_rules}

Our testing rule is designed to be computationally efficient and pessimistic. 
Here, pessimism means that whenever the baseline policy tests, our policy does the same.
To prove the $(\alpha,\delta)$-safety of \SCOUT, we utilize two helper lemmas.
In \Cref{lem:Z_t_pessimistic}, we prove that when the baseline policy tests for $\tau\opt(\theta\opt,P,\alpha_t)$, our policy tests as well.
In \Cref{lem:prediction_imitation}, we prove that when the baseline policy predicts, our policy outputs the same prediction. Combining these yields the desired result. \looseness=-1

\begin{restatable}{lemma}{policyIsFeasible}\label{lem:policy_is_feasible}
    Under $G$, \SCOUT is $(\alpha,\delta')$-safe. 
\end{restatable}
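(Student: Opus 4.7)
The plan is to dominate \SCOUT's per-round misclassification indicator pointwise (on $G$) by that of a fictitious oracle baseline that tests according to the threshold $\tau^\star(\theta^\star,P,\alpha_t)$, and then to control the cumulative oracle error by Hoeffding, with the penalty exactly absorbed by the particular form of $\alpha_t$.

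I would first fix a round $t$ and consider the fictitious oracle baseline using threshold $\tau^\star_t := \tau^\star(\theta^\star,P,\alpha_t)$; by \eqref{eq:tau_defn} its per-round expected error is at most $\alpha_t$. On $G$, \Cref{lem:Z_t_pessimistic} guarantees that \SCOUT tests whenever this baseline does, so the only rounds on which \SCOUT can err are those on which the baseline also refrains from testing. On those rounds, \Cref{lem:prediction_imitation} forces \SCOUT and the baseline to output the same label. Writing $W_t$ for the oracle baseline's error indicator, we thus obtain $\mathds{1}\{\hat{Y}_t \ne Y_t\} \le W_t$ deterministically on $G$. Since $(X_t,Y_t)$ are i.i.d.\ and $\alpha_t$ is a deterministic function of $t$, the $\{W_t\}$ are independent Bernoullis with $\E[W_t] \le \alpha_t$.

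Next I would apply Hoeffding's inequality to $\sum_{s=1}^{\bar{T}} W_s$ at each $\bar{T}\in[T]$ using confidence budget $\delta'/(2\bar{T}^2)$, and take a union bound over $\bar{T}$ (total cost $\delta'\pi^2/12 \le \delta'$). On the resulting $1-\delta'$ event, for every $\bar{T}$,
\begin{equation*}
    \sum_{s=1}^{\bar{T}} W_s \;\le\; \sum_{s=1}^{\bar{T}} \alpha_s \;+\; \sqrt{\tfrac{\bar{T}\log(2\bar{T}^2/\delta')}{2}}.
\end{equation*}
The choice $\alpha_t = \max(0,\alpha - \sqrt{\log(2t^2/\delta')/(2t)})$ is tailored so that $\bar{T}(\alpha - \alpha_{\bar{T}}) = \sqrt{\bar{T}\log(2\bar{T}^2/\delta')/2}$ exactly, and because the penalty $\sqrt{\log(2t^2/\delta')/(2t)}$ is eventually decreasing in $t$, the sequence $\{\alpha_t\}$ is nondecreasing, giving $\sum_{s \le \bar{T}} \alpha_s \le \bar{T}\alpha_{\bar{T}}$. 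Substituting yields $\sum_{s \le \bar{T}} W_s \le \bar{T}\alpha_{\bar{T}} + \bar{T}(\alpha - \alpha_{\bar{T}}) = \alpha\bar{T}$, and the pointwise domination transfers this cap to \SCOUT's empirical error for every $\bar{T}$ simultaneously.

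The crux is the pointwise coupling via \Cref{lem:Z_t_pessimistic,lem:prediction_imitation}, which converts a safety bound for an oracle (easy because $\{W_t\}$ is i.i.d.-structured) into one for \SCOUT itself; the rest is bookkeeping. The main technical subtlety I expect is the small-$t$ regime, where either $\alpha_t = 0$ (so $\tau^\star_t = +\infty$, the baseline tests every round, and by pessimism \SCOUT does too, contributing zero errors) or $\bar{T} \le 2$ (where \SCOUT tests unconditionally by \Cref{alg:alg1}); in both cases the safety bound holds trivially, so the Hoeffding analysis above, applied on the nontrivial tail, suffices.
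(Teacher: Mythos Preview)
Your proposal is correct and follows essentially the same route as the paper's proof: couple \SCOUT's error indicator to that of the oracle thresholding at $\tau^\star(\theta^\star,P,\alpha_t)$ via \Cref{lem:Z_t_pessimistic,lem:prediction_imitation}, then apply Hoeffding to the independent oracle indicators with per-$\bar T$ budget $\delta'/(2\bar T^2)$ and union bound. Your treatment of the monotonicity step (using that $\{\alpha_t\}$ is nondecreasing so that $\sum_{s\le\bar T}\alpha_s\le\bar T\alpha_{\bar T}$) is in fact more explicit than the paper's, which states the needed direction somewhat loosely; your handling of the small-$t$ regime (where $\alpha_t=0$ forces universal testing) is likewise a useful clarification.
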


\begin{figure*}[t!]
    \centering
    \begin{subfigure}{\linewidth}
        \centering
        \includegraphics[width=\linewidth]{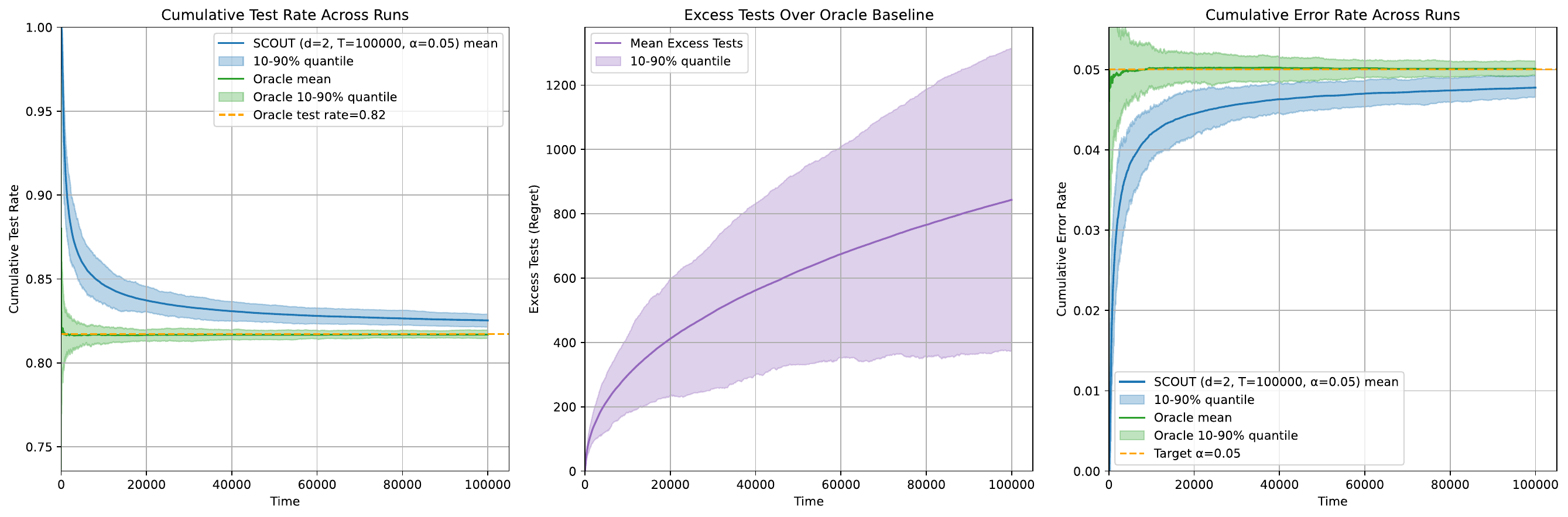}
        \label{fig:subfig1}
    \end{subfigure}
    \hfill
    \begin{subfigure}{\linewidth}
        \centering
        \includegraphics[width=\linewidth]{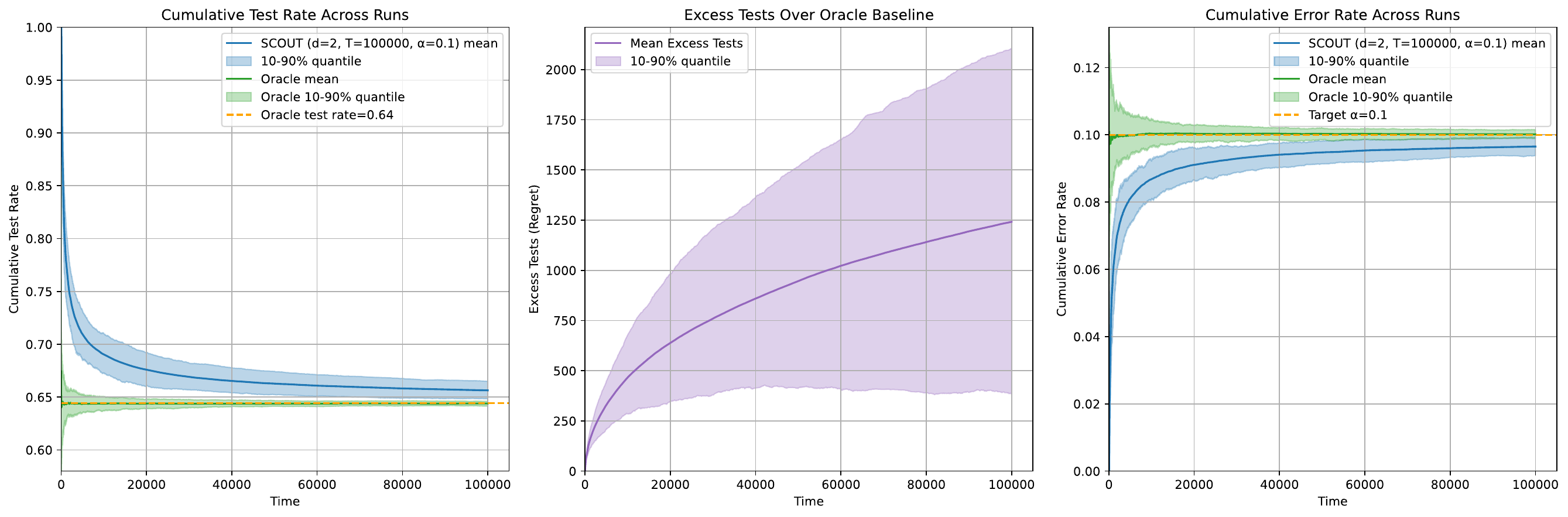}
        \label{fig:subfig2}
    \end{subfigure}
    \hfill
    \begin{subfigure}{\linewidth}
    \centering
    \includegraphics[width=\linewidth]{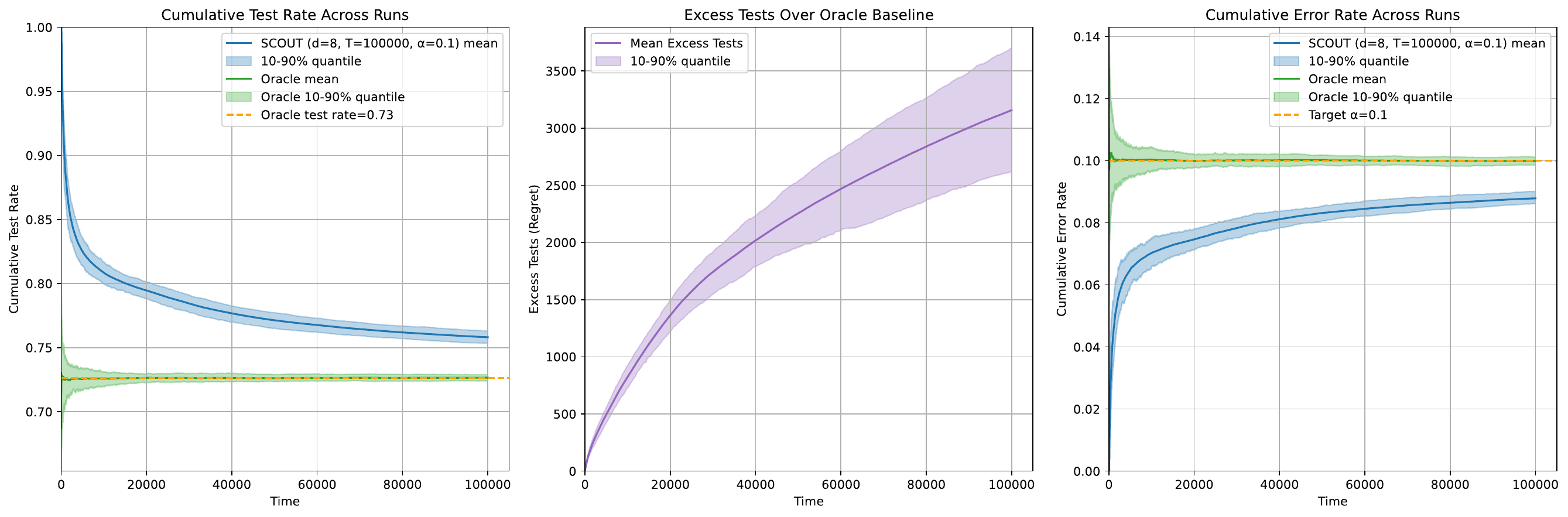}
    \label{fig:1subfig3}
    \end{subfigure}
        \vspace{-1.2cm}
    \caption{Simulation results. 
    Rows 1 and 2 correspond to $d=2$, where the first shows $\alpha=0.05$, and the second $\alpha=0.1$.
    Row 3 shows $d=8,\alpha=0.1$. $x$-axis corresponds to time $t$.
    Left plots show the cumulative test rate (10-90\% quantiles shaded), where blue shows the performance of \SCOUT, orange shows the oracle test rate at $p\opt$, and green shows the empirical test rate under $\tau\opt$.
    The middle plots show the excess number of tests, demonstrating the sublinear regret of \SCOUT.
    The right plots show the misclassification rate of \SCOUT.
    While the optimal baseline policy fluctuates around the desired threshold $\alpha$, often exceeding it, \SCOUT starts far below (very safe) then gradually learns to be more aggressive, approaching misclassification rate $\alpha$ but never exceeding it.}
    \label{fig:main}
\end{figure*}

\subsection{Regret Analysis}

To derive a regret bound, we begin by proving a bound on the instantaneous regret during rounds $t>T_0$ (\Cref{lem:regret_bound}, proof in \Cref{sec:regret_analysis}).
Summing this lemma over $t$ yields the following Theorem, where we set $\delta' = \delta/7$.
\begin{restatable}{thm}{regretUpperBound}
\label{thm:regret_upper_bound}
\SCOUT satisfies $(\alpha,\delta)$-safety and has regret bounded by
    \begin{equation*}
        T_0 + \tilde{C}\frac{M}{m}\sqrt{ \frac{dT\log\left(T/\delta\right)}{p\opt \lambda_0}},
    \end{equation*}
    for an absolute constant $\tilde{C}>0$, which is made explicit in the proof (\Cref{sec:regret_analysis}).
\end{restatable}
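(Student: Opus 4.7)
The plan is to decompose the theorem into its two claims---$(\alpha,\delta)$-safety and the $\widetilde O(\sqrt T)$ safe regret bound---and handle each by combining the high-probability good event $G$ with a per-round control argument.

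For safety, I would directly invoke \Cref{lem:policy_is_feasible}, which guarantees $(\alpha,\delta')$-safety conditional on $G$. Since $\P(G^c)\le 6\delta'$ by \Cref{lem:total_good_event}, a union bound yields unconditional failure probability at most $7\delta'$, so the choice $\delta' = \delta/7$ delivers $(\alpha,\delta)$-safety.

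For the regret bound, the plan is to split the sum around the burn-in index $T_0$:
\[
    \texttt{Regret}(T) \;=\; \sum_{t\le T_0}\E[Z_t - p\opt] \;+\; \sum_{t = T_0+1}^T \E[Z_t - p\opt] \;\le\; T_0 + \sum_{t=T_0+1}^T\E[Z_t - p\opt],
\]
and then control the tail sum on $G$ via a per-round regret lemma (\Cref{lem:regret_bound}) which should give an instantaneous bound of roughly
\[
    \E[Z_t - p\opt \mid G] \;\preccurlyeq\; \frac{M}{m}\cdot\frac{B_t}{\sqrt{\lambda_{\min}^t}} \;+\; \text{lower order in }\zeta_t,\eps_Q.
\]
Substituting the lower bound $\lambda_{\min}^t \geq p\opt t\lambda_0/12$ guaranteed by $G_\lambda$, and the slowly-growing confidence radius $B_t \preccurlyeq \sqrt{d\log(T/\delta')}$ from \eqref{eq:conf_ellipse_joint}, the tail telescopes via $\sum_{t=T_0+1}^T t^{-1/2}\preccurlyeq \sqrt T$, yielding the stated rate $\frac{M}{m}\sqrt{\frac{dT\log(T/\delta)}{p\opt\lambda_0}}$. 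The $\eps_Q(t)=1/t^2$ contribution sums to a constant, and the $\zeta_t\preccurlyeq 1/\sqrt t$ contribution integrates into the main rate with absorbed constants; both go into $\tilde C$. The off-event contribution is bounded trivially by $T\cdot\P(G^c)\le 6T\delta'$, which with $\delta'=\delta/7$ and a suitable choice of constants can be absorbed into the main bound (or one may choose $\delta'$ small enough to render it negligible in $T$).

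The main obstacle is the per-round regret lemma itself. On $G$ the testing condition of \SCOUT is pessimistic ($\tau_t \ge \tau\opt$), so the excess testing probability at round $t$ is the mass that $P$ places on the shell $\{x : \tau\opt \le |\langle X_t,\theta_t^L\rangle| \le \tau_t\}$ along with the region where $\theta_t^L$ disagrees in sign with $\theta\opt$. Converting this into the claimed rate requires a two-sided Lipschitz inversion: \Cref{lem:tauopt_stability_alpha} together with the density lower bound $m$ converts the $\alpha$-shrinkage and $\zeta_t,B_t/\sqrt{\lambda_{\min}^t},\eps_Q$ buffers in \eqref{eq:tau_t_defn} into a $\tau$-perturbation, and the density upper bound $M$ converts that $\tau$-perturbation into an excess probability. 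This is where the $M/m$ factor comes from. The main bookkeeping challenge is to track all four error sources---the MLE deviation (via $B_t/\sqrt{\lambda_{\min}^t}$), the quantization error $\eps_Q$, the distributional estimation error $\zeta_t$, and the $\alpha_t$ shrinkage---simultaneously, while verifying that pessimism is preserved at every step so the good event machinery of \Cref{sec:good_event} remains valid throughout.
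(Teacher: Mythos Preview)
Your proposal is correct and follows essentially the same approach as the paper: safety via \Cref{lem:policy_is_feasible} combined with the good-event bound from \Cref{lem:total_good_event} and the choice $\delta'=\delta/7$, and regret via splitting at $T_0$, invoking the per-round \Cref{lem:regret_bound}, substituting the $G_\lambda$ eigenvalue lower bound $\lambda_{\min}^t\ge p\opt t\lambda_0/12$, and summing the resulting $t^{-1/2}$ series. You are in fact slightly more explicit than the paper in flagging the off-event contribution $T\cdot\P(G^c)$ to the unconditional expectation defining safe regret; the paper's proof simply conditions on $G$ throughout and does not address this term.
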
 
Note that the probability parameter $\delta$ can scale exponentially in $T$ without changing the regret. 
While at first our algorithm may appear to beat the linear dimension dependence expected in linear bandits, this missing factor is hidden in $\lambda_0$.
In \Cref{sec:regret_analysis}, we can apply a lower bound for $\lambda_0$ (see \Cref{lem:min_eigenvalue}) and recover the $\tilde{\CO}(d\sqrt{T})$ regret bound.
For a detailed synopsis of our work and potential future extensions, see \Cref{sec:disc}.

\section{NUMERICAL RESULTS}\label{sec:experiments}
We corroborate our theoretical guarantees with numerical simulations, showing that \SCOUT is able to efficiently compute the testing rule and converge to the optimal error rate.
The code for reproducing our experiments is available online: \url{https://github.com/TavorB/SCOUT}.
We generate simulations varying the dimensionality and the target error rate $\alpha$, highlighting the rapid convergence of our method when $p\opt$ is large.
We discuss several algorithmic modifications in \Cref{app:mod_from_written}, including batched parameter updates and omission of the projection step, which allow the algorithm to run efficiently while retaining the core principles of \SCOUT.
The empirical results, which demonstrate sublinear regret and adherence to the safety constraint across all instances, validate that these practical simplifications do not compromise the algorithm's performance in our simulated environments.

% \clearpage
\section{DISCUSSION}\label{sec:disc}

In this work we introduced \SCOUT, the first algorithm that provably balances \textbf{no-regret learning} with a \textbf{high-probability safety guarantee} on the empirical misclassification rate in logistic bandits.  
Our analysis shows that a simple, efficiently-computable testing rule suffices to achieve the order optimal $\widetilde O\!\bigl(\sqrt{dT/\lambda_0}\bigr)$ excess-test rate.
Empirical results confirm that these bounds translate to practice on moderately large horizons.

In medical triage---our motivating use-case---\SCOUT can be viewed as a “test-or-treat’’ policy that automatically calibrates how aggressively to screen as new evidence accrues.  
Because the policy is pessimistic by design, it never tests less than an oracle baseline that knows both the patient distribution and the ground-truth regression coefficients.  
This property is attractive in any high-stakes domain where misclassifications are costly (e.g.\ credit risk, fraud detection, or industrial quality control).

There are many interesting directions of future work.
One simple extension is handling unequal Type-I and Type-II error control. The threshold-selection step can be split to handle false positives and false negatives separately by using two one-sided versions of $\perr$.
Additionally, we can use improved confidence bounds from \citet{lee2024unified} in \Cref{lem:faury_concentration} to remove the $\kappa$ factor in $B_t$ and generalize to larger context and $\Theta$ sets.
Less straightforwardly, we have the setting where the optimal baseline does not need to test, i.e., $p\opt = 0$.  If the optimal policy never tests, can one detect \emph{fast enough} that screening is unnecessary while still retaining the high-probability safety constraint?
Going beyond stochastic contexts, we plan to explore whether the ideas behind \SCOUT can be combined with online calibration tools to handle non-stationary or even adversarial $X_t$.  \looseness=-1

\clearpage
% \newpage
\balance

\bibliography{refs}
\bibliographystyle{plainnat}

\clearpage
\appendix
\thispagestyle{empty}

% Supplementary material: To improve readability, you must use a single-column format for the supplementary material.
\onecolumn

\hsize\textwidth
\linewidth\hsize
{\centering{\Large\bfseries Appendix \par}}
\vskip 0.2in

%\tableofcontents
\addcontentsline{toc}{section}{Supplementary Material}

%\newpage

\appendix

\section{Baseline policy}\label{app:baseline_policy}
Here we provide some discussion and proofs regarding the optimal baseline we compare to.

\subsection{Proof of ~\Cref{prop:optimal_policy_on_expectation}}

\begin{proof}

When the value of the parameter $\theta\opt$ and the collection of the contexts $\{X_t\}_{t=1}^{T}$ are known, we can equivalently write the problem as follows.
Let $p_t = \mu(X_t^\top\theta\opt)$, the labels $Y_t\sim Ber(p_t)$ independently across $t$.

To compute the expected error, that is $\E(E_t) := \E(\indctr{\hat{Y}_t \neq Y_t})$, we only need to examine the case where we do not test.
When we do test, we observe the true label and incur zero error.
For $Z_t = 0$ then, the expected error is
\begin{enumerate}
    \item If $\hat{Y}_t = 1$ then $\E(\indctr{\hat{Y}_t \neq Y_t} \mid \hat{Y}_t = 1) = 1 - p_t$.
    \item Else if $\hat{Y}_t = 0$ then $\E(\indctr{\hat{Y}_t \neq Y_t} \mid \hat{Y}_t = 0) = p_t$.
\end{enumerate}

The optimal policy then is to output the prediction with the smallest error.
The expected error then is equal to 
$$\E(\indctr{\hat{Y}_t \neq Y_t}) := \min\{ 1 - p_t , p_t \}.$$

We denote $\P(Z_t = 0) = \eta_t$.
The optimal policy choice is reduced to the following optimization problem.

\begin{equation}
\min_{\{\eta_t\}} \sum_{t=1}^T 1-\eta_t \quad \text{s.t.} \quad \frac{1}{T}\sum_{t=1}^T \min\{1-p_t, p_t\} \eta_t \le \alpha, \quad 0\leq\eta_t\leq 1.
\end{equation}

Or equivalently can be written as:
\begin{equation}
\max_{\{\eta_t\}} \sum_{t=1}^T \eta_t \quad \text{s.t.} \quad \frac{1}{T}\sum_{t=1}^T \min\{1-p_t, p_t\} \eta_t \le \alpha, \quad 0\leq\eta_t\leq 1.
\end{equation}

The solution of this Linear Program is the solution of the \textit{Fractional Knapsack} problem with budget $\alpha$.
This problem can be optimally solved with a greedy strategy, sorting the coefficients $\min\{1-p_t, p_t\}$ in non-increasing order and assigning $\eta = 1$ to the lowest "error" contexts until we do not violate the budget constraint $\alpha$.
This strategy is clearly a threshold strategy that depends on $a$.
    
\end{proof}

\subsubsection{Conversion to $(\alpha,\delta)$ safety}

It is worth mentioning that solving the problem by satisfying the constraint in expectation does not provide any guarantees when we require the constraint to hold with high probability. 
Even if we apply the Markov's inequality to convert the constraint in expectation to a high probability one, we derive a very loose bound (need to target error rate $\alpha^2$ to obtain a high probability bound of $\alpha$).

$$\P\left( \frac{1}{T}\sum_{t=1}^T \mathds{1}\{\hat{Y}_t \neq Y_t\} \ge \alpha \right) \le \frac{\E\left[\frac{1}{T}\sum_{t=1}^T \mathds{1}\{\hat{Y}_t \neq Y_t\} \right]}{\alpha} \le 1.$$

However, we show that we are still competitive with respect to this fixed baseline policy.

\subsection{Proof of \Cref{lem:min_eigenvalue}}\label{app:min_eigenval_discussion}

We outline the proof as follows; as $\norm{\theta\opt} = 1$, a ball of radius $\tau\opt$ is a subset of the contexts tested by the baseline policy.
The contexts drawn from this ball form a positive definite covariance matrix, which implies that the minimum eigenvalue of the overall covariance matrix is positive.

\lemMinEigenvalue*

\begin{proof}
    By Cauchy-Schwarz, 
    $\abs{X^\top\theta\opt} \le \norm{X},$ as $\|\theta\opt\|=1$.
    As a result all contexts $X \in \CB(0,\tau\opt)$ satisfy $\abs{X^\top\theta\opt} \le \tau\opt$ and thus are tested by the baseline policy.
    We can split the set of contexts to be tested by the baseline policy, $\CT = \{X \in \CB(0,1): \abs{X^\top\theta\opt} \le \tau\opt\}$ into $\CB(0,\tau\opt) \cup \left( \CT \setminus \CB(0,\tau\opt) \right).$

    We begin by showing that the covariance matrix of the contexts tested by the baseline policy under a \textit{uniform} context distribution has a positive minimum eigenvalue.
    Then, leveraging the assumption that $P$ is lower bounded (\Cref{assum:density}), we prove our desired claim.

    Let $V_d(1)$ be the volume of the \textit{d-dimensional} unit ball.
    We begin by showing that the minimum eigenvalue of the uniform distribution on the \textit{d-dimensional} unit ball is positive using standard arguments as in \citet{vershynin2018high} (Version 2, Section 3.3.3).

\begin{lemma}\label{lem:min_eigenv_uniform_distribution}
    The minimum eigenvalue of  $X$ drawn uniformly from the \textit{d-dimensional} ball satisfies:
    \begin{equation*}
        \lambda_{\min}\left(\E_{\Bx \sim \textnormal{Unif}(\CB(0,1))} \left[\Bx \Bx^\top\right]\right) = \frac{1}{d+2}.
    \end{equation*}
\end{lemma}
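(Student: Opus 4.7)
The plan is to exploit the rotational symmetry of the uniform distribution on the unit ball to conclude that $\mathbb{E}[XX^\top]$ must be a scalar multiple of the identity, and then pin down the scalar by a one-dimensional integral.

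First I would argue that for any orthogonal matrix $U \in \mathbb{R}^{d \times d}$, the random variable $UX$ has the same distribution as $X$ when $X \sim \textnormal{Unif}(\mathcal{B}(0,1))$, since the uniform measure on the ball is invariant under orthogonal transformations. Hence $U \,\mathbb{E}[XX^\top]\, U^\top = \mathbb{E}[(UX)(UX)^\top] = \mathbb{E}[XX^\top]$ for every orthogonal $U$. Any matrix commuting with every orthogonal transformation in this way is a scalar multiple of the identity, so $\mathbb{E}[XX^\top] = c \,I_d$ for some $c > 0$, and in particular every eigenvalue equals $c$.

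To identify $c$, I would take the trace: $cd = \tr(\mathbb{E}[XX^\top]) = \mathbb{E}[\tr(XX^\top)] = \mathbb{E}[\|X\|_2^2]$. The remaining step is the radial computation. Under the uniform distribution on $\mathcal{B}(0,1)$, the radial variable $R = \|X\|_2$ has density $f_R(r) = d\, r^{d-1}$ on $[0,1]$ (since $\P(R \le r) = r^d$), giving
\begin{equation*}
    \mathbb{E}[\|X\|_2^2] = \int_0^1 r^2 \cdot d\,r^{d-1}\,dr = d \int_0^1 r^{d+1}\,dr = \frac{d}{d+2}.
\end{equation*}
Dividing by $d$ yields $c = 1/(d+2)$, and since all eigenvalues coincide, $\lambda_{\min}(\mathbb{E}[XX^\top]) = 1/(d+2)$.

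There is no real obstacle here: symmetry does all the structural work, and the only computation is the elementary moment $\mathbb{E}[R^2]$ which one can equivalently derive by integrating in Cartesian coordinates and noting by symmetry that $\mathbb{E}[X_i^2]$ is the same for each coordinate $i$, so $\mathbb{E}[X_1^2] = \mathbb{E}[\|X\|_2^2]/d$. The only care needed is to state the symmetry argument cleanly (i.e., that a matrix invariant under conjugation by all orthogonal matrices is a scalar multiple of the identity), which is a standard fact from the representation theory of $O(d)$ and can alternatively be verified by picking $U$ to be a coordinate permutation (to equalize diagonal entries) and reflections (to zero out off-diagonal entries).
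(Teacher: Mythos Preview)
Your proof is correct and follows essentially the same approach as the paper: both use rotational symmetry to reduce $\mathbb{E}[XX^\top]$ to a scalar multiple of the identity and then identify the scalar via $\mathbb{E}[\|X\|_2^2]/d$. Your presentation is slightly cleaner in invoking invariance under the full orthogonal group and using the radial density $f_R(r)=dr^{d-1}$ directly, whereas the paper argues off-diagonals vanish via a sign-flip change of variables and then integrates in spherical coordinates, but the substance is the same.
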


\begin{proof}
The quantity $\E [\Bx \Bx^\top]$ is the covariance matrix of the uniform over the unit \textit{d-dimensional} ball.
For $\Bx \sim \text{Unif}(\CB(0,1))$, $\E [\Bx \Bx^\top]$ can be written as $a\BI_d$ due to spherical symmetry. 

By a change of variables, we can obtain that $\E[\Bx_i \Bx_j] = -\E[\Bx_i \Bx_j]$ for $i\neq j$, implying that $\E[\Bx_i \Bx_j] = 0$.  To compute the diagonal entries:

\begin{align*}
    \E [x_i^2]  &= \frac{1}{d}\E[\Bx^2] \\
    &= \frac{1}{d} \int_{\norm{\Bx}_2^2 \leq 1} \frac{\Bx^2}{V_d(1)} d\Bx \\
    &= \frac{1}{d V_d(1)} \int_{\mathcal{S}^{d-1}}\int_{0 \leq r \leq 1} r^2 r^{d-1} dr d\sigma(\omega) \\
    &= \frac{S_d(1)}{V_d(1)} \frac{1}{d(d+2)}\\
    &= \frac{1}{d+2}
\end{align*}
where $S_d(1)$ is the surface of the unit sphere and $d\sigma$ any surface measure.
In the last line, we leverage the volume to surface area ratio of $\CB(0,1)$:
$$\frac{V_d(1)}{S_d(1)} = \frac{\frac{\pi^{d/2}}{\Gamma(d/2+1)}}{\frac{d\pi^{d/2}}{\Gamma(d/2+1)}} = \frac{1}{d}.$$
Thus, all eigenvalues of this covariance matrix are equal to $1/(d+2)$.
\end{proof}

    Now, as our density is smooth, we can use that for all $\Bx,\upsilon \in \CB(0,1)$ it holds $(\Bx^\top\upsilon)^2 p(\Bx) \geq (\Bx^\top\upsilon)^2 m$ and so:
    
    \begin{align*}
    \lambda_0 &=
    \lambda_{\min}\left(\E_P \left[X X^\top\  \middle \vert \   |\langle X, \theta\opt\rangle | \leq \tau\opt \right]\right)\\
    &=
        \min_{\|\upsilon\|=1} \upsilon^\top \E_P \left[X X^\top\  \middle \vert \  |\langle X, \theta\opt\rangle | \leq \tau\opt \right]\upsilon\\
        &= \frac{1}{p\opt} \min_{\|\upsilon\|=1} \int_{\mid X^\top\theta\opt\mid\leq \tau\opt} (\Bx^\top\upsilon)^2 p(\Bx) d\Bx \\
        &\overset{(a)}{\ge} \frac{1}{p\opt} \min_{\|\upsilon\|=1} \int_{\CB(0,\tau\opt)} (\Bx^\top\upsilon)^2 m d\Bx \\
        &\overset{(b)}{=} \frac{m (\tau\opt)^{d+2} V_d(1)}{p\opt} \min_{\|\upsilon\|=1} \int_{\CB(0,1)} (\Bu^\top\upsilon)^2  \frac{1}{V_d(1)} d\Bu \\
        &\overset{(c)}{=} \frac{m (\tau\opt)^{d+2} V_d(1)}{p\opt} \frac{S_d(1)}{V_d(1)} \frac{1}{d(d+2)} \\
        &\overset{(d)}{=} \frac{m (\tau\opt)^{d+2} V_d(1)}{p\opt (d+2)}:= \lambda_0^{\min} (\tau\opt,d).
    \end{align*}
    (a) utilizes the fact that $p(x) \ge m$ from \Cref{assum:density}, $\CB(0,\tau\opt) \subseteq \left\{\mid X^\top\theta\opt\mid\leq \tau\opt\right\}$, and  $(\Bx^\top v)^2 \ge 0$ for all $\Bx, v$.
    (b) comes from a change of variables, with $\Bx \mapsto \tau\opt \Bu$, with $d\Bx = \tau\opt d \Bu$.
    (c) utilizes \Cref{lem:min_eigenv_uniform_distribution}, and
    (d) simplifies the volume to surface area ratio.

\end{proof}

\section{Stability of error estimates}\label{app:stability}

To analyze \SCOUT, we first study the stability of $\perr$.
Since the learner does not start with knowledge of $P$ or $\theta\opt$, and by extension $\tau\opt$ we must show that, as time progresses \SCOUT's estimates of the error probabilities are not too far off.

Before analyzing the stability of the $\tau(\cdot)$ function, we present an auxiliary lemma that will be employed throughout the subsequent analysis.

\begin{lemma}\label{lem:perrShift}
For any $x>0$ and any $\theta,P$, it holds that  
\begin{equation*}
    \min\{ \tau \in [0,1]: \perr(\theta,P,\tau - x) \le \alpha\} \le \min\{ \tau \in [0,1]: \perr(\theta,P,\tau) \le \alpha\} + x.
\end{equation*}

\begin{proof}
Let 
\[
g(\tau) := \perr(\theta, P, \tau).
\]
It holds that $g$ is non-increasing on $\R$, $(\perr(\theta, P, \tau) = 1/2,\text{ for } \tau<0, \perr(\theta, P, \tau) = 0, \text{ for } \tau>1)$.
 Define
\[
\tilde{\tau} := \min\{\tau \in [0,1] : g(\tau) \le \alpha\}.
\]
We want to prove
\[
\min\{\tau \in [0,1] : g(\tau - x) \le \alpha\} 
\;\le\; \tilde{\tau} + x.
\]

Let $s := \tilde{\tau} + x$. We consider the following two cases.

First case; $s \le 1$. 
Then $s - x = \tilde{\tau}$. By definition of $\tilde{\tau}$ we have $g(\tilde{\tau}) \le \alpha$.  
Since $g$ is non-increasing, it follows that
\[
g(s - x) = g(\tilde{\tau}) \le \alpha,
\]
so $s$ belongs to the set $\{\tau \in [0,1] : g(\tau - x) \le \alpha\}$.  
Hence
\[
\min\{\tau \in [0,1] : g(\tau - x) \le \alpha\} \le s = \tilde{\tau} + x.
\]

Second case; $s > 1$.
In this case,
\[
\min\{\tau \in [0,1] : g(\tau - x) \le \alpha\} \le 1 < s = \tilde{\tau} + x.
\]

In either case, we conclude that
\[
\min\{\tau \in [0,1] : \perr(\theta, P, \tau - x) \le \alpha\}
\;\le\; \min\{\tau \in [0,1] : \perr(\theta, P, \tau) \le \alpha\} + x.
\]
\end{proof}
    
\end{lemma}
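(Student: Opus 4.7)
The plan is to exploit the fact that $\perr(\theta,P,\tau)$ is non-increasing in $\tau$, so the claim is essentially a statement about translating a sublevel set of a monotone function. Concretely, I would set $g(\tau) := \perr(\theta,P,\tau)$ and observe from the definition in~\eqref{eq:p_err_defn} that enlarging $\tau$ only shrinks the region $\{x : |x^\top\theta|>\tau\}$ over which misclassification is incurred, so $g$ is non-increasing on $\R$; moreover $g$ is essentially constant on $\tau < 0$ and vanishes once $\tau$ exceeds the maximum attainable $|x^\top\theta|$. This monotonicity is the entire engine of the proof.

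Next I would let $\tau^\star$ denote the right-hand side minimiser, $\tau^\star := \min\{\tau \in [0,1] : g(\tau) \le \alpha\}$, and proceed by exhibiting a feasible candidate for the left-hand side. Set $s := \tau^\star + x$. If $s \le 1$, then $g(s - x) = g(\tau^\star) \le \alpha$ by the very definition of $\tau^\star$, so $s$ lies in the feasible set of the LHS minimisation, and thus the LHS minimum is at most $s = \tau^\star + x$. If instead $s > 1$, then since the LHS minimisation is constrained to $[0,1]$, its value is trivially at most $1 < \tau^\star + x$. Combining the two cases yields the claimed inequality.

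I expect no genuinely hard step here; the argument is just bookkeeping around a translation of a sublevel set of a monotone function. The only mild subtlety, and the main ``obstacle'' if one wants to be fully rigorous, is the boundary case $s > 1$, which forces the short case split above so that the LHS minimum remains bounded by $1$ regardless of where $\tau^\star + x$ lies. No probabilistic concentration, calculus, or analytic input beyond monotonicity of $g$ is needed, which is why this lemma serves cleanly as a stepping stone for the later stability results that do the heavier lifting.
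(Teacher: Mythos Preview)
Your proposal is correct and matches the paper's own proof almost verbatim: define $g(\tau)=\perr(\theta,P,\tau)$, use its monotonicity, set $s=\tau^\star+x$, and split into the cases $s\le 1$ and $s>1$. There is no meaningful difference in approach or level of detail.
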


\subsection{Smoothness of $\tau\opt$ with respect to $\hat{P}_t$}

Since $P$ is unknown, \SCOUT estimates it via its empirical counterpart $\hat{P}_t$. In the following Lemma we bound the error between $\perr(\theta,\hat{P}_t,\tau)$ and $\perr(\theta,P,\tau)$.

\begin{lemma} \label{lem:perr_accurate}
    Let $\hat{P}_t$ be the empirical distribution of constructed from $\lceil t/2 \rceil$ i.i.d. samples from $P$.
    Then, for any fixed $\theta$ and $\tau$, with probability at least $1-\delta'$ over the randomness in $\hat{P}_t$:
    \begin{equation*}
        \left| \perr(\theta,\hat{P}_t,\tau)-\perr(\theta,P,\tau)\right| \le \sqrt{\frac{\log\left(\frac{\pi^2t^2}{3\delta'}\right)}{4t}}
    \end{equation*}
\end{lemma}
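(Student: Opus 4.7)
The plan is to recognize $\perr(\theta,\hat P_t,\tau)-\perr(\theta,P,\tau)$ as the deviation of a bounded i.i.d.\ sample mean from its expectation and apply Hoeffding's inequality. Fix $\theta,\tau$ and define the scalar function $f_{\theta,\tau}(x) := (1+\exp(|x^\top\theta|))^{-1}\mathds{1}\{|x^\top\theta|>\tau\}$. Directly from \Cref{eq:p_err_defn}, $\perr(\theta,P,\tau) = \E_{X\sim P}[f_{\theta,\tau}(X)]$ while $\perr(\theta,\hat P_t,\tau)$ is its empirical average over the $N_P^t = \lceil t/2\rceil$ odd-indexed contexts in $\CS_P^t$. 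Because membership in $\CS_P^t$ is determined solely by the parity of the round (and not by any adaptive decision of the algorithm), these $N_P^t$ contexts are genuine i.i.d.\ draws from $P$, so the empirical mean is a true sum of bounded i.i.d.\ random variables and no martingale machinery is required.

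Next I would bound the range of $f_{\theta,\tau}$. Whenever the indicator fires, $|x^\top\theta|>\tau\ge 0$ and so $(1+\exp(|x^\top\theta|))^{-1}<1/2$; when the indicator is zero, $f_{\theta,\tau}$ vanishes. Thus $f_{\theta,\tau}\in[0,1/2]$, and Hoeffding's inequality for a sum of $n$ i.i.d.\ copies of a $[0,1/2]$-valued random variable yields
\[
\P\bigl(|\perr(\theta,\hat P_t,\tau)-\perr(\theta,P,\tau)|\ge\epsilon\bigr) \;\le\; 2\exp\bigl(-8 N_P^t \epsilon^2\bigr) \;\le\; 2\exp(-4t\epsilon^2),
\]
where the last inequality uses $N_P^t\ge t/2$.

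Finally I would invert this tail bound. Choosing $\epsilon=\sqrt{\log(\pi^2 t^2/(3\delta'))/(4t)}$ makes the right-hand side exactly $6\delta'/(\pi^2 t^2)\le\delta'$, matching the stated confidence level. The slightly ornamental factor $\pi^2 t^2/3$ inside the log is deliberate: since $\sum_{t\ge 1}6/(\pi^2 t^2)=1$, the same bound can later be union-bounded over all $t$ (and, in combination with the discretization $\CQ_\theta\times\CQ_\tau$, over all quantized $(\theta,\tau)$ pairs) while still failing with total probability at most $\delta'$. This is precisely the mechanism by which the time-uniform good event $G_{\perr}$ in \Cref{def:good_event} is assembled from this lemma. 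There is no serious obstacle in the argument; the only subtle point is to confirm the i.i.d.\ structure of $\CS_P^t$, which is immediate because the parity-based sample-splitting rule makes $\hat P_t$ independent of the algorithm's adaptive history and of the fixed pair $(\theta,\tau)$.
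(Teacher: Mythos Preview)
Your proposal is correct and matches the paper's proof essentially step for step: define the bounded summands $f_{\theta,\tau}(x)\in[0,1/2]$, apply Hoeffding's inequality to the $\lceil t/2\rceil\ge t/2$ i.i.d.\ odd-round contexts, and invert the tail with the Basel-series budget $6\delta'/(\pi^2 t^2)$. The only cosmetic difference is that the paper folds the union bound over $t$ into this lemma's proof, whereas you (correctly) observe that the $\pi^2 t^2/3$ factor is planted so the time-uniform statement can be assembled afterward; either packaging yields the same bound.
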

The proof of this result uses standard concentration bounds (Hoeffding's inequality \citet{wainwright2019high}) using the fact that for any fixed $\theta$ and $\tau$, \eqref{eq:p_err_defn} is the expectation of a $[0,1/2]$ bounded random variable.

\begin{proof}[Proof of \Cref{lem:perr_accurate}]
    First, we collect a context as a sample at every odd round, so at round $t$ it holds that $\left|\mathcal{S}_{P}^{t}\right| = \lceil t/2 \rceil \ge t/2$.
    Indexing these samples as $x_i$, we can write the empirical error $\perr(\theta,\hat{P}_t,\tau)$ as follows:
    \begin{align*}
        \perr(\theta,\hat{P}_T,\tau) - \perr(\theta,P,\tau) &= \int (1+\exp(|x^\top \theta|))^{-1} \mathds{1}\left\{|x^\top \theta| > \tau\right\} \hat{P}_t(dx) -\perr(\theta,P,\tau)\\
        &=  \frac{1}{\lceil t/2 \rceil}\sum_{i=1}^{\lceil t/2 \rceil} \left(\xi_i - \perr(\theta,P,\tau)\right), \numberthis
    \end{align*}
    where we define $\xi_i$ as the $i$-th term in this sum:
    \begin{equation*}
        \xi_i = (1+\exp(|x_i^\top \theta|))^{-1} \mathds{1}\left\{|x_i^\top \theta| > \tau\right\}.
    \end{equation*}
    As $0 \leq (1+\exp(|x_i^\top \theta|))^{-1} \leq \frac{1}{2}$, the summands $\xi_i$ are i.i.d. [0,1/2] random variables with mean $\perr(\theta,P,\tau)$, so we can apply Hoeffding's inequality \citet{wainwright2019high}:
    \begin{align*}
        \P\left(  \left|\frac{1}{\lceil t/2 \rceil}\sum_{i=1}^{\lceil t/2 \rceil} \left(\xi_i - \perr(\theta,P,\tau)\right) \right| \geq \sqrt{\frac{\log(2/\delta'')}{4t}} \right) \leq \delta''.
    \end{align*}
    By taking the union bound over all rounds $t \geq 1$ and setting $\delta'' := \frac{6\delta'}{\pi^2 t^2}$ we derive:
    \begin{align*}
        \P\left( \left|\frac{1}{\lceil t/2 \rceil}\sum_{i=1}^{\lceil t/2 \rceil} \left(\xi_i - \perr(\theta,P,\tau)\right) \right| \leq \sqrt{\frac{\log \left(\frac{\pi^2t^2}{3\delta'}\right)}{4t}} ,\forall t: t\geq 1\right) \geq 1 - \delta'.
    \end{align*}
    Here, we apply the well-known result for the Basel series: $\sum_{t=1}^{\infty}\frac{1}{t^2} = \frac{\pi^2}{6}$.
    
\end{proof}

Since we require this bound to hold over all $\theta \in \Theta$ and $\tau\in [0,1]$ and these sets are uncountable, we utilize an $\epsilon$-net analysis for both $\tau \in [0,1]$ and $\theta \in \Theta$. We detail this quantization analysis strategy in the following section.

\subsubsection{Quantization to enable union bounding} \label{sec:quantization}
We define quantized versions of $\tau$ and $\theta$, to bound the failure probability of our estimators over a countable quantized set.
We take progressively finer and finer quantizations, with our quantization accuracy scaling as $\eps_Q = t^{-2}$ ($t$ suppressed from notation).
We consider an $\eps_Q$ covering of the unit interval for $\tau$ as $\CQ_\tau:= \CN([0,1], \eps_Q)$, denoting the quantized $\tau$ value as $\tau_Q \in \CQ_\tau$ and an $\eps_Q$ cover of the $d$-dimensional unit sphere for $\theta$ as $\CQ_\theta:= \CN(\CS^{d-1},\eps_Q)$, denoting the quantized $\theta$ value as $\theta_Q \in \CQ_\theta$. We can bound the size of these covering sets as $|\CQ_\tau| \le \eps_Q^{-1}$ and $|\CQ_\theta| \le (3/\eps)^d$ \citet{vershynin2018high}.

We are now able to define the ``good'' event $G_{\perr}$ where our error probability estimates are uniformly bounded by $\zeta_t$ on our quantized sets as:
\begin{equation} \label{eq:gt_good_defn}
    G_{\perr} = \left\{\left| \perr(\theta_Q,\hat{P}_t,\tau_Q)-\perr(\theta_Q,P,\tau_Q)\right| \le \zeta_t \ :\ \forall t \in [T], \forall \theta_Q \in \CQ_\theta, \forall \tau_Q \in \CQ_\tau\right\}.
\end{equation}

The following lemma shows that $G_{\perr}$ happens with high probability.

\begin{lemma}
\label{lem:gt_goodevent}
    The good event $G_{\perr}$ satisfies $\P(G_{\perr}) \ge 1-\delta'$.
\end{lemma}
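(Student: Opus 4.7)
The plan is to reduce this to a straightforward application of Hoeffding's inequality combined with a union bound over the finite quantization sets $\CQ_\theta$ and $\CQ_\tau$ and over all rounds $t \in [T]$. Since the integrand in $\perr(\theta_Q, P, \tau_Q)$ (see \Cref{eq:p_err_defn}) is bounded in $[0, 1/2]$, and since $\hat{P}_t$ is the empirical distribution of $\lceil t/2 \rceil \ge t/2$ i.i.d.\ samples drawn from $P$, the quantity $\perr(\theta_Q, \hat{P}_t, \tau_Q)$ is precisely the empirical mean of $\lceil t/2 \rceil$ i.i.d.\ bounded random variables with expectation $\perr(\theta_Q, P, \tau_Q)$. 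This is exactly the structure exploited inside the proof of \Cref{lem:perr_accurate}.

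First, I would fix a triple $(t, \theta_Q, \tau_Q)$ and apply Hoeffding's inequality for $[0, 1/2]$-valued random variables to obtain, for any $c > 0$,
\begin{equation*}
    \P\Big(\big|\perr(\theta_Q, \hat{P}_t, \tau_Q) - \perr(\theta_Q, P, \tau_Q)\big| > c\Big) \;\le\; 2\exp(-4 t c^2).
\end{equation*}
Next I would union bound this tail over the quantized sets $\CQ_\theta$ and $\CQ_\tau$ and over the time horizon. Using the cardinality bounds $|\CQ_\tau| \le \eps_Q^{-1}$ and $|\CQ_\theta| \le (3/\eps_Q)^d$ discussed in \Cref{sec:quantization}, and distributing the failure budget across rounds using the Basel series (i.e.\ assigning failure probability $6\delta'/(\pi^2 t^2)$ to round $t$, as in the proof of \Cref{lem:perr_accurate}), I would solve for the deviation $c_t$ that makes the union-bounded probability at round $t$ equal to $6\delta'/(\pi^2 t^2)$. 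Inverting the Hoeffding bound yields
\begin{equation*}
    c_t \;=\; \sqrt{\frac{d\log(3/\eps_Q) + \log(1/\eps_Q) + \log(\pi^2 t^2/(3\delta'))}{4t}},
\end{equation*}
which (absorbing the constant $d\log 3$ and the constant in the logarithm) is bounded by the definition of $\zeta_t(\delta')$ in \Cref{eq:zeta_t}.

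Summing the per-round failure probabilities over $t \ge 1$ gives $\sum_{t \ge 1} 6\delta'/(\pi^2 t^2) = \delta'$, so with probability at least $1 - \delta'$ the inequality $|\perr(\theta_Q, \hat{P}_t, \tau_Q) - \perr(\theta_Q, P, \tau_Q)| \le \zeta_t$ holds simultaneously for every $t \in [T]$, every $\theta_Q \in \CQ_\theta$, and every $\tau_Q \in \CQ_\tau$. This is precisely the event $G_{\perr}$ as defined in \Cref{eq:gt_good_defn}.

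There isn't really a substantive obstacle here; the argument is a textbook $\eps$-net + Hoeffding + union bound calculation. The only bookkeeping worth being careful about is verifying that the constants baked into the definition of $\zeta_t$ in \Cref{eq:zeta_t} genuinely dominate the combination $d\log(3/\eps_Q) + \log(1/\eps_Q) + \log(\pi^2 t^2/(3\delta'))$ produced by the union bound; this is where the factor $(d+1)\log(1/\eps_Q)$ appears and absorbs both the $\theta$-net of size $(3/\eps_Q)^d$ and the one-dimensional $\tau$-net of size $\eps_Q^{-1}$. No other randomness enters the bound, since the bound is only over the $\lceil t/2 \rceil$ i.i.d.\ contexts used to form $\hat{P}_t$ (which are drawn independently of the labels used to estimate $\theta\opt$ by the sample-splitting scheme of \Cref{alg:alg1}).
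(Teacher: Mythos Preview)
Your proposal is correct and follows essentially the same route as the paper: Hoeffding's inequality on the $[0,1/2]$-bounded integrand, a union bound over the $\eps_Q$-nets $\CQ_\theta$ and $\CQ_\tau$ (with the same cardinality bounds $(3/\eps_Q)^d$ and $\eps_Q^{-1}$), and the Basel-series time union bound. The only cosmetic difference is that the paper first packages the per-$(\theta,\tau)$ anytime bound as \Cref{lem:perr_accurate} and then union bounds over the nets, whereas you do all three union bounds in one shot; your caveat about the $d\log 3$ constant is well taken and applies equally to the paper's own bookkeeping.
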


The proof of this result utilizes \Cref{lem:perr_accurate} and the union bound over the quantized sets $\CQ_\theta$ and $\CQ_\tau$.

\begin{proof}[Proof of \Cref{lem:gt_goodevent}]
    To extend ~\Cref{lem:perr_accurate} to hold simultaneously for all $\theta_Q \in \CQ_\theta$ and $\tau_Q \in \CQ_\tau$, we define an $\eps_Q$-net for each, and union bound over their cartesian product.
    By ~\Cref{lem:perr_accurate} we know that for any fixed $\theta, \tau$, and $\delta''>0$:
    \begin{equation*}
        \P \left( \left| \perr(\theta,\hat{P}_t,\tau)-\perr(\theta,P,\tau)\right| \le \sqrt{\frac{\log(\frac{\pi^2t^2}{3\delta''})}{4t}}, \forall t\geq 1 \right) \geq 1 -\delta''.
    \end{equation*}
    Let $\CQ_\theta = \mathcal{N}(\mathcal{S}^{d-1},\eps_{\theta})$ an $\eps_{Q}$-cover of the unit ball $\mathcal{S}^{d-1}$.
    By \textbf{Corollary 4.2.13} of \citet{vershynin2018high} we have that the covering number of $\mathcal{S}^{d-1}$ satisfies for any $\eps_{Q}\in(0,1]$;
    \begin{align*}
        \left(\frac{1}{\eps_{Q}}\right)^d &\leq |\CQ_\theta| \leq \left(\frac{2}{\eps_{Q}}+1\right)^d < \left(\frac{3}{\eps_{Q}}\right)^d.
    \end{align*}
    As $\tau$ lives in $[0,1]$, an $\eps$-net of the unit segment in the real line is $\{\eps, 2\eps,\dots,\lfloor\frac{1}{\eps}\rfloor\eps\}$, and so $|\CQ_\tau| \le \frac{1}{\eps_\tau}$.
    By taking a union bound over all $\tau_Q \in \CQ_\tau$ and all $\theta_Q \in \CQ_\theta$, i.e. taking $\delta'' = \delta' / (|\CQ_\theta| \cdot |\CQ_\tau|)$, we have 
    \begin{equation*}
        \P(G_{\perr}) = \P \left( \left| \perr(\theta_Q,\hat{P}_t,\tau_Q)-\perr(\theta_Q,P,\tau_Q)\right| \le \zeta_t, \forall t\geq 1, \theta_Q\in \CQ_\theta, \tau_Q \in \CQ_\tau\right) \ge 1- \delta'.
    \end{equation*}
    Recall that $\zeta_t$ is defined in \Cref{eq:zeta_t} as 
    \begin{equation*}
    \zeta_t := \sqrt{\frac{(d+1)\log \left(1/\eps_Q\right) + \log\left(\frac{\pi^2t^2}{\delta'}\right)}{4t}}.
    \end{equation*}
    This stems from the union bound with $\delta'' = \delta' / (|\CQ_\theta| \cdot |\CQ_\tau|)$,
    \begin{align*}
        \sqrt{\frac{\log(\frac{\pi^2t^2}{3\delta''})}{4t}}
        &= \sqrt{\frac{\log\left(\frac{\pi^2t^2|\CQ_\theta| \cdot |\CQ_\tau|}{3\delta'}\right)}{4t}}\\
        &\le \sqrt{\frac{\log\left(\frac{\pi^2t^2 \left(3 \eps_Q^{-d-1}\right)}{3\delta'}\right)}{4t}}\\
        &= \sqrt{\frac{(d+1)\log \left(1/\eps_Q\right) + \log\left(\frac{\pi^2t^2}{\delta'}\right)}{4t}}\\
        &= \zeta_t, \numberthis
    \end{align*}
as claimed.
    As discussed, we utilize $\eps_Q = 1/t^2$ to simplify the regret analysis in \Cref{thm:regret_upper_bound}.
    \end{proof}

Having established guarantees on the closeness of the $\perr$ estimators to their true values over our quantized set, we turn our attention to the task of understanding - for a fixed $\theta$ - the closeness of the optimal estimated threshold $\tau\opt_Q(\theta,\hat{P},\alpha)$ over the quantized set defined as 
\begin{align}%\label{eq:tauopt_q_def}
    \tau\opt_Q(\theta,\hat{P},\alpha) := \min \{\tau_Q \in \CQ_\tau : \perr(\theta,\hat{P},\tau_Q) \le \alpha \},
\end{align}
and the optimal estimated threshold $  \tau\opt(\theta,\hat{P},\alpha)$ over the entire domain of $\tau$. 
The following ``sandwich" relationship between $\tau\opt$ and $\tau\opt_Q$ holds:
\begin{align}%\label{eq:tau_opt-tau_opt_Q_sandwich}
    \tau\opt(\theta,\hat{P},\alpha) \stackrel{(i)}{\le} \tau\opt_Q(\theta,\hat{P},\alpha) \stackrel{(ii)}{\le} \tau\opt(\theta,\hat{P},\alpha) +\eps_Q.
\end{align}
where $(i)$ holds because $ \tau\opt(\theta,\hat{P},\alpha) = \min \{\tau \in [0,1] : \perr(\theta,\hat{P},\tau) \le \alpha \}$ and $\CQ_\tau \subset [0,1]$ thus showing $ \tau\opt(\theta,\hat{P},\alpha)$ is the result of minimizing the same function $\perr$ over a larger set than in the definition of $\tau\opt_Q(\theta,\hat{P},\alpha)$. Inequality $(ii)$ holds because by definition of the covering set $\CQ_\tau$ the threshold in the cover closest to $\tau\opt(\theta,\hat{P},\alpha)$ from above (say $\tilde\tau \in \CQ_\tau$)  must satisfy $\tau\opt(\theta,\hat{P},\alpha) \leq \tilde \tau \leq \tau\opt_Q(\theta,\hat{P},\alpha)$ and $| \tau\opt(\theta,\hat{P},\alpha) - \tilde{\tau}| \leq \eps_Q$. Since $\perr(\theta, \hat{P}, \tau\opt) \leq \alpha$ and $\perr$ is monotonically decreasing in $\tau$ we see that $\perr(\theta, \hat{P}, \tilde{\tau}) \leq \alpha$ and therefore, due to the definition of $\tau\opt_Q(\theta,\hat{P},\alpha)$ as the minimum threshold in $\CQ_\tau$ satisfying $\perr \leq \alpha$, $\tilde{\tau} = \tau\opt_Q(\theta,\hat{P},\alpha)$. Combining these observations we conclude that $| \tau\opt(\theta,\hat{P},\alpha) - \tau\opt_Q(\theta,\hat{P},\alpha)| \leq \eps_Q$ and therefore the desired result.

\subsection{Stability of $\tau\opt$ with respect to $\theta$}

Having established the stability of the optimal threshold to changes in $P$, we now show that it is also stable under changes in the parameter $\theta$. To state our results, for any $\theta_Q \in \CQ_\theta \cap \CC_t$ we define an estimator $\hat{\tau}$ as (see \Cref{eq:tauhat_tauopt})
\begin{equation}%\label{eq:tauhat_tauopt}
    \hat{\tau}(\theta_Q, \hat{P}_t, \alpha) 
    := \tau\opt_Q\left(\theta_Q,\hat{P}_t,\alpha - \zeta_t - 2B_t/\sqrt{ \lambda_{\min}^t}\right) + 2B_t/\sqrt{ \lambda_{\min}^t}.
\end{equation}
This section's main result is that  as long as $G_{\perr},G_\theta$ hold then, 
\begin{equation}%\label{eq:pessimism_hat_star}
\hat{\tau}(\theta_Q, \hat{P}_t, \alpha)  \ge \tau\opt(\theta\opt, P, \alpha) \text{ for all }\theta_Q \in \CQ_\theta \cap \CC_t.
\end{equation}
In other words, the empirical $\hat{\tau}$ estimator evaluated at the estimated $\theta_Q$ and $\hat{P}_t$ provides us with an upper bound for the true threshold $\tau\opt$ evaluated at $\theta\opt$ and $P$. Eventually, for our regret bound, we require the reverse direction: that our estimated threshold $\hat{\tau}$ is not too much larger than $\tau\opt$, so that we do not perform too many excess tests. 
In order to show this we first establish a helper Lemma showing that our estimate $\perr(\theta,\hat{P},\tau)$ is close to $\perr(\theta\opt,\hat{P},\tau)$ when $\theta$ is close to $\theta\opt$, for any distribution $\rho$ and threshold $\tau$.

\begin{lemma}
\label{lem:perr_stability_tau}
    For all $\theta,\theta' \in \Theta$, $ \tau \ge \frac{\|\theta - \theta'\|_{V_t}}{\sqrt{\lambda_{\min}^t}}$, and density $\rho(x)$ on $\CX$:
    \begin{align*}
          \perr(\theta,\rho,\tau) \le \ \perr \left(\theta',\rho,\tau- \frac{\|\theta - \theta'\|_{V_t}}{\sqrt{\lambda_{\min}^t}}\right) + \frac{\|\theta - \theta'\|_{V_t}}{\sqrt{\lambda_{\min}^t}}.
    \end{align*}
\end{lemma}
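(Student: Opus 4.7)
The plan is to reduce the claim to a uniform pointwise perturbation bound on $|x^\top\theta - x^\top\theta'|$, and then track the effect of this perturbation separately on the two factors in the $\perr$ integrand. Set $\gamma := \|\theta-\theta'\|_{V_t}/\sqrt{\lambda_{\min}^t}$ for brevity. By Cauchy--Schwarz in the $V_t$ inner product, $|x^\top(\theta-\theta')| \le \|x\|_{V_t^{-1}}\,\|\theta-\theta'\|_{V_t}$; since $\|x\|_2 \le 1$ and $V_t^{-1} \preceq (\lambda_{\min}^t)^{-1} I$, one has $\|x\|_{V_t^{-1}} \le 1/\sqrt{\lambda_{\min}^t}$. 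Combining these gives $\bigl||x^\top\theta| - |x^\top\theta'|\bigr| \le |x^\top(\theta-\theta')| \le \gamma$ uniformly over $x \in \CX$.

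With this uniform perturbation bound in hand, I would separately handle the indicator and the sigmoid weight in the definition of $\perr$. For the indicator: on the event $\{|x^\top\theta|>\tau\}$ the reverse triangle inequality yields $|x^\top\theta'| \ge |x^\top\theta| - \gamma > \tau - \gamma$, so $\mathds{1}\{|x^\top\theta|>\tau\} \le \mathds{1}\{|x^\top\theta'|>\tau-\gamma\}$ pointwise in $x$. For the sigmoid weight, the map $z \mapsto (1+e^z)^{-1}$ is $\tfrac14$-Lipschitz on $\R$, so
\[
(1+\exp(|x^\top\theta|))^{-1} \le (1+\exp(|x^\top\theta'|))^{-1} + \tfrac14\,\bigl||x^\top\theta|-|x^\top\theta'|\bigr| \le (1+\exp(|x^\top\theta'|))^{-1} + \gamma/4.
\]

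Plugging both bounds into the integral defining $\perr(\theta,\rho,\tau)$ and using that $\rho$ is a probability density yields
\[
\perr(\theta,\rho,\tau) \le \int \bigl[(1+\exp(|x^\top\theta'|))^{-1} + \gamma/4\bigr]\,\mathds{1}\{|x^\top\theta'|>\tau-\gamma\}\,\rho(dx) \le \perr(\theta',\rho,\tau-\gamma) + \gamma/4,
\]
which is stronger than the claim since $\gamma/4 \le \gamma$. The hypothesis $\tau \ge \gamma$ is only used to ensure $\tau - \gamma \ge 0$, so that the shifted threshold on the right-hand side is a valid argument of $\perr$ (otherwise the indicator is vacuous, which would make the bound trivial but awkward to state).

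The argument is essentially a careful bookkeeping exercise; the only mildly delicate choice is to Lipschitz-bound the sigmoid against $|x^\top\theta'|$ rather than against the shifted argument $|x^\top\theta'|-\gamma$, so that the leading term in the integrand cleanly matches the integrand of $\perr(\theta',\rho,\tau-\gamma)$ and the residual is an additive $\gamma/4$ times $\rho$-mass. No additional structural properties of $\rho$ beyond being a probability density on the unit ball, and no properties of $V_t$ beyond the lower bound on $\lambda_{\min}^t$, are required.
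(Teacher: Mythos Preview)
Your proof is correct and follows essentially the same approach as the paper: bound $|x^\top(\theta-\theta')|$ uniformly via H\"older/Cauchy--Schwarz together with $\|x\|_{V_t^{-1}}\le 1/\sqrt{\lambda_{\min}^t}$, then propagate this perturbation through the indicator (via the reverse triangle inequality) and the sigmoid weight (via its $\tfrac14$-Lipschitz property). Your presentation is in fact slightly tighter---you retain the $\gamma/4$ additive term where the paper coarsens the Lipschitz constant to $1$ and picks up $\gamma$---but the structure is the same.
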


To prove this we leverage algebraic properties of $\perr(\theta,\rho,\tau)$ and the Hölder inequality, a standard technique in Linear Bandits (see \citet{lattimore2020bandit}, Part V).
\begin{proof}
    Here, we use $x$ as a dummy variable for integration:
    \begin{align*}
        \perr(\theta,\rho,\tau) &= \int (1+\exp(|x^\top \theta|))^{-1} \mathds{1}\left\{|x^\top \theta| > \tau\right\} \rho(dx) \\
        &= \int (1+\exp(|x^\top \theta' + x^\top (\theta - \theta')|))^{-1} \mathds{1}\left\{|x^\top \theta' + x^\top (\theta - \theta')| > \tau\right\} \rho(dx) \\
        &\le \int (1+\exp(|x^\top \theta' | -  |x^\top (\theta - \theta')|))^{-1} \mathds{1}\left\{|x^\top \theta'| > \tau - |x^\top (\theta - \theta')| \right\} \rho(dx) \\
        &\le \int \left((1+\exp(|x^\top \theta' |))^{-1} + |x^\top (\theta - \theta')|\right) \mathds{1}\left\{|x^\top \theta'| > \tau - |x^\top (\theta - \theta')| \right\}\rho(dx) \\
        &\le  \max_{x' \in \CX} \int \left((1+\exp(|x^\top \theta' |))^{-1} + |x'^\top (\theta - \theta')|\right) \mathds{1}\left\{|x^\top \theta'| > \tau - |x'^\top (\theta - \theta')| \right\}\rho(dx) \\
        &= \max_{x' \in \CX}\perr(\theta',\rho,\tau - |x'^\top (\theta - \theta')|) + \int |x^\top (\theta - \theta')| \mathds{1}\left\{|x^\top \theta'| > \tau - |x'^\top (\theta - \theta')| \right\} \rho(dx) \\
        &\le \max_{x' \in \CX} \perr(\theta',\rho,\tau- \|\theta - \theta'\|_{V_t}\| x' \|_{V_t^{-1}}) + \| \theta - \theta' \|_{V_t} \| x' \|_{V_t^{-1}} \P_{\rho}\left(|x^\top \theta'| > \tau - |x^\top (\theta - \theta')| \right) \\
        &\le \max_{x' \in \CX} \perr(\theta',\rho,\tau- \|\theta - \theta'\|_{V_t}\| x' \|_{V_t^{-1}}) + \| \theta - \theta' \|_{V_t} \| x' \|_{V_t^{-1}} \\
        &= \perr(\theta',\rho,\tau- \frac{\|\theta - \theta'\|_{V_t}}{\sqrt{\lambda_{\min}^t}}) + \frac{\|\theta - \theta'\|_{V_t}}{\sqrt{\lambda_{\min}^t}}
    \end{align*}
    The first inequality follows from the triangle inequality, and the second inequality follows from the fact that $1/(1+\exp(z))$ is 1/4-Lipschitz (coarsely upper bounded as 1).
    The third bounds by looking at the worst case context $x'$.
    The fourth inequality utilizes Hölder's inequality, on the worst case context $x'$, and that $\perr$ is monotone in $\tau$.
    The second to last inequality follows from the fact that a probability is always less than or equal to 1.
    Finally, we apply the following bound for any $x' \in \CX$; $\| x' \|_{V_t^{-1}} \le \frac{1}{\sqrt{\lambda_{\min}^t}}$, where we have implicitly used that $\norm{x'} \le 1, \forall x' \in \CX$.

\end{proof}

\Cref{lem:perr_stability_tau} indicates that as our ability to estimate $\theta$ improves, so will our error probability estimates.
Now, conditioning on the good event $G_{\perr}$,  we show that $\tau\opt_Q(\theta_Q,\hat{P}_t,\alpha)$ is close to $\tau\opt$ when $\theta_Q$ is close to $\theta\opt$.

\begin{restatable}{lemma}{tauQUpperNLowerBound}\label{lem:tau_q_upperNlower_bound}
    Conditioning on $G_{\perr}$, for any $\theta_Q \in \CQ_\theta \cap \CC_t, \theta \in \CC_t$ such that $ \frac{\|\theta_Q - \theta\|_{V_t}}{\sqrt{\lambda_{\min}^t}} \le \tau\opt(\theta, P,\alpha - \zeta_t - \frac{\|\theta_Q - \theta\opt\|_{V_t}}{\sqrt{\lambda_{\min}^t}})$ it is true that:
    \begin{align}
    \tau\opt_Q(\theta_Q,\hat{P}_t,\alpha) &\leq \tau\opt\left(\theta,P,\alpha - \zeta_t - \frac{\|\theta_Q - \theta\|_{V_t}}{\sqrt{\lambda_{\min}^t}}\right) + \frac{\|\theta_Q - \theta\|_{V_t}}{\sqrt{\lambda_{\min}^t}} +\eps_Q, \notag\\
    \tau\opt_Q(\theta_Q,\hat{P}_t,\alpha) &\ge \tau\opt\left(\theta,P,\alpha + \zeta_t + \frac{\|\theta_Q - \theta\|_{V_t}}{\sqrt{\lambda_{\min}^t}}\right) - \frac{\|\theta_Q - \theta\|_{V_t}}{\sqrt{\lambda_{\min}^t}} \label{eq:lower_bound_helper_taustar_q_taustar}
    \end{align}
\end{restatable}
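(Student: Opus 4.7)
The plan is to chain three ingredients: (i) the stability of $\perr$ in $\theta$ given by \Cref{lem:perr_stability_tau}, (ii) the sample-level concentration from $G_{\perr}$ which lets us swap $P$ and $\hat{P}_t$ at cost $\zeta_t$ on the quantized grid, and (iii) the $\eps_Q$-sandwich between $\tau\opt$ and $\tau\opt_Q$. Throughout, set $\delta_\theta := \|\theta_Q - \theta\|_{V_t}/\sqrt{\lambda_{\min}^t}$.

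For the upper bound, let $\tau^{\star} := \tau\opt(\theta, P, \alpha - \zeta_t - \delta_\theta)$, so that $\perr(\theta, P, \tau^{\star}) \le \alpha - \zeta_t - \delta_\theta$ by definition. Applying \Cref{lem:perr_stability_tau} with the roles $(\theta,\theta') \leftarrow (\theta_Q, \theta)$ at the argument $\tau^{\star}+\delta_\theta$ (whose admissibility, $\tau^{\star}+\delta_\theta \ge \delta_\theta$, is immediate) yields
\[
\perr(\theta_Q, P, \tau^{\star} + \delta_\theta) \;\le\; \perr(\theta, P, \tau^{\star}) + \delta_\theta \;\le\; \alpha - \zeta_t.
\]
Now pick the smallest $\tilde\tau_Q \in \CQ_\tau$ with $\tilde\tau_Q \ge \tau^{\star}+\delta_\theta$; by construction of the quantization grid, $\tilde\tau_Q \le \tau^{\star}+\delta_\theta + \eps_Q$. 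Using that $\perr$ is monotonically non-increasing in $\tau$, and then invoking $G_{\perr}$ at the grid point $(\theta_Q, \tilde\tau_Q)$,
\[
\perr(\theta_Q, \hat{P}_t, \tilde\tau_Q) \;\le\; \perr(\theta_Q, P, \tilde\tau_Q) + \zeta_t \;\le\; \perr(\theta_Q, P, \tau^{\star}+\delta_\theta) + \zeta_t \;\le\; \alpha.
\]
Hence $\tilde\tau_Q$ is feasible for the minimization defining $\tau\opt_Q(\theta_Q, \hat{P}_t, \alpha)$, so $\tau\opt_Q(\theta_Q, \hat{P}_t, \alpha) \le \tilde\tau_Q \le \tau^{\star} + \delta_\theta + \eps_Q$, which is the first inequality.

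For the lower bound, let $\tilde\tau := \tau\opt_Q(\theta_Q, \hat{P}_t, \alpha)$, so $\perr(\theta_Q, \hat{P}_t, \tilde\tau) \le \alpha$. By $G_{\perr}$ (now in the other direction) we obtain $\perr(\theta_Q, P, \tilde\tau) \le \alpha + \zeta_t$. Applying \Cref{lem:perr_stability_tau} with the roles $(\theta,\theta') \leftarrow (\theta, \theta_Q)$ at argument $\tilde\tau + \delta_\theta$ (admissible since $\tilde\tau + \delta_\theta \ge \delta_\theta$) gives
\[
\perr(\theta, P, \tilde\tau + \delta_\theta) \;\le\; \perr(\theta_Q, P, \tilde\tau) + \delta_\theta \;\le\; \alpha + \zeta_t + \delta_\theta.
\]
By the definition of $\tau\opt$ as the smallest threshold achieving the prescribed error level, $\tau\opt(\theta, P, \alpha + \zeta_t + \delta_\theta) \le \tilde\tau + \delta_\theta$, which rearranges to the claimed lower bound.

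The only real subtlety is tracking the direction of each inequality and ensuring that the precondition $\tau \ge \delta_\theta$ of \Cref{lem:perr_stability_tau} is satisfied at the point of application; the lemma's stated hypothesis $\delta_\theta \le \tau\opt(\theta, P, \alpha - \zeta_t - \delta_\theta)$ guarantees $\tau^{\star} \ge \delta_\theta$, so $\tau^{\star}+\delta_\theta \ge \delta_\theta$ is automatic in the upper-bound argument, and the lower-bound argument uses $\tilde\tau \ge 0$. No further machinery beyond \Cref{lem:perr_stability_tau}, $G_{\perr}$, and the quantization sandwich is needed.
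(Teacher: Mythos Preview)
Your proof is correct and follows essentially the same approach as the paper: both chain together \Cref{lem:perr_stability_tau}, the event $G_{\perr}$, and the $\eps_Q$-quantization sandwich. The only difference is presentational---the paper argues by set containment on the feasible sets defining each $\min$, whereas you track an explicit feasible threshold and show it belongs to the target set; the two are equivalent, and your element-tracking version is arguably cleaner when handling the quantization rounding.
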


The proof of the above lemma relies on \Cref{eq:gt_good_defn} to relate $\tau_Q\opt(\cdot,\hat{P}_t,\cdot)$ to $\tau_Q\opt(\cdot,P,\cdot)$ and \Cref{lem:perr_stability_tau} to connect $\tau_Q\opt(\theta_Q,P,\cdot)$ to $\tau_Q\opt(\theta,P,\cdot)$.

\begin{proof}
    Conditioning on the good event $G_{\perr}$, we have that
    \begin{align*}
        \tau\opt_Q(\theta_Q,\hat{P}_t,\alpha) &= \min \{\tau_Q \in \CQ_\tau : \perr(\theta_Q,\hat{P}_t,\tau_Q) \le \alpha \} \\
        &\overset{(a)}{\le} \min \left\{\tau_Q \in \CQ_\tau : \perr(\theta_Q,P,\tau_Q) \le \alpha - \zeta_t \right\} \\
        &\overset{(b)}{\le} \min \left\{\tau_Q \in \CQ_\tau: \perr(\theta,P,\tau_Q) \le \alpha - \zeta_t - \frac{\|\theta_Q - \theta\|_{V_t}}{\sqrt{\lambda_{\min}^t}}\right\} + \frac{\|\theta_Q - \theta\|_{V_t}}{\sqrt{\lambda_{\min}^t}} \\
        &\le \min \left\{\tau \in [0,1]: \perr(\theta,P,\tau) \le \alpha - \zeta_t - \frac{\|\theta_Q - \theta\|_{V_t}}{\sqrt{\lambda_{\min}^t}}\right\}+ \frac{\|\theta_Q - \theta\|_{V_t}}{\sqrt{\lambda_{\min}^t}} + \eps_Q\\
        &= \tau\opt\left(\theta,P,\alpha - \zeta_t - \frac{\|\theta_Q - \theta\|_{V_t}}{\sqrt{\lambda_{\min}^t}}\right) + \frac{\|\theta_Q - \theta\|_{V_t}}{\sqrt{\lambda_{\min}^t}} +\eps_Q \numberthis
    \end{align*}
    Where inequality (a) follows from conditioning on the good event $G_{\perr}$, and (b) follows from \Cref{lem:perr_stability_tau}.

    The lower bound for $\tau\opt_Q(\theta_Q,\hat{P}_t,\alpha)$ follows analogously:
    \begin{align*}
        \tau\opt_Q(\theta_Q,\hat{P}_t,\alpha) &= \min \{\tau_Q \in \CQ_\tau : \perr(\theta_Q,\hat{P}_t,\tau_Q) \le \alpha \} \\
        &\overset{(a)}{\ge} \min \{\tau_Q \in\CQ_\tau : \perr(\theta_Q,P,\tau_Q) \le \alpha +\zeta_t \} \\
        &= \tau_Q\opt(\theta_Q,P,\alpha+\zeta_t)\\
        &\ge \tau\opt(\theta_Q,P,\alpha+\zeta_t),
    \end{align*}
    where $(a)$ follows by the good event $G_{\perr}$, and the final inequality from the looseness of quantization.
    Now, we will lower bound $\tau\opt(\theta_Q,P,\alpha)$ in terms of $\tau\opt$ using \Cref{lem:perr_stability_tau}.
    \begin{align*}
        \tau\opt(\theta_Q,P,\alpha) &= \min\{ \tau \in [0,1]: \perr(\theta_Q,P,\tau) \le \alpha \} \\
        &\overset{(a)}{\ge} \min \left\{ \tau \in [0,1]: \perr\left(\theta,P,\tau + \frac{\|\theta_Q - \theta\|_{V_t}}{\sqrt{\lambda_{\min}^t}} \right) - \frac{\|\theta_Q - \theta\|_{V_t}}{\sqrt{\lambda_{\min}^t}} \le \alpha \right\} \\
        &\ge \min\left\{ \tau \in [0,1]: \perr(\theta,P,\tau ) \le \alpha + \frac{\|\theta_Q - \theta\|_{V_t}}{\sqrt{\lambda_{\min}^t}} \right\} - \frac{\|\theta_Q - \theta\|_{V_t}}{\sqrt{\lambda_{\min}^t}} \\
        &=\tau\opt\left(\theta,P,\alpha + \frac{\|\theta_Q - \theta\|_{V_t}}{\sqrt{\lambda_{\min}^t}}\right) - \frac{\|\theta_Q - \theta\|_{V_t}}{\sqrt{\lambda_{\min}^t}},
    \end{align*}
     where $(a)$ follows from \Cref{lem:perr_stability_tau}.
     
\end{proof}

    Putting this all together we have that on $G_{\perr},G_\theta$, evaluating at $\theta = \theta\opt$,
    \begin{align*}
    \hat{\tau}(\theta_Q,\hat{P}_t,\alpha)
    &= \tau\opt_Q\left(\theta_Q,\hat{P}_t,\alpha - \zeta_t - 2B_t/\sqrt{\lambda_{\min}^t}\right) + 2B_t/\sqrt{\lambda_{\min}^t}\\
    &\overset{(a)}{\ge} \tau\opt(\theta\opt,P,\alpha).
    \end{align*}
    where 
    (a) leverages \Cref{lem:tau_q_upperNlower_bound}. This uses the fact that when $G_\theta$ and $G_{\perr}$ hold,
\begin{equation*}
     \|\theta_Q - \theta\opt\|_{V_t}  \le 2B_t.
\end{equation*}

\subsection{Smoothness of $\tau\opt$ with respect to $\alpha$}

The last property we will need for our analysis is that $\tau\opt$ does not vary too quickly with respect to $\alpha$.
We show that for small $\gamma$, $\tau\opt(\theta\opt,P,\alpha+\gamma)$ is not much smaller than $\tau\opt$.
Note that while $\perr$ is continuous with respect to $\tau$ when evaluated at $P$ the true distribution, it is discontinuous when evaluated at $\hat{P}$ because this is an empirical distribution.

However, by \Cref{assum:density}, the true distribution of contexts is upper and lower bounded by constants and so $\perr$, which integrates the distribution, will change at an upper and lower bounded rate. We leverage these properties to prove the following stability result.

\begin{restatable}{lemma}{lemTauoptStabilityAlpha}
\label{lem:tauopt_stability_alpha}
    Under \Cref{assum:nontrivial,assum:density},
    \begin{equation}
        \tau\opt(\theta\opt,P,\alpha-\gamma) \le  \tau\opt(\theta\opt,P,\alpha) + \frac{(1+e)\gamma}{ m \cdot V_d(1)\useconstant{CL}(\tau\opt)},
    \end{equation}
     for $0<\gamma < \min \{\frac{\tau\opt\cdot m \cdot V_d(1) \cdot \useconstant{CL}(\tau\opt)}{2(1+e)},\frac{m \cdot V_d(1) \cdot f(\frac{1+\tau\opt}{2}) \cdot (1-\tau\opt)}{2(1+e)} \}$,
     where $f(\cdot)$ is the PDF of $Z \sim Beta(\frac{1}{2},\frac{d+1}{2})$. 
\end{restatable}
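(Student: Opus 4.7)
The plan is to invert the function $\tau \mapsto \perr(\theta\opt, P, \tau)$ near $\tau\opt$, showing it decreases at rate at least $m V_d(1) \useconstant{CL}(\tau\opt)/(1+e)$ over a small window above $\tau\opt$, and then reading off the bound on $\Delta := \tau\opt(\theta\opt, P, \alpha - \gamma) - \tau\opt$. Under \Cref{assum:density}, $\perr(\theta\opt, P, \cdot)$ is continuous, so the minima in \Cref{eq:tau_defn} are attained with equality; subtracting gives
\begin{equation*}
    \gamma \;=\; \int (1+\exp(|x^\top\theta\opt|))^{-1}\, \mathds{1}\{\tau\opt < |x^\top\theta\opt| \le \tau\opt+\Delta\}\, P(dx).
\end{equation*}
The prefactor is at least $1/(1+e)$ since $|x^\top\theta\opt| \le \|x\|\|\theta\opt\| \le 1$ by Cauchy--Schwarz, and $P(dx) \ge m\, dx$ on $\CB(0,1)$ by \Cref{assum:density}; hence $\gamma \ge \tfrac{m}{1+e}\,\mathrm{Vol}\{x \in \CB(0,1): \tau\opt < |x^\top\theta\opt| \le \tau\opt+\Delta\}$.

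Next I would evaluate the slab volume via a Beta representation. When $X$ is uniform on $\CB(0,1)$, the density of $X^\top\theta\opt$ on $[-1,1]$ is proportional to $(1-s^2)^{(d-1)/2}$, from which a standard change of variables gives $(X^\top\theta\opt)^2 \sim \text{Beta}(1/2,(d+1)/2)$. Via the substitution $z = u^2$, the slab volume equals $2 V_d(1)\int_{\tau\opt}^{\tau\opt+\Delta} u\, f(u^2)\, du$. Since $f(z) \propto z^{-1/2}(1-z)^{(d-1)/2}$ is monotonically decreasing on $(0,1)$, this integral is bounded below by $2 V_d(1)\, \tau\opt\, \Delta\, f\bigl((\tau\opt+\Delta)^2\bigr)$, yielding
\begin{equation*}
\gamma \;\ge\; \frac{2 m V_d(1)}{1+e}\, \tau\opt\, \Delta\, f\bigl((\tau\opt+\Delta)^2\bigr).
\end{equation*}

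The last step is a bootstrap that replaces the $\Delta$-dependent density factor by a fixed one in terms of $\tau\opt$ alone. The two hypotheses on $\gamma$ are calibrated exactly for this: the first restricts $\Delta \le \tau\opt/2$, validating the use of $u \ge \tau\opt$ in the integrand; the second restricts $\tau\opt+\Delta \le (1+\tau\opt)/2$, so by monotonicity of $f$ we have $f((\tau\opt+\Delta)^2) \ge f\bigl(((1+\tau\opt)/2)^2\bigr)$. Absorbing the $\tau\opt$-dependent constants into $\useconstant{CL}(\tau\opt)$ and rearranging then gives the claim.

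The main obstacle is closing this self-referential bootstrap: the bound on $\Delta$ comes from a density factor whose validity already assumes a bound on $\Delta$. The two explicit conditions on $\gamma$ in the statement are set up so that this closure is automatic once one assumes the target inequality and checks consistency. Apart from that, the argument is a chain of monotonicity and continuity manipulations, plus the elementary volume computation via the Beta distribution.
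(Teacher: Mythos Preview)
Your approach is essentially the paper's: write $\gamma=\perr(\theta\opt,P,\tau\opt)-\perr(\theta\opt,P,\tau\opt+\Delta)$, lower-bound it by $\tfrac{m}{1+e}$ times the slab volume via the Beta representation, and invert. The paper bounds the Beta increment using concavity of the CDF $F$ and the mean value theorem, which produces the specific constant $\useconstant{CL}(\tau\opt)=\tfrac{\tau\opt}{2}\bigl(1-F((1+\tau\opt)^2/4)\bigr)$; your route through $\int u\,f(u^2)\,du$ and monotonicity of the PDF is cleaner and in fact yields a larger constant (since $1-F(z)\le (1-z)f(z)\le f(z)$ for decreasing $f$), so the stated inequality follows---but you should verify that comparison rather than simply ``absorb into $\useconstant{CL}(\tau\opt)$'', which is a fixed quantity from the paper, not a free parameter.

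The one soft spot is your handling of the circularity. ``Assume the target inequality and check consistency'' is not a valid closure; the paper instead breaks the loop \emph{directly}: the second hypothesis on $\gamma$ is precisely a lower bound on $\perr(\theta\opt,P,\tau\opt)-\perr(\theta\opt,P,(1+\tau\opt)/2)$, so $(1+\tau\opt)/2$ is already feasible for the constraint $\perr\le\alpha-\gamma$, giving $\Delta\le(1-\tau\opt)/2$ a priori with no reference to the conclusion. Once that is in hand, your bound $f((\tau\opt+\Delta)^2)\ge f\bigl(((1+\tau\opt)/2)^2\bigr)$ is justified non-circularly. Your reading of the first hypothesis is also off: $u\ge\tau\opt$ is automatic on $[\tau\opt,\tau\opt+\Delta]$. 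In the paper's version that hypothesis enforces $\lambda<\tau\opt/2$ inside the MVT step of the concavity bound; your PDF route does not actually require it.
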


The proof proceeds as follows.
First, we study the stability of $\tau\opt$ when the contexts follow the uniform distribution on the unit ball, characterizing the mass of contexts satisfying $|\abs{\dotp{X,\theta\opt}}\le \tau\opt$ (\Cref{lem:spherical_segment}).
Then we use \Cref{assum:density} to derive bounds for the unknown distribution $P$ (\Cref{lem:spherical_seg_prob_bound}).
Finally, we leverage these upper and lower bounds to derive the stability of $\tau\opt$ with respect to $\alpha$ (\Cref{lem:tauopt_stability_alpha}).

\begin{lemma}\label{lem:spherical_segment}
    For any $0\le \tau \le 1$ intersection of $\{\Bx : \norm{\Bx} \le 1\}$ with $\{\Bx : \abs{\dotp{\Bx}{\theta\opt}}\le \tau\}$ is a spherical segment (see \Cref{fig:spherical_seg}) with volume equal to 
    $$
    \int_{\norm{\Bx} \le 1}\indctr{\abs{\dotp{\Bx}{\theta\opt}} \le \tau} d\Bx = V_d \cdot I_{\tau^2}\left(\frac{1}{2},\frac{d+1}{2}\right),
    $$
    where $V_d$ is the volume of the \textit{d-dimensional} unit ball, and $I_x(a,b) = \frac{\int_{0}^{x}t^{a-1}(1-t)^{b-1}dt}{B(a,b)}$ is the regularized Beta function ~\citet{egorova2023computation}, that is the cumulative distribution function of the Beta distribution.
\end{lemma}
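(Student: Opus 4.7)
The plan is to exploit the rotational symmetry of the unit ball and reduce the computation to a one-dimensional integral that matches the integral representation of the regularized incomplete Beta function. Since the Lebesgue measure on $\R^d$ is invariant under orthogonal transformations and $\|\theta\opt\|=1$, I will first apply a rotation so that, without loss of generality, $\theta\opt = \Be_1$. The set of interest becomes $\{\Bx \in \R^d : \norm{\Bx}\le 1, |x_1|\le \tau\}$, which makes clear that the region is a spherical segment symmetric about the hyperplane $\{x_1 = 0\}$.

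Next, I would slice the region by hyperplanes orthogonal to $\Be_1$. For each $x_1 \in [-\tau,\tau]$ the cross-section is a $(d-1)$-dimensional ball of radius $\sqrt{1-x_1^2}$, with volume $V_{d-1}(1-x_1^2)^{(d-1)/2}$. Fubini then gives
\begin{equation*}
\int_{\norm{\Bx} \le 1}\indctr{|x_1| \le \tau} d\Bx
= 2 V_{d-1} \int_0^{\tau} (1-x_1^2)^{(d-1)/2}\, dx_1.
\end{equation*}
The substitution $u = x_1^2$, $du = 2 x_1\, dx_1$, yields
\begin{equation*}
2V_{d-1}\int_0^{\tau} (1-x_1^2)^{(d-1)/2}\, dx_1
= V_{d-1} \int_0^{\tau^2} u^{-1/2}(1-u)^{(d-1)/2}\, du,
\end{equation*}
which is precisely $V_{d-1}\, B(1/2,(d+1)/2)\, I_{\tau^2}(1/2,(d+1)/2)$ by the definition of the regularized incomplete Beta function.

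Finally, I would identify the prefactor $V_{d-1}\, B(1/2,(d+1)/2)$ with $V_d$. Using $V_k = \pi^{k/2}/\Gamma(k/2+1)$ and $B(a,b)=\Gamma(a)\Gamma(b)/\Gamma(a+b)$,
\begin{equation*}
V_{d-1}\,B\!\left(\tfrac{1}{2},\tfrac{d+1}{2}\right)
= \frac{\pi^{(d-1)/2}}{\Gamma((d+1)/2)}\cdot \frac{\Gamma(1/2)\,\Gamma((d+1)/2)}{\Gamma(d/2+1)}
= \frac{\pi^{d/2}}{\Gamma(d/2+1)} = V_d,
\end{equation*}
which closes the calculation. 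I do not anticipate any serious obstacle: the only subtlety is making sure the substitution $u=x_1^2$ is handled carefully (using symmetry around $0$ to restrict to $[0,\tau]$ before taking the square), and remembering the standard identity relating consecutive ball volumes through the Beta function.
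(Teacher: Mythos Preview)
Your proposal is correct and follows essentially the same approach as the paper: rotate so that $\theta\opt$ is a coordinate axis, slice the ball into $(d-1)$-dimensional cross-sections to reduce to a one-dimensional integral, apply the substitution $u=x_1^2$ to obtain the incomplete Beta integral, and then use the Gamma-function identity $V_{d-1}\,B(1/2,(d+1)/2)=V_d$ to identify the prefactor. The only cosmetic differences are that the paper rotates to $e_d$ rather than $e_1$ and writes the substitution as $x_d=\sqrt{t}$ rather than $u=x_1^2$.
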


\begin{figure}[h]
    \centering
    \begin{subfigure}[b]{0.4\textwidth}
        \centering
        \includegraphics[width=\textwidth]{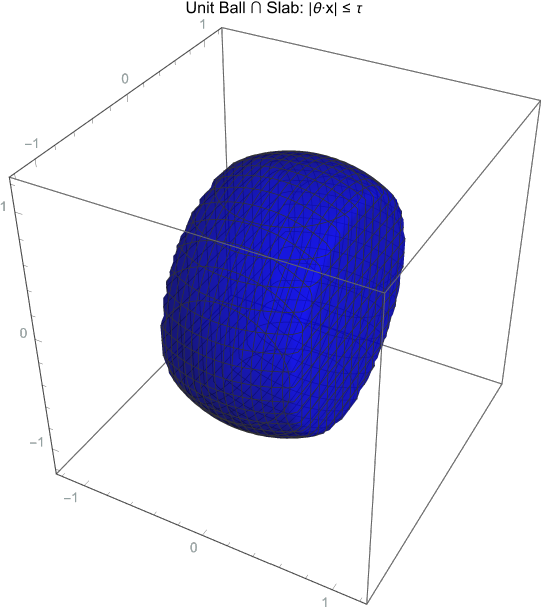}
    \end{subfigure}
    \hfill
    \begin{subfigure}[b]{0.4\textwidth}
        \centering
        \includegraphics[width=\textwidth]{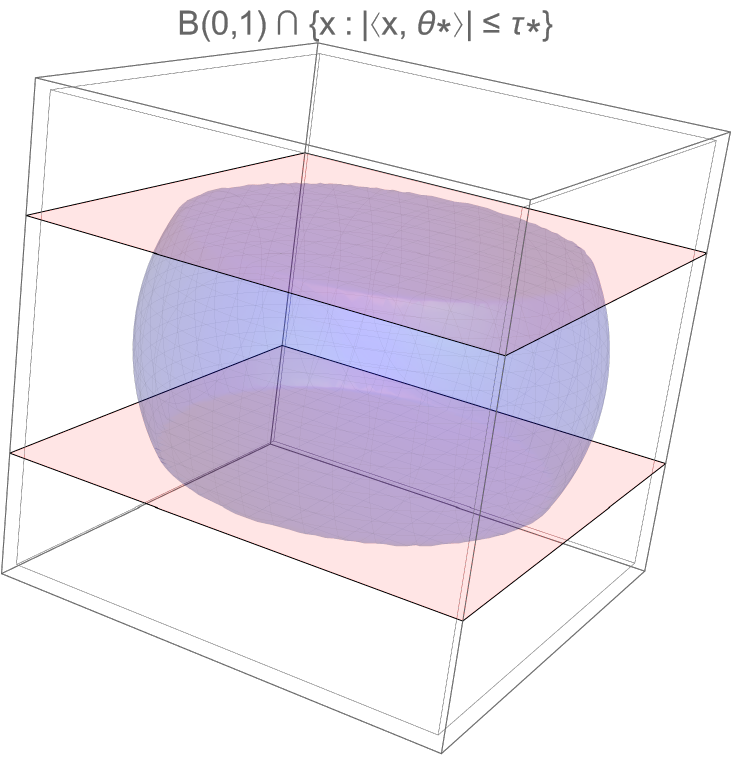}
    \end{subfigure}
    \caption{$\CB(0,1) \cap \{x:\abs{\dotp{x}{\theta\opt}}\le \tau\opt\}$}
    \label{fig:spherical_seg}
\end{figure}

Before proving \Cref{lem:spherical_segment} we will first prove an auxiliary lemma that allows us to work with a more convenient vector in the surface of the unit ball instead of $\theta\opt$.
For more details about orthogonal transformations we refer the reader to \citet{horn2012matrix}.

\begin{lemma}\label{lem:orthogonalTransformation}
Let $\theta,\theta' \in \mathbb{R}^d$ be vectors on the unit sphere, i.e.\ $\|\theta\|=\|\theta'\|=1$. 
Then there exists an orthogonal matrix $S \in \mathbb{R}^{d \times d}$ such that 
\[
S \theta' = \theta.
\]
\end{lemma}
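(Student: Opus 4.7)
The plan is to construct $S$ explicitly by extending $\theta$ and $\theta'$ to orthonormal bases of $\mathbb{R}^d$ and defining $S$ as the change-of-basis map between them. This is a classical linear-algebra construction, and the main step is to verify that the resulting $S$ is orthogonal.

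Concretely, I would proceed as follows. First, if $\theta = \theta'$, simply take $S = I$. Otherwise, apply Gram--Schmidt to complete $\{\theta'\}$ to an orthonormal basis $\{\theta', v_2', \ldots, v_d'\}$ of $\mathbb{R}^d$, which is possible since $\theta'$ has unit norm. Similarly, complete $\{\theta\}$ to an orthonormal basis $\{\theta, v_2, \ldots, v_d\}$. Define $S$ to be the unique linear map on $\mathbb{R}^d$ specified by $S\theta' = \theta$ and $Sv_i' = v_i$ for $i = 2, \ldots, d$. By construction $S\theta' = \theta$, so it only remains to check that $S$ is orthogonal.

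To verify orthogonality, write $S = U V^\top$ where $U$ is the matrix whose columns are $\theta, v_2, \ldots, v_d$ and $V$ is the matrix whose columns are $\theta', v_2', \ldots, v_d'$. Both $U$ and $V$ are orthogonal matrices because their columns form orthonormal bases of $\mathbb{R}^d$. Hence $S^\top S = V U^\top U V^\top = V V^\top = I$, so $S \in O(d)$, as desired.

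The only mild subtlety is ensuring the Gram--Schmidt completion step is valid, but this is standard: any unit vector can be extended to an orthonormal basis of $\mathbb{R}^d$. No single step is a serious obstacle; the argument is essentially a one-line consequence of the transitivity of the action of $O(d)$ on the unit sphere $S^{d-1}$, and the explicit construction above makes this transitive action concrete.
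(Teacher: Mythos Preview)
Your proof is correct. The paper takes a different, more explicit route: it constructs $S$ as a single Householder reflection $S = I - 2uu^\top$ with $u = (\theta' - \theta)/\|\theta' - \theta\|$ (and $S=I$ when $\theta=\theta'$), then verifies directly that $S\theta' = \theta$ using the identity $\|\theta'-\theta\|^2 = 2(1-\theta'^\top\theta)$. Your construction via Gram--Schmidt completions and $S = UV^\top$ is equally valid and is the standard way to exhibit the transitivity of $O(d)$ on $S^{d-1}$; it has the advantage of being conceptually transparent and of allowing one to arrange $\det S = 1$ if desired (by flipping the sign of one $v_i$). The paper's Householder approach, on the other hand, gives a closed-form one-line formula for $S$ with no auxiliary basis construction, which is all that is needed for the subsequent change-of-variables in \Cref{lem:spherical_segment}.
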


\begin{proof}
If $\theta'=\theta$, the claim holds with $S=I$.  

Otherwise, set
\[
u := \frac{\theta'-\theta}{\|\theta'-\theta\|},
\qquad
S := I - 2uu^T.
\]
The matrix $S$ is called a \emph{Householder reflection}. It satisfies $S^\top S = I$, so it is orthogonal.  

We compute
\[
u^T \theta' 
= \frac{(\theta'-\theta)^T \theta'}{\|\theta'-\theta\|}
= \frac{1 - \theta^T \theta'}{\|\theta'-\theta\|}
= \frac{\|\theta'-\theta\|}{2},
\]
since $\|\theta\|=\|\theta'\|=1$ implies 
\[
\|\theta'-\theta\|^2 = \|\theta'\|^2 + \|\theta\|^2 - 2 \theta'^T \theta = 2(1-\theta'^T \theta).
\]

Hence
\[
S \theta' = \theta' - 2u(u^T \theta') 
= \theta' - u \|\theta'-\theta\|
= \theta' - (\theta'-\theta) 
= \theta.
\]
Thus $S$ is an orthogonal matrix such that $S \theta' = \theta$.
\end{proof}

We will apply now this lemma for $\theta' = \theta\opt$ and $\theta = (0,0,\dots,1)$ to compute the area of integration at \Cref{lem:spherical_segment}.

\begin{proof}[Proof of \Cref{lem:spherical_segment}]
    A similar proof, but for spherical caps, can be found in \citet{li2010concise}.
    We follow similar steps to the didactic work of \citet{jorgensen2014volumes}.

    For $\theta\opt \in \R^d$ with $\|\theta\opt\|=1$,
    we have to integrate over all $\Bx \in \R^d$ such that 
    \begin{equation}\label{eq:limitsInt}
        \{\Bx^\top \Bx \le 1\} \cap \{\abs{\Bx^\top\theta\opt} \le \tau\}.
    \end{equation}

    We apply \Cref{lem:orthogonalTransformation} for $\theta' = \theta\opt$ and $\theta = (0,0,\dots,1)^\top$.
    Then, let $S$ be the orthogonal matrix such that 
    $$S\theta\opt = (0,0,\dots,1)^\top.$$
    We can use then \Cref{eq:limitsInt} to change the limits of integration;
    \begin{align*}
        \{\Bx^\top \Bx \le 1\} \cap \{\abs{\Bx^\top\theta\opt} \le \tau\}
        &= \{\Bx^\top S^\top S\Bx \le 1\} \cap \{\abs{\Bx^\top S^\top S\theta\opt} \le \tau \}\\
        &= \{(S\Bx)^\top (S\Bx) \le 1\} \cap \{\abs{(S\Bx)^\top (0,0,\dots,1) } \le \tau\}
    \end{align*}

    Let $\tilde{\Bx} = S\Bx$ then the new integration domain is 
    \begin{align*}
        \{\sum_{i=1}^{d-1} \tilde{x}_i^2 \le 1 - {\tilde{x}_d}^2\} \cap \{\abs{\tilde{x}_d} \le \tau\}.
    \end{align*}

    We define the volume of interest as
    \begin{equation}\label{eq:volumeOfInterest}
        V_I = \int_{\norm{\Bx} \le 1}\indctr{\abs{\dotp{\Bx}{\theta\opt}} \le \tau} d\Bx.
    \end{equation}
    By integrating first with respect to the first $n-1$ dimensions and then to the last one we get
    \begin{equation*}
        V_I = \int_{-\tau}^{\tau} \left( \int_{\{\Bx\in \R^{d-1}:\norm{\Bx}\le \sqrt{1- x_n^2}\}} dx_1 \dots dx_{d-1}\right) dx_d.
    \end{equation*}
    Now, we can use that the volume of a sphere with radius $r$ in $d$ dimensions is equal to \citep{jorgensen2014volumes}
    $$V_d(r) = \frac{r^d\pi^{(d/2)}}{\Gamma(\frac{d}{2}+1)},$$
    and calculate the inner integral as
    \begin{align*}
        \int_{-\tau}^{\tau} \left( \int_{\{\Bx\in \R^{d-1}:\norm{\Bx}\le 1- x_n^2\}} dx_1 \dots dx_{d-1}\right) dx_d &= \frac{\pi^{\frac{d-1}{2}}}{\Gamma(\frac{d-1}{2}+1)}\int_{-\tau}^{\tau} (1 - x_d^2)^{\frac{d-1}{2}} dx_d.
    \end{align*}
    We use the fact that the function $1-x^2$ is even and the previous expression becomes
    \begin{equation*}
        V_I = 2\frac{\pi^{\frac{d-1}{2}}}{\Gamma(\frac{d-1}{2}+1)}\int_{0}^{\tau} (1 - x_d^2)^{\frac{d-1}{2}} dx_d.
    \end{equation*}
    We now make the change of variables, $x_d := \sqrt{t}$ and $dx_d = \frac{1}{2}t^{-\frac{1}{2}}dt$.
    The new limits of integration are; when $x_d = 0$ then $t=0$ and when $x_d =\tau$, $t=\tau^2$.
    \begin{align*}
        V_I &= \frac{\pi^{\frac{d-1}{2}}}{\Gamma(\frac{d-1}{2}+1)}\int_{0}^{\tau^2} (1 - t)^{\frac{d-1}{2}} \cdot t^{-\frac{1}{2}}dt \\
        &= \frac{\pi^{\frac{d-1}{2}}\Gamma(1/2)\Gamma(d/2+1)}{\Gamma(\frac{d-1}{2}+1)\Gamma(1/2)\Gamma(d/2+1)}\int_{0}^{\tau^2} (1 - t)^{\frac{d-1}{2}} \cdot t^{-\frac{1}{2}}dt \\
        &= \frac{\pi^{\frac{d}{2}}}{\Gamma(d/2+1)} \cdot \frac{\Gamma(d/2+1)}{\Gamma(1/2)\Gamma(\frac{d-1}{2}+1)} \cdot \int_{0}^{\tau^2} (1 - t)^{\frac{d-1}{2}} \cdot t^{-\frac{1}{2}}dt,
    \end{align*}
    where we used that $\Gamma(1/2) = \sqrt{\pi}$.
    We further use the definition of the Beta function $B(\alpha,\beta) = \frac{\Gamma(\alpha)\Gamma(\beta)}{\Gamma(\alpha+\beta)}$ and that $V_d(1) = \frac{\pi^{\frac{d}{2}}}{\Gamma(d/2+1)}$ (see \citet{jorgensen2014volumes}).
    \begin{align*}
        V_I &= V_d(1) \cdot \frac{\int_{0}^{\tau^2} (1 - t)^{\frac{d-1}{2}} \cdot t^{-\frac{1}{2}}dt}{B(\frac{1}{2},\frac{d+1}{2})} \\
        &= V_d(1) I_{\tau^2}\left(\frac{1}{2},\frac{d+1}{2}\right).
    \end{align*}
\end{proof}

We are interested in studying the stability of the previous quantity when we evaluate at $\tau - \lambda$, for $0<\lambda < \tau$ instead of at $\tau$.
This is the difference between the CDF of the Beta distribution evaluated at $(\tau -\lambda)^2$ and at $\tau^2$, i.e. $I_{\tau^2}\left(\frac{1}{2},\frac{d+1}{2}\right) - I_{(\tau-\lambda)^2}\left(\frac{1}{2},\frac{d+1}{2}\right)$.

We will show that for the given parameters $\alpha, \beta$ for $Z \sim Beta(\alpha, \beta)$, the CDF $F(z) = \P(Z \le z)$ is a concave function.
Then, we will bound the difference $F(1-\tau^2) - F(1-(\tau +\lambda)^2)$ by using standard arguments for increasing, concave functions that lie in $[0,1]$.
These can be summarized in the following lemmata.

\begin{lemma} \label{lem:beta_cdf}
    For $Z \sim Beta(\frac{1}{2},\frac{d+1}{2})$, $d \ge 1$, the CDF of $Z$ is non-decreasing and concave over its support.
\end{lemma}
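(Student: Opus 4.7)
The plan is to work directly with the density of the Beta distribution and check monotonicity and concavity by computing one and two derivatives of $F$. Since $F$ is the CDF of $Z \sim \mathrm{Beta}(1/2,(d+1)/2)$, it is absolutely continuous on $[0,1]$ with derivative equal to the density
\begin{equation*}
    f(z) = \frac{z^{-1/2}(1-z)^{(d-1)/2}}{B(1/2,(d+1)/2)}, \qquad z \in (0,1).
\end{equation*}
Non-decreasingness is immediate because $f(z) \ge 0$ everywhere on the support. All the real work is in the concavity statement, which reduces to showing that $f$ itself is non-increasing on $(0,1)$.

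To handle this, I would take the logarithmic derivative of $f$. A direct computation gives
\begin{equation*}
    \frac{d}{dz}\log f(z) = -\frac{1}{2z} - \frac{d-1}{2(1-z)},
\end{equation*}
and for every $d \ge 1$ and $z \in (0,1)$ both summands are non-positive, with the first strictly negative. Hence $\log f$, and therefore $f$, is strictly decreasing on $(0,1)$. Since $F' = f$ is non-increasing on the open interval, $F$ is concave there in the standard sense (equivalently, $F'' = f' \le 0$ as a distribution).

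To extend concavity to the closed support $[0,1]$, I would invoke continuity of $F$ at the endpoints (with $F(0)=0$ and $F(1)=1$), so that the midpoint inequality defining concavity passes to the boundary. The only delicate point is that $f(z) \to \infty$ as $z \downarrow 0$ since $\alpha = 1/2 < 1$; however, this does not affect concavity of $F$ because $F$ itself remains finite and continuous at $0$, and the inequality $F(\lambda x+(1-\lambda)y) \ge \lambda F(x)+(1-\lambda)F(y)$ is preserved under the limit $x \downarrow 0$. I expect this boundary issue to be the only subtle step; the interior computation is a one-line check on the sign of $(\log f)'$.
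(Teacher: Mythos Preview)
Your proof is correct and follows essentially the same approach as the paper: both compute the logarithmic derivative of the Beta density, obtain $(\log f)'(z) = -\tfrac{1}{2z} - \tfrac{d-1}{2(1-z)} < 0$, and conclude that $f$ is decreasing and hence $F$ concave. Your additional remark about extending concavity to the endpoint $z=0$ by continuity is a nice touch that the paper omits.
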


\begin{proof}
    Let $F(z) = \P(Z \le z)$. Then, $F'(z) =: f(z) > 0$ for all $z>0$, as $f$ is a density, and so $F$ is non-decreasing.
    We calculate the derivative of the density function by differentiating its logarithm.

    \begin{align*}
        f(z) & = z^{-\frac{1}{2}}(1-z)^{\frac{d-1}{2}}, \\
        \log\left(f(z)\right) &= -\frac{1}{2} \log z +  \frac{d-1}{2} \log(1-z), \\
        \log\left(f(z)\right)' &= -\frac{1}{2z} - \frac{d-1}{2(1-z)}<0.
    \end{align*}
    Then, for all $0<z<1$, $f'(z) = \log\left(f(z)\right)' f(z) < 0$, and so $F$ is concave.
    \Cref{fig:cdfBeta} illustrates the CDF across various values of parameter $d>1$.

    \begin{figure}
        \centering
        \includegraphics[width=0.5\linewidth]{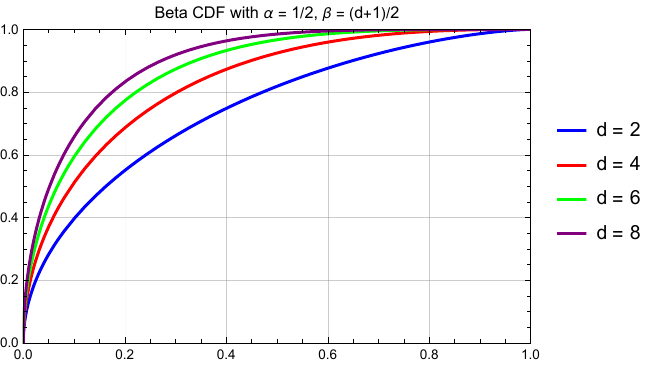}
        \caption{The CDF of $Beta(\frac{d+1}{2},\frac{1}{2})$ for various values of $d$.}
        \label{fig:cdfBeta}
    \end{figure}
\end{proof}

To continue in our analysis, we will need to show that the $\tau$ for which we are evaluating stability is bounded away from one. Concretely, we wish to evaluate at $\tau < (1+\tau\opt)/2 < 1$ for stability purposes.
\begin{lemma}\label{lem:tauLessOne}
    Under \Cref{assum:density} $\tau\opt < 1$.
\end{lemma}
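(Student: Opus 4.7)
The plan is to show that $\perr(\theta\opt,P,\tau)$ can be driven strictly below $\alpha$ at some $\tau$ strictly less than $1$, which by the definition of $\tau\opt$ as the \emph{minimum} feasible threshold would immediately give $\tau\opt<1$. The main mechanism is continuity of $\perr$ in $\tau$, made possible by the density bound in \Cref{assum:density}.

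First, I would observe that by Cauchy--Schwarz, $|x^\top\theta\opt|\le \|x\|\,\|\theta\opt\|\le 1$ for every context $x$ in the unit ball, so the indicator $\mathds{1}\{|x^\top\theta\opt|>\tau\}$ in the definition of $\perr$ can only be nonzero when $\tau<1$. Next, the locus $\{x : \|x\|\le 1,\; |x^\top\theta\opt|=1\}$ is just the two points $\pm\theta\opt$, a Lebesgue null set; by \Cref{assum:density}, $P$ is absolutely continuous with respect to Lebesgue measure with density at most $M$, so this set is also $P$-null.

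Hence, as $\tau\nearrow 1$, $\mathds{1}\{|x^\top\theta\opt|>\tau\}\to 0$ for $P$-almost every $x$, and since the integrand $(1+\exp(|x^\top\theta\opt|))^{-1}\mathds{1}\{|x^\top\theta\opt|>\tau\}$ is uniformly bounded by $1/2$, dominated convergence yields
\begin{equation*}
    \lim_{\tau\to 1^-}\perr(\theta\opt,P,\tau)=0.
\end{equation*}
Because the problem is non-degenerate in the sense that $\alpha>0$ (otherwise the learner is forced to test every patient and the baseline is trivial), there exists $\tau_0<1$ with $\perr(\theta\opt,P,\tau_0)\le\alpha$. By the definition of $\tau\opt$ in \eqref{eq:tau_defn} as the minimum feasible threshold, we conclude $\tau\opt\le\tau_0<1$.

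The only subtle point --- and the reason \Cref{assum:density} is invoked rather than something weaker --- is ruling out that $P$ places positive mass on the equatorial set $\{|x^\top\theta\opt|=1\}$; without absolute continuity of $P$ (a consequence of the upper bound $M$ on the density), $\perr$ could fail to be continuous at $\tau=1$ and the argument would break. No other difficulty arises; the proof is essentially a one-line dominated convergence argument once the measure-zero observation is made.
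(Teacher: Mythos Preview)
Your proof is correct and shares the same core mechanism as the paper's: both show via dominated convergence that $\perr(\theta\opt,P,\tau)\to 0$ as $\tau\to 1$, and then use $\alpha>0$ (which the paper also invokes implicitly when it writes ``$\perr(\theta\opt,P,1)=0<\alpha$'') to conclude that some $\tau_0<1$ is already feasible. Your argument is in fact more economical: you go straight to the limit at $\tau=1^-$, observing that the obstruction set $\{|x^\top\theta\opt|=1\}=\{\pm\theta\opt\}$ is Lebesgue null and hence $P$-null by \Cref{assum:density}.

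The paper's proof takes a longer route: it performs an orthogonal change of variables to reduce to the last coordinate, then establishes that $\perr(\theta\opt,P,\cdot)$ is continuous at \emph{every} $\tau\in[0,1]$ and \emph{strictly} decreasing. These stronger properties are not needed for the lemma as stated, but they are reused later in the paper (e.g., in the proof of \Cref{lem:safeTauRange}, which cites ``see the proof of \Cref{lem:tauLessOne}'' for exactly these facts). So your version is a cleaner proof of the lemma itself, while the paper's version is doing double duty by establishing auxiliary regularity of $\perr$ for downstream use.
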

\begin{proof}
$\|\theta\opt\| = 1$, and $\|X\|_2 \le 1$ a.s., and so $\abs{\dotp{X}{\theta\opt}} \le 1$.

Recall that $\perr$ is defined as,
    \begin{equation*}
         \perr(\theta,P,\tau) = \int (1+\exp(|x^\top \theta|))^{-1} \mathds{1}\left\{|x^\top \theta| > \tau\right\} P(dx).
    \end{equation*}

We will show that $\perr(\cdot)$ is continuous in $\tau$, that is for any $\tau_0 \in [0,1]$
\begin{align*}
    \lim_{\tau \rightarrow \tau_0} \perr(\theta, P, \tau) &=  \perr(\theta, P, \tau_0).
\end{align*}

We will apply \Cref{lem:orthogonalTransformation} to compute the integral 

$$\int (1+\exp(|x^\top \theta|))^{-1} \mathds{1}\left\{|x^\top \theta| > \tau\right\} P(dx),$$

for $\theta, [0,0,\dots,1]^\top$.
Let $S$ the orthogonal matrix such that $\theta = S\cdot[0,0,\dots,1]^\top$.

For any $x$ let $u = S\cdot x$, and $u_i$ its i-th coordinate, we can write $x^\top \theta$ as 

\begin{align*}
    x^\top \theta &= x^\top S^\top S \theta \\
    &= (Sx)^\top [0,0,\dots,1]^\top \\
    &= u_d.
\end{align*}

The inequality $|x^\top \theta| > \tau$ can be written as

\begin{align*}
    |x^\top \theta| &> \tau \\
    |x^\top S^\top S \theta| &> \tau \\
    |u_d| > \tau.
\end{align*}

By the change of variable $u \mapsto Sx$, we have that

\begin{align*}
    \norm{Sx}_2 = \norm{x}_2 \le 1 &\iff \norm{u}_2 \le 1 \\
    dx &= | \det S^\top | du = du.
\end{align*}

Then, we have that 

\begin{align*}
    \int_{\norm{x}_2 \le 1} (1+\exp(|x^\top \theta|))^{-1} \mathds{1}\left\{|x^\top \theta| > \tau\right\} P(dx) &= \int_{\norm{u}_2 \le 1} (1+\exp(|u_d|))^{-1} \mathds{1}\left\{|u_d| > \tau\right\} P(S^\top u)du
\end{align*}

Now, to prove continuity we fix a sequence $\tau_n \rightarrow \tau$ for an arbitrary value of $\tau$. We must prove now 

\begin{equation*}
    \lim_{n \rightarrow \infty} \int_{\norm{u}_2 \le 1} (1+\exp(|u_d|))^{-1} \mathds{1}\left\{|u_d| > \tau_n\right\} P(S^\top u)du = \int_{\norm{u}_2 \le 1} (1+\exp(|u_d|))^{-1} \mathds{1}\left\{|u_d| > \tau\right\} P(S^\top u)du.
\end{equation*}

As $\tau_n \rightarrow \tau$ we know that for every $\eps >0$ there exists $N(\eps) \in \mathbb{N}$ such that for all $n \ge N(\eps)$ it holds that $\abs{\tau_n -\tau} < \eps$.
We will use the dominated convergence theorem (Theorem 2.24 \citet{folland1999real}). Let 
$$g_n(u) :=  (1+\exp(u_d))^{-1} \mathds{1}\left\{|u_d| > \tau_n\right\} P(S^\top u),$$  $$g(u):= (1+\exp(u_d))^{-1} \mathds{1}\left\{|u_d| > \tau\right\} P(S^\top u).$$
We will prove first that $g_n(u) \rightarrow g(u)$ almost everywhere.
Equivalently we can prove that $\mathds{1}\left\{|u_d| > \tau_n\right\} \rightarrow \mathds{1}\left\{|u_d| > \tau\right\}$ almost everywhere.
We will consider three cases for the range of values of $u_d$.

Consider three cases for the fixed real number \(|u_d|\).

\textbf{Case 1:} \(|u_d|>\tau\).  
Let \(\varepsilon=\tfrac12(\abs{u_d} - \tau)>0\). For all \(n\ge N(\eps)\) such that \(|\tau_n-\tau|<\varepsilon\) we have
\[
\tau_n \le \tau+\varepsilon < \tau + \tfrac12(\abs{u_d} - \tau) = \tfrac12(\tau+|u_d|)<|u_d|,
\]
so \(|u_d|>\tau_n\) and therefore \(g_n(u)=1\). Hence \(g_n(u) = 1=g(u), \forall n\ge N(\eps)\).

\textbf{Case 2:} \(|u_d|<\tau\).  
Let \(\varepsilon=\tfrac12(\tau-|u_d|)>0\). For all sufficiently large \(n\ge N(\eps)\) with \(|\tau_n-\tau|<\varepsilon\) we get
\[
\tau_n \ge \tau-\varepsilon > \tau-\tfrac12(\tau-|u_d|) = \tfrac12(\tau+|u_d|) > |u_d|,
\]
so \(|u_d|\le \tau_n\) and \(g_n(u)=0\). Hence \(g_n(u)= 0=g(u)\).

\textbf{Case 3:} \(|u_d| = \tau\).  
For the third case an alternating sequence would not converge but it does not matter as the set $\{u \in \CB(0,1) : \abs{u_d} = \tau\}$ has measure zero under P.

As a result now we proved that $g_n(u) \rightarrow g(u)$ almost everywhere.
Moreover, $0\le g_n(u) \le \frac{1}{2}$ for all $n \in \mathbb{N}$ for every $u$.
By applying the dominated convergence theorem, we get that $\perr(\cdot)$ is continuous at $\tau$;

\begin{align*}
     \lim_{n \rightarrow \infty}\int_{\norm{u}_2 \le 1} g_n(u) du &= \int_{\norm{u}_2 \le 1} g(u) du \\
     \lim_{n \rightarrow \infty} \int_{\norm{u}_2 \le 1} (1+\exp(|u_d|))^{-1} \mathds{1}\left\{|u_d| > \tau_n\right\} P(S^\top u)du &= \int_{\norm{u}_2 \le 1} (1+\exp(|u_d|))^{-1} \mathds{1}\left\{|u_d| > \tau\right\} P(S^\top u)du.
\end{align*}

Now, we will show that $\perr(\cdot)$ is strictly decreasing in $\tau$.
Let $P_d(S^\top u)$ the marginal distribution at the d-th coordinate.
Then for $\tau_1 < \tau_2$

\begin{align*}
    \perr(\theta, P,\tau_1) - \perr(\theta, P,\tau_2) &= \int_{-1}^{1} (1+\exp(|u_d|))^{-1} \left( \mathds{1}\left\{|u_d| > \tau_1\right\} - \mathds{1}\left\{|u_d| > \tau_2\right\} \right)P_d(S^\top u)du_d \\
    &= \int_{-\tau_2}^{-\tau_1} (1+\exp(|u_d|))^{-1} P_d(S^\top u)du_d + \int_{\tau_1}^{\tau_2} (1+\exp(|u_d|))^{-1} P_d(S^\top u)du_d\\
    &>0
\end{align*}
where we have strict inequality as $P_d(S^\top u) > 0$ by \Cref{assum:density}.

This concludes the proof that $\perr(\cdot)$ is strictly decreasing as a function of $\tau$.

Since $x^\top \theta \leq 1$ for all $x$, as $\norm{x}_2\le1, \norm{\theta}_2$=1. It follows that $\text{indicator}(|x^\top \theta| > \tau)  = 0$ and therefore that $\perr(\theta\opt,P,1) = 0$.

Finally, since $\perr(\theta\opt,P,1) = 0 < \alpha$, and $\perr(\theta\opt,P,\tau)$ is a strictly monotone (decreasing) and continuous function of $\tau$, we get that $\tau\opt<1$.

\end{proof}

Now, analyzing the Beta CDF by using concavity, monotonicity, and the fact that $F(0) = 0$ and $F(1) = 1$ (\Cref{lem:beta_cdf}) we will derive upper and lower bounds for the difference $F(\tau^2) - F\left((\tau-\lambda)^2\right)$.
As in our algorithm we design a sequence of threshold converging to the real one, one can imagine $\lambda$ as part of a sequence $\{\lambda_t\}$ that converges to zero.

\newconstant{CL}
\newconstant{CR}

\begin{lemma}\label{lem:betaCDF_stability}
    Under \Cref{assum:nontrivial}, for all $0< \lambda < \frac{\tau\opt}{2} <\tau < \frac{1+\tau\opt}{2}<1$ there exist functions $\useconstant{CL}(\tau\opt),\useconstant{CR}(\tau\opt,\lambda)$ such that it holds that; 
    \begin{align*}
        \useconstant{CL}(\tau\opt) \cdot \lambda &\le F\left(\tau^2\right) - F\left((\tau-\lambda)^2\right) \le \useconstant{CR}(\tau\opt,\lambda) \cdot \lambda,
    \end{align*}
    where $F(\cdot)$ denotes the CDF of the random variable $Z \sim Beta(\alpha, \beta)$ and $f$ its density.
    $\useconstant{CL}(\tau\opt),\useconstant{CR}(\tau\opt,\lambda)$ are defined as follows;
    \begin{align*}
        \useconstant{CL}(\tau\opt) &:= \frac{\tau\opt}{2}\left( 1 -F\left(\frac{(1+\tau\opt)^2}{4}\right) \right),\\
        \useconstant{CR}(\tau\opt,\lambda) &:= 2\frac{1}{(\frac{\tau\opt}{2}-\lambda)^2}.
    \end{align*}

\end{lemma}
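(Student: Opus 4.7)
The plan is to exploit two structural properties of $F$ that were established in \Cref{lem:beta_cdf}, namely that $F$ is non-decreasing and concave on $[0,1]$, together with the endpoint values $F(0)=0$ and $F(1)=1$ (which follow since $F$ is the CDF of a $[0,1]$-supported distribution). Concavity means $f = F'$ is non-increasing, and the endpoint values yield the two standard secant inequalities
\begin{equation*}
    f(a) \;\le\; \frac{F(a)-F(0)}{a-0} \;=\; \frac{F(a)}{a} \;\le\; \frac{1}{a},
    \qquad
    f(a) \;\ge\; \frac{F(1)-F(a)}{1-a} \;\ge\; 1 - F(a),
\end{equation*}
for any $a\in(0,1)$. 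Starting from the identity
\[
F(\tau^2) - F((\tau-\lambda)^2) \;=\; \int_{(\tau-\lambda)^2}^{\tau^2} f(z)\,dz,
\]
with integration length $\tau^2-(\tau-\lambda)^2 = \lambda(2\tau-\lambda)$, both directions reduce to sandwiching $f$ on this short interval by its values at the endpoints.

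For the \emph{upper} bound, since $f$ is non-increasing I would bound $f(z)\le f((\tau-\lambda)^2)$ on the interval, then apply the first secant inequality to get $f((\tau-\lambda)^2)\le 1/(\tau-\lambda)^2$. Using $\tau\le 1$ to write $\lambda(2\tau-\lambda)\le 2\lambda$ and the hypothesis $\tau > \tau^\star/2$ to write $\tau-\lambda > \tau^\star/2 - \lambda > 0$, I obtain
\[
F(\tau^2) - F((\tau-\lambda)^2) \;\le\; \frac{2\lambda}{(\tau^\star/2 - \lambda)^2} \;=\; \useconstant{CR}(\tau\opt,\lambda)\cdot\lambda .
\]

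For the \emph{lower} bound, I would bound $f(z)\ge f(\tau^2)$ on the interval, and then use monotonicity of $f$ together with $\tau<(1+\tau^\star)/2$ to get $f(\tau^2)\ge f((1+\tau^\star)^2/4)$. Applying the second secant inequality at $a=(1+\tau^\star)^2/4$, and crudely using $1-a\le 1$, yields $f((1+\tau^\star)^2/4)\ge 1-F((1+\tau^\star)^2/4)$. For the length factor I note that $\tau>\tau^\star/2$ and $\lambda<\tau^\star/2$ imply $2\tau-\lambda > \tau^\star/2$, so
\[
F(\tau^2) - F((\tau-\lambda)^2) \;\ge\; \frac{\tau^\star}{2}\bigl(1-F((1+\tau^\star)^2/4)\bigr)\cdot\lambda \;=\; \useconstant{CL}(\tau\opt)\cdot\lambda .
\]

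There is no single hard step here; the argument is essentially bookkeeping. The most delicate part is choosing the right anchor points for the two secant inequalities so that the constants come out $d$-independent: using $(\tau-\lambda)^2$ on the left (where the density singularity at $0$ is tamed by the hypothesis $\lambda<\tau^\star/2<\tau$) and $((1+\tau^\star)/2)^2$ on the right (which is bounded away from $1$ by \Cref{lem:tauLessOne}, ensuring $C_L(\tau^\star)>0$). Also, note that $C_L(\tau^\star)>0$ and $\tau^\star/2-\lambda>0$ follow exactly from the range conditions imposed in the statement, so the constants are well defined.
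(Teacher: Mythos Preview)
Your proof is correct and follows essentially the same approach as the paper: both arguments exploit the concavity of $F$ to sandwich the increment $F(\tau^2)-F((\tau-\lambda)^2)$ between the secant slopes on the adjacent intervals $[0,(\tau-\lambda)^2]$ and $[\tau^2,1]$, use $F(0)=0$, $F(1)=1$ and $\tau<(1+\tau^\star)/2$ to simplify those slopes, and bound the length $\lambda(2\tau-\lambda)$ via $\tau^\star/2<\tau<1$ and $\lambda<\tau^\star/2$. The only cosmetic difference is that the paper invokes the Mean Value Theorem on the three intervals to order $f(\xi_1)\ge f(\xi_2)\ge f(\xi_3)$, whereas you integrate $f$ directly and invoke the tangent-line inequalities $f(a)\le F(a)/a$ and $f(a)\ge (1-F(a))/(1-a)$; the resulting constants are identical.
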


\begin{proof}
    We apply the mean value theorem in the intervals $[0,(\tau-\lambda)^2], [(\tau-\lambda)^2, \tau^2], [\tau^2,1]$.

    By applying the mean value theorem to these intervals there exists $\xi_1 \in (0,(\tau-\lambda)^2), \xi_2 \in ((\tau-\lambda)^2, \tau^2), \xi_3 \in (\tau^2,1)$ such that 
    \begin{align*}
        \frac{F\left((\tau-\lambda)^2\right)-F(0)}{(\tau-\lambda)^2} &= F'(\xi_1)= f(\xi_1),\\
        \frac{F\left(\tau^2\right) - F\left((\tau-\lambda)^2\right)}{\tau^2-(\tau-\lambda)^2} &= F'(\xi_2)=f(\xi_2),\\
        \frac{F(1) - F\left(\tau^2\right)}{1-\tau^2} &= F'(\xi_3)=f(\xi_3).
    \end{align*}

    As $F(0) = 0, F(1) = 1$, $F'(x) = f(x)$ and $f'(x) < 0$ it holds that 
    \begin{equation*}
        f(\xi_1) \ge f(\xi_2) \ge f(\xi_3). 
    \end{equation*}
    We replace the values of $f(\xi_1), f(\xi_2), f(\xi_3)$;
    \begin{align}\label{eq:MVTbounds}
        \frac{F(1) - F\left(\tau^2\right)}{1-\tau^2}&\le \frac{F\left(\tau^2\right) - F\left((\tau-\lambda)^2\right)}{\tau^2-(\tau-\lambda)^2} \le  \frac{F\left((\tau-\lambda)^2\right)-F(0)}{(\tau-\lambda)^2}.
    \end{align}

    Using that $0< \lambda<\frac{\tau\opt}{2} < \tau$ and $0 <F(\cdot) \le 1$ we can upper bound $ \frac{F\left((\tau-\lambda)^2\right)-F(0)}{(\tau-\lambda)^2}$ as follows
    \begin{equation}\label{eq:FLB}
        \frac{F\left((\tau-\lambda)^2\right)-F(0)}{(\tau-\lambda)^2} \le \frac{1}{(\frac{\tau\opt}{2}-\lambda)^2}.
    \end{equation}

    As $F(\cdot)$ is increasing, $F(1) =1$ and $F(\tau^2) \leq F((1+\tau\opt)^2/4)$ since by assumption $\tau < \frac{1+\tau\opt}{2}$, we also have that
    \begin{equation}\label{eq:FUB}
          \frac{F(1) - F\left(\tau^2\right)}{1-\tau^2} \ge 1 -F\left(\frac{(1+\tau\opt)^2}{4}\right) .
    \end{equation}

    In order to derive an upper and lower bound for the middle term of \Cref{eq:MVTbounds}, it remains to upper and lower bound its denominator; $\tau^2 -(\tau-\lambda)^2 = 2\lambda\tau -\lambda^2$ as
    \begin{align}\label{eq:tauDiff}
       \lambda \frac{\tau\opt}{2} \overset{(i)}{\le} \tau^2 -(\tau-\lambda)^2 \overset{(ii)}{\le} 2\lambda.
    \end{align}
    For the lower bound (i) of \Cref{eq:tauDiff} we used the inequalities
    \begin{align*}
        2\lambda\tau - \lambda^2 = \lambda (2\tau -\lambda)
        \ge \lambda \tau
        \ge \lambda \frac{\tau\opt}{2},
    \end{align*}
    where the inequalities hold because $\lambda < \frac{\tau\opt}{2}  < \tau$.
    For the upper bound (ii) in \Cref{eq:tauDiff} we used that $\tau<1$. 

    By replacing \Cref{eq:FLB,eq:FUB,eq:tauDiff} into \Cref{eq:MVTbounds} we have that;

    \begin{align*}
        \frac{\tau\opt}{2}\left( 1 -F\left(\frac{(1+\tau\opt)^2}{4}\right) \right) \lambda&\le F\left(\tau^2\right) - F\left((\tau-\lambda)^2\right) \le  2\frac{1}{(\frac{\tau\opt}{2}-\lambda)^2}\lambda.
    \end{align*}

    Defining the functions $\useconstant{CL}(\tau\opt),\useconstant{CR}(\tau\opt,\lambda)$ as
    \begin{align*}
        \useconstant{CL}(\tau\opt) &:= \frac{\tau\opt}{2}\left( 1 -F\left(\frac{(1+\tau\opt)^2}{4}\right) \right),\\
        \useconstant{CR}(\tau\opt,\lambda) &:= 2\frac{1}{(\frac{\tau\opt}{2}-\lambda)^2}.
    \end{align*}
    we obtain the desired result. 
\end{proof}

With these results in place, we are able to upper and lower bound the volume in this spherical segment.

\begin{lemma}\label{lem:spherical_seg_prob_bound}
    Under \Cref{assum:density}, for all $0< \lambda < \frac{\tau\opt}{2} <\tau < \frac{1+\tau\opt}{2}<1$, we have that
    \begin{equation*}
        m \cdot V_d(1) \cdot \useconstant{CL}(\tau\opt) \cdot \lambda \le \P\left(\tau -\lambda < |X^\top \theta\opt|\le \tau \right) \le M \cdot V_d(1) \cdot \useconstant{CR}(\tau\opt,\lambda) \cdot \lambda.
    \end{equation*}
\end{lemma}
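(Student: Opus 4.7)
The plan is to write the probability as an integral of the density $P$ over the ``spherical slab'' $\{x \in \CB(0,1) : \tau - \lambda < |x^\top\theta\opt| \le \tau\}$, bound the density pointwise by the constants $m$ and $M$ from \Cref{assum:density}, and then evaluate the resulting volume via the difference of two spherical segments using the lemmas that precede it.

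Concretely, I will start from
\begin{equation*}
\P\bigl(\tau - \lambda < |X^\top\theta\opt| \le \tau\bigr)
= \int_{\CB(0,1)} \indctr{\tau-\lambda < |x^\top\theta\opt|\le \tau}\, P(x)\,dx.
\end{equation*}
Since $m \le P(x) \le M$ on the unit ball, this quantity is sandwiched by $m \cdot \mathrm{Vol}(S_{\tau,\lambda})$ from below and $M \cdot \mathrm{Vol}(S_{\tau,\lambda})$ from above, where $S_{\tau,\lambda} = \{x \in \CB(0,1) : \tau - \lambda < |x^\top\theta\opt| \le \tau\}$. Next I will decompose this set as the difference $\{|x^\top\theta\opt| \le \tau\} \setminus \{|x^\top\theta\opt| \le \tau - \lambda\}$, both of which are spherical segments of the unit ball.

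By \Cref{lem:spherical_segment}, the volumes of these two segments equal $V_d(1)\, I_{\tau^2}(\tfrac{1}{2}, \tfrac{d+1}{2})$ and $V_d(1)\, I_{(\tau-\lambda)^2}(\tfrac{1}{2}, \tfrac{d+1}{2})$ respectively, so
\begin{equation*}
\mathrm{Vol}(S_{\tau,\lambda}) = V_d(1)\,\bigl[F(\tau^2) - F((\tau-\lambda)^2)\bigr],
\end{equation*}
where $F$ is the CDF of $\mathrm{Beta}(\tfrac{1}{2}, \tfrac{d+1}{2})$. Now I apply \Cref{lem:betaCDF_stability}, whose hypotheses on $\tau, \lambda$ exactly match those of the present lemma, to bound this difference between $\useconstant{CL}(\tau\opt)\,\lambda$ and $\useconstant{CR}(\tau\opt,\lambda)\,\lambda$. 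Combining the density bounds with these volume bounds yields the claim.

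I do not expect any serious obstacle: this is essentially a one-step assembly of \Cref{lem:spherical_segment} (to convert the slab into a difference of Beta CDF values) and \Cref{lem:betaCDF_stability} (to control that difference linearly in $\lambda$), sandwiched by the pointwise density bounds $m \le P \le M$. The only minor care needed is checking that the parameter ranges $0 < \lambda < \tau\opt/2 < \tau < (1+\tau\opt)/2 < 1$ assumed by \Cref{lem:betaCDF_stability} are inherited here, which they are by hypothesis.
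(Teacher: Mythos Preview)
Your proposal is correct and follows essentially the same approach as the paper: sandwich the density by $m$ and $M$, express the slab volume as the difference of two spherical-segment volumes via \Cref{lem:spherical_segment}, and then invoke \Cref{lem:betaCDF_stability} to bound $F(\tau^2)-F((\tau-\lambda)^2)$ linearly in $\lambda$. The paper's proof proceeds in exactly this order, and your check that the parameter ranges match those of \Cref{lem:betaCDF_stability} is precisely the only care needed.
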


\begin{proof}
We first use that
    \begin{equation}\label{eq:probTauMinusLToL}
        \P\left(\tau -\lambda < |X^\top \theta\opt|\le \tau \right) = \int_{\norm{\Bx} \le 1}\left( \indctr{\abs{\dotp{\Bx}{\theta\opt}} \le \tau} - \indctr{\abs{\dotp{\Bx}{\theta\opt}} \le \tau -\lambda } \right)p(\Bx) d\Bx
    \end{equation}

    We can use the smoothness property of our distribution to sandwich \Cref{eq:probTauMinusLToL} as 

    \begin{align*}
        &m \int_{\norm{\Bx} \le 1}\left( \indctr{\abs{\dotp{\Bx}{\theta\opt}} \le \tau} - \indctr{\abs{\dotp{\Bx}{\theta\opt}} \le \tau-\lambda} \right) d\Bx \\
        &\le  \P\left(\tau < |X^\top \theta\opt|\le \tau + \lambda\right) \\
        &\le M \int_{\norm{\Bx} \le 1}\left( \indctr{\abs{\dotp{\Bx}{\theta\opt}} \le \tau} - \indctr{\abs{\dotp{\Bx}{\theta\opt}} \le \tau-\lambda} \right) d\Bx.
    \end{align*}

    Now, let $Z\sim Beta(\frac{1}{2},\frac{d+1}{2})$ and $F(\cdot)$ its CDF function, then, \Cref{lem:spherical_segment} allows us to write the integral as

    \begin{equation*}
        \int_{\norm{\Bx} \le 1}\left( \indctr{\abs{\dotp{\Bx}{\theta\opt}} \le \tau} - \indctr{\abs{\dotp{\Bx}{\theta\opt}} \le \tau-\lambda} \right) d\Bx = V_d(1) \left( F\left((\tau)^2\right) - F\left((\tau-\lambda)^2\right) \right),
    \end{equation*}

    and the previous equation becomes

    \begin{align*}
        &m V_d(1) \left( F\left((\tau)^2\right) - F\left((\tau-\lambda)^2\right) \right) \\
        &\le  \P\left(\tau < |X^\top \theta\opt|\le \tau + \lambda\right) \\
        &\le M V_d(1) \left( F\left((\tau)^2\right) - F\left((\tau-\lambda)^2\right) \right).
    \end{align*}

    Finally, we apply \Cref{lem:betaCDF_stability} to lower and upper bound $V_d(1) \left( F\left((\tau)^2\right) - F\left((\tau-\lambda)^2\right) \right)$ and conclude the proof.

    \begin{align*}
        m\cdot V_d(1) \cdot \useconstant{CL}(\tau\opt) \cdot \lambda \le \P\left(\tau-\lambda < |X^\top \theta\opt|\le \tau \right) \le M \cdot V_d(1) \cdot \useconstant{CR}(\tau\opt,\lambda) \cdot \lambda.
    \end{align*}
\end{proof}

Before proving \Cref{lem:tauopt_stability_alpha}, we first prove an auxiliary lemma to derive a range of $\gamma$ for which we can apply \Cref{lem:spherical_seg_prob_bound}, i.e. $\frac{\tau\opt}{2} <\tau < \frac{1+\tau\opt}{2}$.

\begin{lemma}\label{lem:safeTauRange}
     For any $0 < \gamma < \frac{m \cdot V_d(1) \cdot f(\frac{1+\tau\opt}{2})\cdot (1-\tau\opt)}{2(1+e)}$ it holds that\footnote{$f(\cdot)$ is the PDF of the random variable $Z \sim Beta(\alpha, \beta)$.} 
     \begin{equation*}
         \min \left\{\tau \in \left[\frac{\tau\opt}{2},1\right] : \perr(\theta\opt,P,\tau) \le \alpha-\gamma \right\} 
         = \min \left\{\tau \in \left[\frac{\tau\opt}{2},\frac{1+\tau\opt}{2}\right] : \perr(\theta\opt,P,\tau) \le \alpha-\gamma \right\}.
     \end{equation*}
\end{lemma}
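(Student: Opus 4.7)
My plan is to reduce the claim to a single threshold comparison and then lower-bound the decrease in $\perr$ between $\tau\opt$ and $(1+\tau\opt)/2$ by the density--volume method used throughout Appendix~\ref{app:stability}.

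First, observe that the containment $[\tau\opt/2, (1+\tau\opt)/2] \subseteq [\tau\opt/2, 1]$ already gives the $\ge$ direction of the equality, so only the $\le$ direction requires work. From the proof of \Cref{lem:tauLessOne}, the map $\tau \mapsto \perr(\theta\opt, P, \tau)$ is continuous and \emph{strictly decreasing} on $[0,1]$, so the sublevel set $\{\tau : \perr(\theta\opt, P, \tau) \le \alpha - \gamma\}$ is an interval of the form $[\tau_\gamma, 1]$. Hence it suffices to show $\tau_\gamma \le (1+\tau\opt)/2$, which by monotonicity is equivalent to
\[
    \perr\!\left(\theta\opt, P, \tfrac{1+\tau\opt}{2}\right) \;\le\; \alpha - \gamma.
\]

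Second, since $\perr(\theta\opt, P, \tau\opt) = \alpha$ by the definition of $\tau\opt$, the desired inequality rewrites as
\[
    \perr(\theta\opt, P, \tau\opt) - \perr\!\left(\theta\opt, P, \tfrac{1+\tau\opt}{2}\right) \;\ge\; \gamma.
\]
Expanding both terms with \eqref{eq:p_err_defn} collapses the left-hand side to
\[
    \int \bigl(1+\exp(|x^\top \theta\opt|)\bigr)^{-1}\, \mathds{1}\!\left\{\tau\opt < |x^\top \theta\opt| \le \tfrac{1+\tau\opt}{2}\right\} P(dx).
\]
Using $|x^\top \theta\opt| \le \|x\|\|\theta\opt\| \le 1$ to lower bound the logistic factor by $1/(1+e)$, and \Cref{assum:density} to replace $P(dx)$ by $m\,dx$, this is at least
\[
    \frac{m}{1+e}\, \mathrm{Vol}\!\left(\Bigl\{\|x\|\le 1:\, \tau\opt < |x^\top\theta\opt| \le \tfrac{1+\tau\opt}{2}\Bigr\}\right).
\]

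Third, I would evaluate the volume using \Cref{lem:spherical_segment}: the slab volume at level $\tau$ equals $V_d(1)\, F(\tau^2)$ with $F$ the CDF of $Z\sim\mathrm{Beta}(1/2,(d+1)/2)$, so the shell volume above is $V_d(1)\bigl[F((1+\tau\opt)^2/4) - F(\tau\opt{}^2)\bigr]$. Applying the mean value theorem and the fact from \Cref{lem:beta_cdf} that the density $f$ is decreasing—together with the observation $(1+\tau\opt)^2/4 \le (1+\tau\opt)/2$ since $(1+\tau\opt)/2 < 1$—yields
\[
    F\!\bigl((1+\tau\opt)^2/4\bigr) - F(\tau\opt{}^2) \;\ge\; f\!\left(\tfrac{1+\tau\opt}{2}\right) \cdot \bigl[(1+\tau\opt)^2/4 - \tau\opt{}^2\bigr].
\]
A direct algebraic simplification gives $(1+\tau\opt)^2/4 - \tau\opt{}^2 = (1-\tau\opt)(1+3\tau\opt)/4$, which is at least $(1-\tau\opt)/2$ in the relevant regime; chaining the inequalities produces exactly
\[
    \perr(\theta\opt, P, \tau\opt) - \perr\!\left(\theta\opt, P, \tfrac{1+\tau\opt}{2}\right) \;\ge\; \frac{m\, V_d(1)\, f\!\left(\tfrac{1+\tau\opt}{2}\right)(1-\tau\opt)}{2(1+e)},
\]
which exceeds $\gamma$ by hypothesis, completing the proof.

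The main obstacle is the careful bookkeeping in the last step: one must track the change of variable $t = s^2$ arising from the spherical-segment formula and invoke monotonicity of $f$ at the correct endpoint (not at the midpoint) to obtain a constant that matches the stated bound on $\gamma$. Everything else is a routine combination of continuity/monotonicity of $\perr$ (from \Cref{lem:tauLessOne}), the density bound $m$ (\Cref{assum:density}), and the Beta-CDF analysis already developed in \Cref{lem:spherical_segment,lem:beta_cdf}.
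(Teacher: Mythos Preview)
Your approach is essentially the paper's: reduce to showing $\perr(\theta\opt,P,\tau\opt)-\perr\bigl(\theta\opt,P,(1+\tau\opt)/2\bigr)\ge\gamma$, lower-bound the logistic factor by $1/(1+e)$, replace $P$ by the density floor $m$, express the remaining slab volume via \Cref{lem:spherical_segment}, and finish with the mean value theorem together with the monotonicity of the Beta density from \Cref{lem:beta_cdf}. The paper does exactly this sequence of steps.

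There is, however, a genuine gap in your last algebraic move. You claim that $(1-\tau\opt)(1+3\tau\opt)/4 \ge (1-\tau\opt)/2$ ``in the relevant regime,'' but this inequality is equivalent to $\tau\opt \ge 1/3$, which is nowhere assumed. For $\tau\opt<1/3$ your chain of inequalities only delivers a lower bound of $\frac{m\,V_d(1)\,f((1+\tau\opt)/2)(1-\tau\opt)(1+3\tau\opt)}{4(1+e)}$, which is strictly smaller than the threshold in the lemma's hypothesis, so the implication $\gamma<\text{bound}\Rightarrow\gamma\le\text{perr gap}$ no longer follows. The paper avoids this by applying the MVT directly on the interval $[\tau\opt,(1+\tau\opt)/2]$ and writing the slab measure as $F\bigl((1+\tau\opt)/2\bigr)-F(\tau\opt)$, i.e.\ without the squares that \Cref{lem:spherical_segment} actually produces; that shortcut yields exactly $f\bigl((1+\tau\opt)/2\bigr)\cdot(1-\tau\opt)/2$ and hence the stated constant. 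Your $\tau^2$ bookkeeping is in fact the rigorous reading of \Cref{lem:spherical_segment}, and it reveals that the constant in the lemma as stated is slightly optimistic for small $\tau\opt$; but to match the lemma as written you must either invoke the paper's (looser) computation or accept a factor-of-two weaker threshold on $\gamma$.
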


\begin{proof}
    To prove this, we show that for these values of $\gamma$ there exists a $\tau(\gamma)\in [\tau\opt,\frac{1+\tau\opt}{2}] \subset [\frac{\tau\opt}{2},\frac{1+\tau\opt}{2}]$ such that $\perr\left(\theta\opt, P,  \tau(\gamma) \right) = \alpha -\gamma$. Thus, 
    \begin{align*}
        \gamma &\overset{(a)}{=} \perr(\theta\opt,P,\tau\opt) - \perr\left(\theta\opt, P,  \tau(\gamma)\right)\\
        &\overset{(b)}{\le} \perr(\theta\opt,P,\tau\opt) - \perr\left(\theta\opt,P,\frac{1+\tau\opt}{2}\right).
    \end{align*}
    (a) uses that $\perr(\theta\opt,P,\tau)$ is strictly decreasing and continuous with respect to its third argument $\tau$ (see the proof of \Cref{lem:tauLessOne}), thus $\perr(\theta\opt,P,\tau\opt) = \alpha$, and in (b) the monotonicity of $\perr(\theta\opt,P,\tau)$.
    It now remains to find a lower bound for $$\perr(\theta\opt,P,\tau\opt) - \perr\left(\theta\opt,P,\frac{1+\tau\opt}{2}\right).$$
    
    We remind the reader that by definition of $\perr(\cdot)$
    \begin{equation*} 
    \perr(\theta,P,\tau) = \int (1+\exp(|x^\top \theta|))^{-1} \mathds{1}\left\{|x^\top \theta| > \tau\right\} P(dx).
    \end{equation*}

    \begin{tikzpicture}
    \begin{axis}[
        axis lines = middle,
        xlabel = {$\tau$},
        ylabel = {$\perr(\theta\opt,P,\tau)$},
        xlabel style = {at={(ticklabel* cs:1.05)}, anchor=west},
        ylabel style = {at={(ticklabel* cs:1.05)}, anchor=south},
        xmin = -4, xmax = 8.2,
        ymin = -0.2, ymax = 1.02,
        width = 10cm,
        height = 8cm,
        xticklabels = {},
        yticklabels = {},
    ]
    
    % Exponential decay function f(x) = e^(-x/2)
    \addplot[
        domain=0.2:6,
        samples=100,
        color=red,
        thick,
    ] {exp(-x/2)};
    
    % Mark key points
    \addplot[only marks, mark=*, mark size=2pt, color=black] coordinates {(2,0.36787944117144233)};
    \addplot[only marks, mark=*, mark size=2pt, color=black] coordinates {(4,0.1353352832366127)};
    
    % Add point labels
    \node[above right] at (axis cs:2,0.36787944117144233) {$({\tau\opt},\alpha)$};
    \node[above right] at (axis cs:4,0.1353352832366127) {$\left(\frac{1+{\tau\opt}}{2},\perr(\frac{1+\tau\opt}{2})\right)$};

    % Draw lines from new points to axes
    \draw[thin, black, dashed] (axis cs:2,0.368) -- (axis cs:2,0);
    \draw[thin, black, dashed] (axis cs:2,0.368) -- (axis cs:-0.5,0.368);
    \draw[thin, black, dashed] (axis cs:4,0.135) -- (axis cs:4,0);
    \draw[thin, black, dashed] (axis cs:4,0.135) -- (axis cs:-0.5,0.135);

    % Label coordinate values on axes
    \node[below] at (axis cs:2,-0.05) {$\tau\opt$};
    \node[below] at (axis cs:4,-0.05) {$\frac{1+\tau\opt}{2}$};
    \node[left] at (axis cs:-0.5,0.368) {$\alpha = \perr(\tau\opt)$};
    \node[left] at (axis cs:-0.5,0.25) {$\gamma \in $};
    \node[left] at (axis cs:-0.5,0.135) {$\perr(\frac{1+\tau\opt}{2})$};

    % Double-headed arrow showing the range
    \draw[<->, thick, blue] (axis cs:-0.5,0.135) -- (axis cs:-0.5,0.368);
    
    \end{axis}
    \end{tikzpicture}

    \begin{align*}
        \perr(\theta\opt,P,\tau\opt)-\perr(\theta\opt,P,(1+\tau\opt)/2) &= \int (1+\exp(|x^\top \theta|))^{-1} \left( \mathds{1}\left\{|x^\top \theta| > \tau\opt \right\} - \mathds{1}\left\{|x^\top \theta| > \frac{(1+\tau\opt)}{2} \right\}\right) P(dx)\\
        &\overset{(a)}{\ge} \frac{1}{1+e} \int \mathds{1}\left\{ \tau\opt \le |x^\top \theta| \le \frac{(1+\tau\opt)}{2} \right\} P(dx)\\
        &\overset{(b)}{\ge} \frac{m\cdot V_d(1)}{1+e} \int \mathds{1}\left\{ \tau\opt \le |x^\top \theta| \le \frac{(1+\tau\opt)}{2} \right\} \frac{1}{V_d(1)}dx\\
        &\overset{(c)}{=} \frac{m\cdot V_d(1)}{1+e} \left( F\left( \frac{(1+\tau\opt)}{2} \right) - F(\tau\opt) \right).
    \end{align*}
    
    where (a) comes from $|x^\top \theta| \le 1$, (b) from \Cref{assum:density} and (c) from \Cref{lem:spherical_segment} (recall that $F(\cdot)$ is the CDF of the random variable $Z \sim \text{Beta}(\alpha, \beta)$).
    To derive a lower bound for $F\left( \frac{(1+\tau\opt)}{2} \right) - F(\tau\opt)$ we will use the Mean Value Theorem as in \Cref{lem:betaCDF_stability} applied in $[\tau\opt, \frac{1+\tau\opt}{2}]$ for $F(\cdot)$.
    Then, there exists a $\xi \in (\tau\opt, \frac{1+\tau\opt}{2})$ such that 

    \begin{align*}
        \frac{F\left( \frac{(1+\tau\opt)}{2} \right) - F(\tau\opt)}{\frac{(1+\tau\opt)}{2} -\tau\opt} &= f(\xi) \ge f\left(\frac{1+\tau\opt}{2}\right),
    \end{align*}

    where $f(\cdot)$ is a decreasing function as we proved in \Cref{lem:betaCDF_stability}.

    Combining the above we get 

    \begin{align*}
        \perr(\theta\opt,P,\tau\opt)-\perr(\theta\opt,P,(1+\tau\opt)/2) &\ge \frac{m \cdot V_d(1) \cdot f(\frac{1+\tau\opt}{2})\cdot (1-\tau\opt)}{2(1+e)}.
    \end{align*}

    As a consequence for all $\gamma \in [0, \frac{m \cdot V_d(1) \cdot f(\frac{1+\tau\opt}{2})\cdot (1-\tau\opt)}{2(1+e)}]$ we know that 
    $$\perr(\theta\opt,P,(1+\tau\opt)/2) \le \alpha -\gamma,$$
    and 
    \begin{equation*}
         \min \left\{\tau \in \left[\frac{\tau\opt}{2},1\right] : \perr(\theta\opt,P,\tau) \le \alpha-\gamma \right\} 
         = \min \left\{\tau \in \left[\frac{\tau\opt}{2},\frac{1+\tau\opt}{2}\right] : \perr(\theta\opt,P,\tau) \le \alpha-\gamma \right\}.
     \end{equation*}
\end{proof}

\subsubsection{Proof of \Cref{lem:tauopt_stability_alpha}}\label{proof:tauopt_stability_alpha}

\lemTauoptStabilityAlpha*

\begin{proof}
    For arbitrary $\tau < 1$, we begin by studying the difference between $\perr$ evaluated at thresholds $\tau-\lambda$ and $\tau$.
    By applying \Cref{lem:spherical_seg_prob_bound}, for all $0< \lambda < \frac{\tau\opt}{2} <\tau < \frac{1+\tau\opt}{2}<1$ it is true that;

    \begin{align*}
        \perr(\theta\opt,P,\tau-\lambda) - \perr(\theta\opt,P,\tau)
        &= \int \left( 1+ \exp(\abs{\dotp{x}{\theta\opt}}) \right)^{-1} \indctr{\tau -\lambda < \abs{\dotp{x}{\theta\opt}} < \tau} P(dx)\\
        &\ge \int \frac{1}{1+e} \indctr{\tau -\lambda < \abs{\dotp{x}{\theta\opt}} < \tau} P(dx)\\
        &= \frac{1}{1+e} \P(\tau-\lambda < \abs{\dotp{x}{\theta\opt}} < \tau) \\
        &\ge  \frac{m}{1+e} \cdot V_d(1) \cdot \useconstant{CL}(\tau\opt) \cdot \lambda
        , \label{eq:perr_diff_bound} \numberthis
    \end{align*}

    \begin{align*}
        \tau\opt(\theta\opt,P,\alpha-\gamma) &= \min \{\tau \in [0,1] : \perr(\theta\opt,P,\tau) \le \alpha-\gamma \}\\
        &\overset{(a)}{\le} \min \left\{\tau \in \left[\frac{\tau\opt}{2},\frac{1+\tau\opt}{2}\right] : \perr(\theta\opt,P,\tau) \le \alpha-\gamma \right\}\\
        &\overset{(b)}{\le} \min \left\{\tau \in \left[\frac{\tau\opt}{2},\frac{1+\tau\opt}{2}\right]: \perr(\theta\opt,P,\tau - \lambda) \le \alpha -\gamma + \frac{m}{1+e} \cdot V_d(1) \cdot \useconstant{CL}(\tau\opt) \cdot  \lambda \right\} \\
        &\overset{(c)}{\le} \min \left\{\tau \in \left[\frac{\tau\opt}{2},\frac{1+\tau\opt}{2}\right]: \perr\left(\theta\opt,P,\tau - \frac{(1+e)\gamma}{ m \cdot V_d(1)\useconstant{CL}(\tau\opt)}\right) \le \alpha\right\} \\
        &\overset{(d)}{\le} \min \left\{\tau \in \left[\frac{\tau\opt}{2},\frac{1+\tau\opt}{2}\right]: \perr(\theta\opt,P,\tau) \le \alpha \right\} + \frac{(1+e)\gamma}{ m \cdot V_d(1)\useconstant{CL}(\tau\opt)}\\
        &= \tau\opt(\theta\opt,P,\alpha) + \frac{(1+e)\gamma}{ m \cdot V_d(1)\useconstant{CL}(\tau\opt)}.\\
    \end{align*}

    In (a) we used \Cref{lem:safeTauRange}, in (b) we leveraged the $\perr$ difference bound derived in \Cref{eq:perr_diff_bound},
    (c) follows from setting $\lambda = \frac{(1+e)\gamma}{ m \cdot V_d(1)\useconstant{CL}(\tau\opt)}$, and (d) from \Cref{lem:perrShift} by setting $x:= \frac{(1+e)\gamma}{ m \cdot V_d(1)}$.
    We observe that for $0<\gamma < \min \{\frac{\tau\opt\cdot m \cdot V_d(1) \cdot \useconstant{CL}(\tau\opt)}{2(1+e)},\frac{m \cdot V_d(1) \cdot f(\frac{1+\tau\opt}{2}) \cdot (1-\tau\opt)}{2(1+e)} \}$, we satisfy the condition of \Cref{lem:spherical_seg_prob_bound}. 
\end{proof}

\section{Safety analysis}

We begin by providing a sketch of the results proved in this section.
First, in \Cref{sec:tau_stability_safety} we prove \Cref{lem:stabilityInThetaL2}, which is an analogue of \Cref{lem:perr_stability_tau} but with $\ell_2$ error, to show that shifting from $\theta$ to $\theta_Q \in \CQ_\theta$ doesn't change $\tau$ much.
Then, we have the following two safety lemmas, which compare \SCOUT's performance with the optimal testing policy for confidence $\alpha_t$, i.e. $Z\opt_t = \mathds{1}\{\abs{\dotp{X_t}{\theta\opt}} \le \tau\opt(\theta\opt,P,\alpha_t\}$.
\begin{lemma}\label{lem:Z_t_pessimistic}
    The testing rule $Z_t$ defined in \Cref{alg:alg1} satisfies, conditioned on $G_{\perr}$ and $G_\theta$, ~$Z\opt_t = 1 \implies Z_t = 1$, i.e. $Z_t \ge Z\opt_t$ a.s.
\end{lemma}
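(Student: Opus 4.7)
The plan is to show that, under $G_\theta \cap G_{\perr}$, the hypothesis $|\langle X_t, \theta^\star \rangle| \le \tau^\star(\theta^\star, P, \alpha_t)$ (i.e.\ $Z_t^\star = 1$) forces $|\langle X_t, \theta^L_t \rangle| \le \tau_t$ (i.e.\ $Z_t = 1$). I would execute this in three steps, essentially reversing the pessimism engineered into the definition \eqref{eq:tau_t_defn} of $\tau_t$.

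First, I would bound the shift in the inner product that comes from replacing $\theta^\star$ by the estimator $\theta^L_t$. By the triangle inequality and H\"older in the $(V_t, V_t^{-1})$ norms,
$$|\langle X_t, \theta^L_t \rangle| \le |\langle X_t, \theta^\star \rangle| + \|X_t\|_{V_t^{-1}}\,\|\theta^L_t - \theta^\star\|_{V_t}.$$
On $G_\theta$ we have $\theta^\star \in \CC_t$, hence $\|\theta^L_t - \theta^\star\|_{V_t} \le B_t$; and $\|X_t\|_2 \le 1$ yields $\|X_t\|_{V_t^{-1}} \le 1/\sqrt{\lambda_{\min}^t}$. Combined with the hypothesis, this gives
$$|\langle X_t, \theta^L_t \rangle| \le \tau^\star(\theta^\star, P, \alpha_t) + B_t/\sqrt{\lambda_{\min}^t}.$$
It therefore suffices to prove $\tau^\star(\theta^\star, P, \alpha_t) + B_t/\sqrt{\lambda_{\min}^t} \le \tau_t$, which—given the form of $\tau_t$—amounts to
$$\tau^\star(\theta^\star, P, \alpha_t) \le \tau^\star\!\bigl(\theta^L_t, \hat{P}_t, \alpha_t - \zeta_t - 2B_t/\sqrt{\lambda_{\min}^t} - \eps_Q\bigr) + 2B_t/\sqrt{\lambda_{\min}^t} + \eps_Q.$$

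Second, I would invoke the pessimism relation \eqref{eq:pessimism_hat_star}, which holds on $G_\theta \cap G_{\perr}$: for any $\theta_Q \in \CQ_\theta \cap \CC_t$,
$$\tau^\star(\theta^\star, P, \alpha_t) \le \hat{\tau}(\theta_Q, \hat{P}_t, \alpha_t) = \tau^\star_Q\!\bigl(\theta_Q, \hat{P}_t, \alpha_t - \zeta_t - 2B_t/\sqrt{\lambda_{\min}^t}\bigr) + 2B_t/\sqrt{\lambda_{\min}^t}.$$
I would pick $\theta_Q$ to be the point of $\CQ_\theta$ closest to $\theta^L_t$. Since $\theta^L_t$ is the centre of $\CC_t$ and the $\CC_t$-radius is of order $B_t$ which dwarfs $\eps_Q = 1/t^2$, this $\theta_Q$ indeed lies in $\CQ_\theta \cap \CC_t$, and $\|\theta_Q - \theta^L_t\|_2 \le \eps_Q$.

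Third, I would close the gap between $\theta_Q$ and $\theta^L_t$ inside the threshold expression. Combining the quantization sandwich $\tau^\star_Q \le \tau^\star + \eps_Q$ from \eqref{eq:tau_opt-tau_opt_Q_sandwich} with an $\ell_2$-flavoured version of \Cref{lem:perr_stability_tau} (whose proof goes through verbatim after replacing $\|x\|_{V_t^{-1}}\|\theta-\theta'\|_{V_t}$ by $\|x\|_2\|\theta-\theta'\|_2 \le \eps_Q$, using $\|X_t\|_2 \le 1$) gives
$$\tau^\star_Q\!\bigl(\theta_Q, \hat{P}_t, \beta\bigr) \le \tau^\star\!\bigl(\theta^L_t, \hat{P}_t, \beta - \eps_Q\bigr) + \eps_Q, \qquad \beta := \alpha_t - \zeta_t - 2B_t/\sqrt{\lambda_{\min}^t}.$$
Plugging this back into the pessimism inequality recovers precisely the RHS that defines $\tau_t$, up to the additive $B_t/\sqrt{\lambda_{\min}^t}$ earmarked for step one, completing the proof.

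The main obstacle is purely the bookkeeping of slacks. The buffers $2B_t/\sqrt{\lambda_{\min}^t}$ (used twice: for H\"older in step one and for $\hat{\tau}$ in step two), $\zeta_t$ (for $P \to \hat P_t$), and $\eps_Q$ (for $\tau^\star_Q \leftrightarrow \tau^\star$ and the $\theta_Q \to \theta^L_t$ swap) are stacked so that each term in $\tau_t$ absorbs exactly one error source; keeping the chain of inequalities monotone in the correct direction—and verifying the side condition that $\theta_Q \in \CC_t$—is the delicate part. Everything else is triangle inequality, H\"older, and quoting the stability facts already established in the paper.
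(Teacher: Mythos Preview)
Your proposal is correct and follows essentially the same route as the paper's proof: both arguments (i) use H\"older in the $(V_t,V_t^{-1})$ norms together with $G_\theta$ to bound $|\langle X_t,\theta^L_t\rangle|-|\langle X_t,\theta^\star\rangle|$ by $B_t/\sqrt{\lambda_{\min}^t}$, (ii) invoke the pessimism relation \eqref{eq:pessimism_hat_star} for a quantized $\theta_Q$ near $\theta^L_t$, and (iii) use the $\ell_2$ stability of $\perr$ (what the paper states as \Cref{lem:stabilityInThetaL2}) to transfer the threshold from $\theta_Q$ back to $\theta^L_t$. The only difference is the order of presentation: the paper first lower-bounds $\tau_t$ and then handles the inner product, whereas you start from the inner product and work toward $\tau_t$; the chain of inequalities is the same.

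One small bookkeeping caveat: in your Step~3 the combination of the sandwich $\tau^\star_Q\le\tau^\star+\eps_Q$ with the $\ell_2$ stability actually produces a $2\eps_Q$ slack rather than $\eps_Q$ (one $\eps_Q$ from each), so the constants do not quite close against the $+\eps_Q$ in the definition of $\tau_t$. The paper's own proof has the same looseness at the analogous step. This is purely cosmetic (absorbed by adjusting the $\eps_Q$ budget in $\tau_t$) and does not affect the argument's validity or any rates.
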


This follows by the monotonicity of the threshold $\tau\opt$ with respect to $\alpha$ and by using a ``safer'' error tolerance $\alpha_t$ than $\alpha$.
We defer the proof to \Cref{proof:Z_t_pessimistic}. 

Another property of our testing rule is that when $G_{\perr}$ holds it makes no more errors than the baseline policy.
As formalized in the following lemma, \SCOUT's predictions are identical to those of the oracle policy when it does not test, ensuring its $(\alpha,\delta)$-safety.

\begin{lemma}\label{lem:prediction_imitation}
Let $\hat{Y}_t$ be the prediction of our policy, where $Y\opt_t$ is the prediction of the oracle baseline policy.
When $G_{\perr}$ and $G_\theta$ holds, and  $Z_t = 0$ (which implies that $Z\opt_t = 0$) then $\hat{Y}_t = \hat{Y}\opt_t$. 

\end{lemma}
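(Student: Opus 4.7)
The plan is to show that whenever \SCOUT declines to test, its sign on the inner product $\dotp{X_t}{\theta^L_t}$ cannot differ from the sign of $\dotp{X_t}{\theta\opt}$ that the oracle uses to predict, and hence both policies output the same label. I would organize the proof in three short steps.

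First, I would invoke \Cref{lem:Z_t_pessimistic} to observe that on $G_{\perr} \cap G_\theta$, the implication $Z_t = 0 \Rightarrow Z\opt_t = 0$ holds, so the oracle also declines to test at round $t$. By the baseline threshold rule displayed in \Cref{fig:eqn_and_threshold}, this means $\abs{\dotp{X_t}{\theta\opt}} > \tau\opt(\theta\opt,P,\alpha_t) \ge 0$, and therefore $\hat{Y}\opt_t = \mathds{1}\{\dotp{X_t}{\theta\opt} > 0\}$. On our side, \Cref{alg:alg1} sets $\hat{Y}_t = \mathds{1}\{\dotp{X_t}{\theta^L_t} > 0\}$. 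Thus it suffices to prove that $\sgn(\dotp{X_t}{\theta\opt}) = \sgn(\dotp{X_t}{\theta^L_t})$.

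Second, I would quantify the gap between $\dotp{X_t}{\theta^L_t}$ and $\dotp{X_t}{\theta\opt}$ using $G_\theta$. Since $\theta\opt \in \CC_t$ and $\CC_t$ is an ellipsoid centered at $\theta^L_t$ of radius $B_t$ in the $V_t$-norm, Hölder's inequality gives
\begin{equation*}
    \abs{\dotp{X_t}{\theta\opt - \theta^L_t}} \le \norm{X_t}_{V_t^{-1}} \, \norm{\theta\opt - \theta^L_t}_{V_t} \le \frac{B_t}{\sqrt{\lambda_{\min}^t}},
\end{equation*}
where I used $\norm{X_t} \le 1$ and $V_t \succeq \lambda_{\min}^t \, \identity{d}$. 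On $\{Z_t = 0\}$ the testing rule forces $\abs{\dotp{X_t}{\theta^L_t}} > \tau_t$, and by \eqref{eq:tau_t_defn} the threshold $\tau_t$ exceeds $3B_t/\sqrt{\lambda_{\min}^t} + \eps_Q$, which is strictly larger than $B_t/\sqrt{\lambda_{\min}^t}$. Consequently the perturbation between $\dotp{X_t}{\theta\opt}$ and $\dotp{X_t}{\theta^L_t}$ cannot overcome $\tau_t$, so the two inner products share the same sign.

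Combining the two steps yields $\hat{Y}_t = \mathds{1}\{\dotp{X_t}{\theta^L_t} > 0\} = \mathds{1}\{\dotp{X_t}{\theta\opt} > 0\} = \hat{Y}\opt_t$, as desired. I do not anticipate any serious obstacle: the sign-preservation mechanism is effectively baked into the design of $\tau_t$, whose explicit $3B_t/\sqrt{\lambda_{\min}^t}$ buffer was introduced so that, whenever the algorithm is confident enough to skip a test, the residual estimation error on $\theta\opt$ is too small to flip the predicted class. The only subtle point is the use of \Cref{lem:Z_t_pessimistic} to collapse the oracle's three-case prediction rule into the binary sign indicator; this is exactly why both $G_\theta$ and $G_{\perr}$ are needed in the hypothesis.
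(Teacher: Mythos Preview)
Your proposal is correct and follows essentially the same approach as the paper: bound $\abs{\dotp{X_t}{\theta\opt-\theta^L_t}}$ by $B_t/\sqrt{\lambda_{\min}^t}$ via H\"older on $G_\theta$, then argue that the buffer built into $\tau_t$ absorbs this perturbation so the sign cannot flip. The only cosmetic difference is in how the lower bound on $\tau_t$ is extracted: the paper routes through the pessimism inequality $\tau_t \ge \tau\opt(\theta\opt,P,\alpha_t)+B_t/\sqrt{\lambda_{\min}^t}$ (which requires $G_{\perr}$ via \Cref{eq:pessimism_hat_star}) and concludes the slightly stronger fact that $\dotp{X_t}{\theta}$ exceeds $\tau\opt(\theta\opt,P,\alpha_t)$ for every $\theta\in\CC_t$, whereas you read off $\tau_t \ge 3B_t/\sqrt{\lambda_{\min}^t}$ directly from \eqref{eq:tau_t_defn} using $\tau\opt(\cdot)\ge 0$. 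Your route is marginally more elementary for this lemma alone, though the paper's intermediate inequality is reused elsewhere.
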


To show the previous lemma, we use the fact that, on the good event, when we do not test, all the inner products $\dotp{X_t}{\theta\opt}$ have the same sign.
We defer the proof to \Cref{proof:prediction_imitation}.

More formally, we define the Bernoulli random variable $\xi_t = \indctr{\hat{Y}_t \neq Y_t}$, that denotes whether the algorithm made a mistake at round $t$, and $\xi\opt_t = \indctr{\hat{Y}\opt_t \neq Y_t}$  respectively for the baseline policy.
When the algorithm tests (i.e. $Z_t = 1$) then we observe the label and it holds that $\xi_t = 0$.
Conditioning on the good event $G$, the random variables $\xi_t$ and $\xi\opt_t$ satisfy $\xi_t \le \xi\opt_t$ (formalized in \Cref{proof:policy_is_feasible}).
This implies a total error probability bound, stated in the following lemma.

\subsection{$\tau$ stability lemma} \label{sec:tau_stability_safety}

The safety analysis requires the application of \Cref{lem:tau_q_upperNlower_bound} for $\theta := \theta^L_t$.
However, it is not guaranteed that $\theta^L_t \in \CQ_\theta$.
To surpass this technical detail, we use the stability of $\perr$ in $\theta$, similar to \Cref{lem:perr_stability_tau}, but expressing the result in the $\ell_2$ distance, the metric with respect to which the covering is defined.

\begin{lemma}\label{lem:stabilityInThetaL2}
    For all $\theta,\theta' \in \Theta$, $ \tau \ge \|\theta - \theta'\|_{2}$, and density $\rho(x)$ on $\CX$:
    \begin{align*}
          \perr(\theta,\rho,\tau) \le \ \perr \left(\theta',\rho,\tau- \|\theta - \theta'\|_{2}\right) + \|\theta - \theta'\|_{2}.
    \end{align*}
\end{lemma}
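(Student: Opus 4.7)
The plan is to mirror the proof of \Cref{lem:perr_stability_tau} almost verbatim, substituting the H\"older-type bound $|x^\top(\theta-\theta')| \le \|\theta-\theta'\|_{V_t}\|x\|_{V_t^{-1}}$ with a Cauchy--Schwarz bound in the Euclidean norm. The key observation is that since the context space $\CX$ is contained in the unit ball ($\|x\|_2 \le 1$), Cauchy--Schwarz yields
\begin{equation*}
|x^\top(\theta-\theta')| \le \|x\|_2 \, \|\theta-\theta'\|_2 \le \|\theta-\theta'\|_2,
\end{equation*}
so the role played by $\|x\|_{V_t^{-1}} \le 1/\sqrt{\lambda_{\min}^t}$ in the previous lemma is now played by the trivial bound $\|x\|_2 \le 1$.

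First, I would write $x^\top\theta = x^\top\theta' + x^\top(\theta-\theta')$ inside the definition of $\perr(\theta,\rho,\tau)$, and apply the reverse triangle inequality to obtain $|x^\top\theta| \ge |x^\top\theta'| - |x^\top(\theta-\theta')|$. This lets me upper-bound both the logistic weight $(1+\exp(|x^\top\theta|))^{-1}$ (using monotonicity) and simultaneously enlarge the indicator set, so that the integrand becomes controlled by the corresponding quantity at $\theta'$ with a shifted threshold $\tau - |x^\top(\theta-\theta')|$.

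Next, using that $z \mapsto (1+\exp(z))^{-1}$ is $1/4$-Lipschitz (we only need the crude $1$-Lipschitz bound), I would split the integrand into a main term $(1+\exp(|x^\top\theta'|))^{-1}\mathds{1}\{|x^\top\theta'| > \tau - |x^\top(\theta-\theta')|\}$ plus an additive error $|x^\top(\theta-\theta')|$ times the same indicator. The main term, after enlarging the indicator set to $\{|x^\top\theta'| > \tau - \|\theta-\theta'\|_2\}$ via Cauchy--Schwarz and monotonicity in the threshold, is bounded by $\perr(\theta',\rho,\tau - \|\theta-\theta'\|_2)$. The error term is bounded by $\|\theta-\theta'\|_2 \cdot \mathbb{P}_\rho(\cdot) \le \|\theta-\theta'\|_2$, since any probability is at most $1$.

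I do not anticipate any real obstacle; the only minor subtlety is that the shifted threshold $\tau - \|\theta-\theta'\|_2$ must be non-negative for the shifted indicator to be meaningful, which is precisely guaranteed by the hypothesis $\tau \ge \|\theta-\theta'\|_2$. The entire proof is essentially a copy of the argument for \Cref{lem:perr_stability_tau} with the H\"older step replaced by Cauchy--Schwarz and $\|x\|_{V_t^{-1}}$ replaced by $\|x\|_2 \le 1$.
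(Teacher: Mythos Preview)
Your proposal is correct and follows essentially the same approach as the paper: the paper's proof is line-for-line identical to that of \Cref{lem:perr_stability_tau}, with the H\"older step $|x^\top(\theta-\theta')|\le \|\theta-\theta'\|_{V_t}\|x\|_{V_t^{-1}}$ replaced by Cauchy--Schwarz $|x^\top(\theta-\theta')|\le \|\theta-\theta'\|_2\|x\|_2$ and then using $\|x\|_2\le 1$ for $x\in\CX$. Your description of the steps (reverse triangle inequality, Lipschitz bound on the logistic weight, splitting into a main term and an additive error bounded by a probability times $\|\theta-\theta'\|_2$) matches the paper exactly.
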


\begin{proof}
    Here, we use $x$ as a dummy variable for integration:
    \begin{align*}
        \perr(\theta,\rho,\tau) &= \int (1+\exp(|x^\top \theta|))^{-1} \mathds{1}\left\{|x^\top \theta| > \tau\right\} \rho(dx) \\
        &= \int (1+\exp(|x^\top \theta' + x^\top (\theta - \theta')|))^{-1} \mathds{1}\left\{|x^\top \theta' + x^\top (\theta - \theta')| > \tau\right\} \rho(dx) \\
        &\le \int (1+\exp(|x^\top \theta' | -  |x^\top (\theta - \theta')|))^{-1} \mathds{1}\left\{|x^\top \theta'| > \tau - |x^\top (\theta - \theta')| \right\} \rho(dx) \\
        &\le \int \left((1+\exp(|x^\top \theta' |))^{-1} + |x^\top (\theta - \theta')|\right) \mathds{1}\left\{|x^\top \theta'| > \tau - |x^\top (\theta - \theta')| \right\}\rho(dx) \\
        &\le  \max_{x' \in \CX} \int \left((1+\exp(|x^\top \theta' |))^{-1} + |x'^\top (\theta - \theta')|\right) \mathds{1}\left\{|x^\top \theta'| > \tau - |x'^\top (\theta - \theta')| \right\}\rho(dx) \\
        &= \max_{x' \in \CX}\perr(\theta',\rho,\tau - |x'^\top (\theta - \theta')|) + \int |x^\top (\theta - \theta')| \mathds{1}\left\{|x^\top \theta'| > \tau - |x'^\top (\theta - \theta')| \right\} \rho(dx) \\
        &\le \max_{x' \in \CX} \perr(\theta',\rho,\tau- \|\theta - \theta'\|_{2}\| x' \|_{2}) + \| \theta - \theta' \|_{2} \| x' \|_{2} \P_{\rho}\left(|x^\top \theta'| > \tau - |x^\top (\theta - \theta')| \right) \\
        &\le \max_{x' \in \CX} \perr(\theta',\rho,\tau- \| \theta - \theta' \|_{2} \| x' \|_{2}) + \| \theta - \theta' \|_{2} \| x' \|_{2} \\
        &= \perr(\theta',\rho,\tau- \| \theta - \theta' \|_{2}) + \| \theta - \theta' \|_{2}
    \end{align*}
    The details of this proof are identical to those of \Cref{lem:perr_stability_tau}.
    We also make use that our contexts lie in the unit ball, i.e. $\norm{x}_2 \le 1$.

\end{proof}

Using the previous lemma we derive a similar expression to that of \Cref{lem:tau_q_upperNlower_bound}; 

\begin{align}\label{eq:tauStabilityL2}
    \tau\opt_Q(\theta_Q,\hat{P}_t,\alpha) &\ge \tau\opt_Q\left(\theta,\hat{P}_t,\alpha +\| \theta - \theta_Q \|_{2}\right) - \| \theta - \theta_Q \|_{2}. 
\end{align}

\subsection{Proof of \Cref{lem:Z_t_pessimistic}}\label{proof:Z_t_pessimistic}

\begin{proof}

Let $\tilde{\theta}^L_t \in Q_\theta$ such that $\norm{\tilde{\theta}^L_t  - \theta^L_t}_2 \le \eps_Q$, as $\theta^L_t$ lies in the interior of $\CC_t$.

Leveraging \Cref{lem:stabilityInThetaL2}, we relate $\theta^L_t$ to $\tilde{\theta}^L_t$ as (using the definition of \Cref{eq:tau_t_defn}), on the good events $G_{\perr}$ and $G_\theta$:

\begin{align*}
    \tau_t &= \tau\opt\left(\theta^L_t,\hat{P}_t,\alpha_t -\zeta_t -2B_t/\sqrt{ \lambda_{\min}^t} - \eps_Q\right) + 3B_t/\sqrt{ \lambda_{\min}^t} + \eps_Q\\
    &= \hat{\tau}\left(\theta^L_t,\hat{P}_t,\alpha_t - \eps_Q\right) + B_t/\sqrt{ \lambda_{\min}^t} + \eps_Q\\
    &\ge \hat{\tau}\left(\tilde{\theta}^L_t,\hat{P}_t,\alpha_t\right) + B_t/\sqrt{ \lambda_{\min}^t} \\
    &\ge \tau\opt\left(\theta\opt,P,\alpha_t\right) + B_t/\sqrt{ \lambda_{\min}^t} \\
    \numberthis
\end{align*}
Here, we used the monotonicity of $\tau\opt$ with respect to $\alpha$, in addition to \Cref{lem:tau_q_upperNlower_bound}.
Then, we upper bound the inner product:
\begin{align*}
    \abs{\dotp{X_t}{\theta^L_t}}
    \le \abs{\dotp{X_t}{\theta\opt}} + \|\theta^L_t - \theta\opt\|_{V_t} \|X_t\|_{V_t^{-1}}
    \le \abs{\dotp{X_t}{\theta\opt}} + B_t / \sqrt{\lambda_{\min}^t}
\end{align*}
By Holder.
Combining these together yields that, on $G_{\perr}$ and $G_\theta$,
\begin{equation}
    |\langle X_t, \theta\opt \rangle | \le \tau\opt\left(\theta\opt,P,\alpha_t\right) \quad \implies \quad \abs{\dotp{X_t}{\theta^L_t}} \le \tau_t .
\end{equation}
i.e. $Z\opt_t = 1 \implies Z_t=1$
\end{proof}

\subsection{Proof of \Cref{lem:prediction_imitation}}\label{proof:prediction_imitation}

\begin{proof}    
On $G_{\perr}$ and $G_\theta$, we have that $Z_t = 0$ implies that $\langle \theta, X_t \rangle$ has the same sign for all $\theta \in \CC_t$.
This is because, $Z_t=0$ only when:
\begin{equation*}
|\langle \theta^L_t , X_t \rangle | \ge \tau_t = \tau\opt\left(\theta^L_t,\hat{P}_t,\alpha_t - \zeta_t -2B_t / \sqrt{\lambda_{\min}^t} - \eps_Q\right) + 3B_t / \sqrt{\lambda_{\min}^t}+\eps_Q.
\end{equation*}
As before, we know that
\begin{align*}
    \tau_t &\ge \tau\opt\left(\theta\opt,P,\alpha_t\right) + B_t/\sqrt{\lambda_{\min}^t}
\end{align*}

We also have that for all $\theta \in \CC_t$:
\begin{align*}
    |\langle \theta^L_t , X_t \rangle - \langle \theta , X_t \rangle| \le B_t/\sqrt{\lambda_{\min}^t}.
\end{align*}
Thus, if $|\langle \theta^L_t , X_t \rangle| \ge \tau_t$, and assuming without loss of generality that $\langle \theta^L_t , X_t \rangle>0$, then for all $\theta \in \CC_t$:
\begin{align*}
    0 &\le \langle \theta^L_t , X_t \rangle - \tau_t\\
    &\le \left(\langle \theta , X_t \rangle +B_t/\sqrt{\lambda_{\min}^t}\right) - \left(\tau\opt\left(\theta\opt,P,\alpha_t\right) + B_t/\sqrt{\lambda_{\min}^t} \right)\\
    &= \langle \theta , X_t \rangle - \tau\opt\left(\theta\opt,P,\alpha\right) \numberthis
\end{align*}
i.e. $\langle \theta , X_t \rangle \ge \tau\opt\left(\theta\opt,P,\alpha_t\right) > \tau\opt\left(\theta\opt,P,\alpha\right) > 0$ for all $\theta \in \CC_t$ on $G_{\perr}$ and $G_\theta$ (as $\alpha_t < \alpha$).
\end{proof}

\subsection{$(\alpha,\delta)$ safety (proof of \Cref{lem:policy_is_feasible})}\label{proof:policy_is_feasible}

To prove this lemma, we define the Bernoulli random variable $\xi_t = \indctr{\hat{Y}_t \neq Y_t}$, that denotes whether the algorithm made a mistake at round $t$, and $\xi\opt_t = \indctr{\hat{Y}\opt_t \neq Y_t}$ 
respectively for the baseline policy.
When the algorithm tests (i.e. $Z_t = 1$) then we observe the label and it holds that $\xi_t = 0$.
Conditioning on the good event $G$, we show that the random variables $\xi_t$ and $\xi\opt_t$ satisfy $\xi_t \le \xi\opt_t$.
This implies a total error probability bound.

\policyIsFeasible*

\begin{proof}
    We analyze the four possible outcomes of the binary random variables $(Z\opt_t,Z_t)$, under the good events $G_\theta$ and $G_{\perr}$.
    Recall that $\xi_t$ is whether our algorithm makes a mistake at time $t$, and $\xi\opt_t$ is whether the optimal baseline which tests at threshold $\tau\opt$ makes an error at time $t$.
    
    \textbf{Case 1: $(Z\opt_t,Z_t) = (1,1)$.} 
    In this case, both our policy and the oracle baseline observe the true label and $\xi_t = \xi\opt_t=0$, i.e. neither method makes an error.

    \textbf{Case 2: $(Z\opt_t,Z_t) = (1,0)$.} Under the good event $G$, by \Cref{lem:Z_t_pessimistic} this cannot occur.

    \textbf{Case 3: $(Z\opt_t,Z_t) = (0,1)$.}
    When, $Z\opt_t = 0$ and $Z_t = 1$, our policy tests and observes the true label while the optimal baseline predicts $\hat{Y}\opt_t$, in which case $0 = \xi_t \le \xi\opt_t$ a.s.

    \textbf{Case 4: $(Z\opt_t,Z_t) = (0,0)$.}
    When, $Z\opt_t = 0$ and $Z_t = 0$, from \Cref{lem:prediction_imitation} it holds that $\hat{Y}_t = \hat{Y}\opt_t$ a.s., and so $\xi_t = \xi\opt_t$ a.s. 
    
    Combining these 4 cases together, we have shown that $\xi_t \le \xi\opt_t$ a.s.
    Now, $ \xi\opt_t$ are independent binary random variables with $\E( \xi\opt_t) \le \alpha_t$, since the sequence $\alpha_t$ is decreasing.
    Then at any time $\bar{T} \le T$:
    \begin{align*}
    \P\left( \frac{1}{\bar{T}} \sum_{t=1}^{\bar{T}} \xi_t \ge \alpha \ \middle \vert \ G\right)
        & \le \P\left( \frac{1}{\bar{T}} \sum_{t=1}^{\bar{T}} \xi\opt_t \ge \alpha \ \middle \vert \ G \right)\\
        &\le \P\left( \frac{1}{\bar{T}} \sum_{t=1}^{\bar{T}} (\xi\opt_t - \E \xi\opt_t) \ge \alpha - \alpha_{\bar{T}} \ \middle \vert \ G \right)\\
        &\le \exp(-2\bar{T}(\alpha - \alpha_{\bar{T}})^2).
    \end{align*}
    Recall that
        \begin{equation*}
        \alpha_t = \alpha - \sqrt{\frac{\log(2t^2/\delta')}{2t}},
    \end{equation*}
    Thus:
    \begin{align*}
        \P\left( \bigcup_{\bar{T}=1}^T \left\{\frac{1}{\bar{T}} \sum_{t=1}^{\bar{T}} \xi_t \ge \alpha \right\} \ \middle \vert \ G\right)
        &\le \sum_{\bar{T}=1}^T \P\left( \frac{1}{\bar{T}} \sum_{t=1}^{\bar{T}} \xi_t \ge \alpha \ \middle \vert \ G\right)\\
        &\le \sum_{\bar{T}=1}^T \exp(-2\bar{T}(\alpha - \alpha_{\bar{T}})^2)\\
        &\le \sum_{t=1}^T \frac{\delta'}{2t^2}\\
        &\le \delta' \numberthis
    \end{align*}
     
\end{proof}

\section{Regret analysis} \label{sec:regret_analysis}

We begin by bounding the instantaneous regret at time $t>T_0$.

\begin{lemma} \label{lem:regret_bound}
    For every round $t > T_0$, conditioned on the good event $G$, the regret is bounded as:
    \begin{equation*}
        \E[Z_t-Z\opt_t | G] \le 
        M \cdot V_d(1) \cdot \useconstant{CR}(\tau\opt) \left( \frac{12\left( \zeta_t + 8 B_t/\sqrt{ p\opt t \lambda_0} \right)}{m\cdot V_d(1) \cdot \useconstant{CL}(\tau\opt)} +2\eps_Q + 28B_t/\sqrt{ p\opt t \lambda_0} \right). 
    \end{equation*}
\end{lemma}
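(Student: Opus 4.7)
The plan is to reduce $\E[Z_t - Z\opt_t \mid G]$ to the probability that $|\langle X_t, \theta\opt\rangle|$ lands in a thin band just above $\tau\opt$, then to bound the width of that band by chaining the stability results already proved, and finally to apply the spherical-segment bound \Cref{lem:spherical_seg_prob_bound}. The argument splits into (i) a reduction step using pessimism and Hölder, (ii) an algebraic chase to upper-bound $\tau_t$ in terms of $\tau\opt$, and (iii) conversion of the band probability into a density-weighted width.

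For (i), the pessimism result \Cref{lem:Z_t_pessimistic} gives $Z_t \ge Z\opt_t$ on $G$, so $Z_t - Z\opt_t = \mathds{1}\{Z_t=1,\ Z\opt_t=0\}$. When $Z_t=1$, $|\langle X_t,\theta^L_t\rangle|\le \tau_t$, and Hölder combined with $G_\theta$ yields
\[
|\langle X_t,\theta\opt\rangle| \le |\langle X_t,\theta^L_t\rangle| + \norm{\theta^L_t-\theta\opt}_{V_t}\norm{X_t}_{V_t^{-1}} \le \tau_t + B_t/\sqrt{\lambda_{\min}^t}.
\]
Combined with $Z\opt_t=0$ (i.e. $|\langle X_t,\theta\opt\rangle|>\tau\opt$), the joint event is contained in $\{\tau\opt < |\langle X_t,\theta\opt\rangle| \le \tau_t + B_t/\sqrt{\lambda_{\min}^t}\}$, so it suffices to bound its probability.

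For (ii), start from the definition of $\tau_t$ in \Cref{eq:tau_t_defn}. First use $\tau\opt\le\tau\opt_Q$ (from \Cref{eq:tau_opt-tau_opt_Q_sandwich}) to pass into the quantized world. Then replace $\theta^L_t$ by its nearest neighbor $\tilde\theta\in\CQ_\theta$ (so $\norm{\tilde\theta-\theta^L_t}_2\le\eps_Q$) via the $\ell_2$-stability \Cref{lem:stabilityInThetaL2}; this costs an extra $2\eps_Q$ in the threshold and an extra $\eps_Q$ in the $\alpha$-level. Finally apply the upper bound of \Cref{lem:tau_q_upperNlower_bound} with $\theta=\theta\opt$, observing that $\norm{\tilde\theta-\theta\opt}_{V_t}\le\norm{\theta^L_t-\theta\opt}_{V_t}+\eps_Q\sqrt{\lambda_{\max}(V_t)}\le B_t + O(t^{-3/2})\le 2B_t$ on $G_\theta$ for $t>T_0$. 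Bookkeeping the accumulated slacks yields
\[
\tau_t + B_t/\sqrt{\lambda_{\min}^t} - \tau\opt \;\le\; \bigl(\tau\opt(\theta\opt,P,\alpha-\tilde\gamma_t)-\tau\opt\bigr) + C_1\,B_t/\sqrt{\lambda_{\min}^t} + C_2\,\eps_Q,
\]
with $\tilde\gamma_t\preccurlyeq \zeta_t + B_t/\sqrt{\lambda_{\min}^t}$ (using that $\alpha-\alpha_t=\sqrt{\log(2t^2/\delta')/(2t)}$ is dominated by $\zeta_t$ up to a constant). \Cref{lem:tauopt_stability_alpha}, whose smallness hypothesis is satisfied for $t>T_0$, then converts the $\alpha$-offset into a threshold bump $(1+e)\tilde\gamma_t/\bigl(m\,V_d(1)\,\useconstant{CL}(\tau\opt)\bigr)$.

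For (iii), \Cref{lem:spherical_seg_prob_bound} applied with the resulting band width gives the $M\,V_d(1)\,\useconstant{CR}(\tau\opt)$ prefactor times that width. Finally, $G_\lambda\subseteq G$ provides $\lambda_{\min}^t\ge p\opt t\lambda_0/12$, so $1/\sqrt{\lambda_{\min}^t}\le\sqrt{12}/\sqrt{p\opt t\lambda_0}<4/\sqrt{p\opt t\lambda_0}$; substituting and regrouping constants produces the stated form. The main technical obstacle is step (ii): $\theta^L_t\notin\CQ_\theta$ in general, so \Cref{lem:tau_q_upperNlower_bound} cannot be invoked directly, and the nearest-neighbor fix forces us to carry both $\ell_2$ and $V_t$ discrepancies through the chain. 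The choice $\eps_Q=1/t^2$ is precisely what keeps the $\ell_2$-to-$V_t$ inflation $\eps_Q\sqrt{\lambda_{\max}(V_t)}\preccurlyeq t^{-3/2}$ negligible compared with $B_t$; the remainder is constant-tracking, matching each additive slack to one of the three terms in the claimed bound.
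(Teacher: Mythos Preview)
Your proposal is correct and follows essentially the same route as the paper. The paper organizes the chain as a lower bound on $c_t = |\langle X_t,\theta_t^L\rangle| - \tau_t$ rather than an upper bound on $\tau_t + B_t/\sqrt{\lambda_{\min}^t}$, but the two framings are algebraically equivalent and invoke the same lemmas (\Cref{lem:tau_q_upperNlower_bound}, \Cref{lem:stabilityInThetaL2}, \Cref{lem:tauopt_stability_alpha}, and \Cref{lem:spherical_seg_prob_bound}) in the same order, followed by the $G_\lambda$ substitution $\lambda_{\min}^t\ge p\opt t\lambda_0/12$. Your explicit handling of the $\theta_t^L\notin\CQ_\theta$ issue via $\eps_Q\sqrt{\lambda_{\max}(V_t)}=O(t^{-3/2})\ll B_t$ is precisely the paper's ``project $\theta_t^L$ onto $\CC_t\cap\CQ_\theta$'' step spelled out (the nearest $\CQ_\theta$-neighbor of $\theta_t^L$ automatically lies in $\CC_t$ for $t>T_0$, so both formulations pick the same $\theta_Q$).
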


\begin{proof}[Proof of \Cref{lem:regret_bound}.]
    For $t \le T_0$ we can bound each term of the regret by 1, i.e. $\E[Z_t - Z] \leq 1$.
    For $t > T_0$ this requires analyzing $\E[Z_t - Z]$, essentially upper bounding how often we test in excess of the optimal baseline.
    We test whenever $c_t = \abs{\dotp{X_t}{\theta_t^L}} -  \tau_t \le 0$.
    Thus, we need to lower bound $c_t$ to show that we do not perform too many excess tests.

    \begin{align*}
        c_t &= \abs{\dotp{X_t}{\theta_t^L}} -  \tau_t \\
        &= \abs{\dotp{X_t}{\theta_t^L}} - \tau\opt\left(\theta_t^L,\hat{P}_t,\alpha_t - \zeta_t - 2B_t/\sqrt{ \lambda_{\min}^t} -\eps_Q\right) - 3B_t/\sqrt{ \lambda_{\min}^t} -\eps_Q\\
        &\overset{(a)}{\ge} \abs{\dotp{X_t}{\theta_t^L}} - \tau\opt_Q\left(\theta_Q,\hat{P}_t,\alpha_t - 2\zeta_t - 4B_t/\sqrt{ \lambda_{\min}^t}\right) - 5B_t/\sqrt{ \lambda_{\min}^t} -\eps_Q\\
        &\overset{(b)}{\ge} \abs{\dotp{X_t}{\theta\opt}} -\tau\opt\left(\theta\opt,P,\alpha - 3\zeta_t - 6B_t/\sqrt{ \lambda_{\min}^t}\right) - 7B_t/\sqrt{ \lambda_{\min}^t}-2\eps_Q\\
        &\overset{(c)}{\ge} \abs{\dotp{X_t}{\theta\opt}} -\tau\opt\left(\theta\opt,P,\alpha\right)- \frac{3(1+e)\left( \zeta_t + 2 B_t/\sqrt{ \lambda_{\min}^t} \right)}{m\cdot V_d(1) \cdot \useconstant{CL}(\tau\opt)} -2\eps_Q - 7B_t/\sqrt{ \lambda_{\min}^t}\\
    \end{align*}
    a) comes from \Cref{lem:tau_q_upperNlower_bound,lem:stabilityInThetaL2} to analyze a quantized version of $\theta_t^L$. Concretely, we utilize $\theta_Q$ as the projection of $\theta_t^L$ onto $\CC_t \cap \Theta_Q$.
    (b) applies \Cref{lem:tau_q_upperNlower_bound} in the reverse direction, to get $\tau\opt$ evaluated at $\theta\opt$.
    We also use the fact that $\alpha_t \ge \alpha - \zeta_t$. Additionally, $\abs{\dotp{X_t}{\theta_t^L}} \ge \abs{\dotp{X_t}{\theta\opt}} - B_t/\sqrt{ \lambda_{\min}^t}$ on $G_{\perr}, G_\theta$.
    Then, in (c), we apply \Cref{lem:tauopt_stability_alpha}, where the condition is met for sufficiently large $T_0$ under $G$.

    \begin{align*}
        \E R_t &=\E[Z_t-Z | G]\\
        &= \P\left(\left\{c_t \le 0\right\} \cap\left\{|\langle X_t, \theta\opt \rangle| \ge \tau\opt\right\}| G\right)\\
        &\overset{a}{\le} \P\left(\tau\opt \le |\langle X_t, \theta\opt \rangle| \le \tau\opt +  \frac{3(1+e)\left( \zeta_t + 2 B_t/\sqrt{ \lambda_{\min}^t} \right)}{m\cdot V_d(1) \cdot \useconstant{CL}(\tau\opt)} +2\eps_Q + 7B_t/\sqrt{ \lambda_{\min}^t} \quad \Big| G\right)\\
        &\overset{b}{\le}  M \cdot V_d(1) \cdot \useconstant{CR}(\tau\opt) \left( \frac{12\left( \zeta_t + 2 B_t/\sqrt{ \lambda_{\min}^t} \right)}{m\cdot V_d(1) \cdot \useconstant{CL}(\tau\opt)} +2\eps_Q + 7B_t/\sqrt{ \lambda_{\min}^t} \right)\\
        &\overset{c}{\le} M \cdot V_d(1) \cdot \useconstant{CR}(\tau\opt) \left( \frac{12\left( \zeta_t + 8 B_t/\sqrt{ p\opt t \lambda_0} \right)}{m\cdot V_d(1) \cdot \useconstant{CL}(\tau\opt)} +2\eps_Q + 28B_t/\sqrt{ p\opt t \lambda_0} \right) \label{eq:regret_bound}\\
    \end{align*}
        a) follows by the upper bounding of the thresholding condition, and b) follows from \Cref{lem:spherical_seg_prob_bound}, and c) from $G$ that $\lambda_{\min}^t \ge p\opt t \lambda_0 / 12$.

    An important technical detail in applying \Cref{lem:spherical_seg_prob_bound} is that the upper and lower bounds of our spherical segment are sufficiently close to $\tau\opt$.
    When we apply this lemma, the perturbation is a constant multiple of $\zeta_t+ B_t/\sqrt{ p\opt t \lambda_0}$ which are of order $\CO(1/\sqrt{t})$ under $G$.
    Thus, for sufficiently large constant $T_0$, for all $t \ge T_0$, we are able to apply \Cref{lem:spherical_seg_prob_bound}.

\end{proof}

With this instantaneous regret, we are now able to sum across all time steps to compute our total regret.
We are then also able to prove the $(\alpha,\delta)$ safety of \SCOUT.

\regretUpperBound*

\begin{proof}[Proof of \Cref{thm:regret_upper_bound}.]

    We first show that \SCOUT satisfies $(\alpha,\delta)$ safety.
    Define $A$ as the event where \SCOUT is $(\alpha,\delta)$-safe.
    \begin{align*}
        \P(\bar{A}) &= \P(\bar{A} | G) \P(G) + \P(\bar{A} | \bar G) \P(\bar G)\\
        &\le \P(\bar{A} | G) +  \P(\bar G)\\
        & \le \delta' + 6\delta'\\
        &= \delta
    \end{align*}
    Here we used the law of total probability, and leveraged from \Cref{lem:gt_goodevent} that the good event happens with probability at least $1-6\delta'$, and from \Cref{lem:policy_is_feasible} that conditioned on $G$, \SCOUT is $(\alpha,\delta')$-safe.
    In the last line we plugged in that $\delta' = \delta/7$.

    Analyzing the number of excess tests, we use \Cref{lem:regret_bound} and condition on $G$, to find that with probability at least $1-\delta$:
    \begin{align*}
        \texttt{Regret}(T) &\leq T_0 + \sum_{t=T_0}^{T}\E R_t \\
        &= T_0 + 12\frac{M}{m}\frac{\useconstant{CR}(\tau\opt)}{\useconstant{CL}(\tau\opt)} \sum_{t=T_0}^{T}\left( \zeta_t + 8 B_t/\sqrt{ p\opt t \lambda_0} \right)\\
        &+ 2 M \cdot V_d(1) \cdot \useconstant{CR}(\tau\opt)\sum_{t=T_0}^{T}\eps_Q + 28 M \cdot V_d(1) \cdot \useconstant{CR}(\tau\opt)\sum_{t=T_0}^{T}B_t/\sqrt{ p\opt t \lambda_0} 
    \end{align*}

    Both $\eps_Q = 1/t^2$ and the $\zeta_t$ (\Cref{eq:zeta_t}) terms are dominated by the term: $\sum_{t=T_0}^{T}B_t/\sqrt{ p\opt t \lambda_0} $.
    Finally, for $B_t$ (from \Cref{eq:conf_ellipse_joint}), we can use from $G$ that we get enough samples, i.e. $N_\theta^t$ grows linearly in $t$.
    \begin{align*}
    B_t &= 2\kappa\left(1 + \sqrt{\log\left(\frac{1}{\delta}\right) + 2d\log\left(1+\frac{N_\theta^t}{\kappa d}\right)}\right)\\
    &\le 13\sqrt{ 2d\log\left(N_\theta^t/\delta\right)}\\
    \sum_{t=T_0}^T B_t (p\opt t \lambda_0 / 12)^{-1/2}
    &\le B_T \sum_{t=T_0}^T (p\opt t \lambda_0 / 12)^{-1/2}\\
    &\le 13\sqrt{ 2d\log\left(T/\delta\right)} \sum_{t=T_0}^T (p\opt t \lambda_0 / 12)^{-1/2}\\
    &\le 52\sqrt{ \frac{dT\log\left(T/\delta\right)}{p\opt \lambda_0}}
    \end{align*}

    Combining this all together we have that:
    \begin{align*}
        \texttt{Regret}(T) &\leq T_0 + \sum_{t=T_0}^{T}\E R_t \\
        &= T_0 + 12\frac{M}{m}\frac{\useconstant{CR}(\tau\opt)}{\useconstant{CL}(\tau\opt)} \sum_{t=T_0}^{T}\left( \zeta_t + 8 B_T/\sqrt{ p\opt t \lambda_0} \right)\\
        &+ M \cdot V_d(1) \cdot \useconstant{CR}(\tau\opt)\sum_{t=T_0}^{T}\frac{1}{t^2} + 28 M \cdot V_d(1) \cdot \useconstant{CR}(\tau\opt)B_T\sum_{t=T_0}^{T}1/\sqrt{ p\opt t \lambda_0}\\
        &\preceq T_0 + 4992 \frac{M}{m}\frac{\useconstant{CR}(\tau\opt)}{\useconstant{CL}(\tau\opt)}\sqrt{ \frac{dT\log\left(T/\delta\right)}{p\opt \lambda_0}} \numberthis
    \end{align*}

    We can further bound the regret by using the lower bound for $\lambda_0$ from \Cref{lem:min_eigenvalue}, 
    $$\lambda_0 \ge \frac{m (\tau\opt)^3 V_d(1)}{p\opt (d+2)}.$$
    Using that,we derive the following asymptotic lower bound 
    \begin{equation*}
        \texttt{Regret}(T) = \CO \left(d\sqrt{\frac{T\log\left(T/\delta\right)}{(\tau\opt)^{d+2}}}\right) 
    \end{equation*}
    We note that our dependence in the number of dimensions is of order $\tilde{\CO}(d\sqrt{T})$, same as in linear and logistic bandits (see \citet{lattimore2020bandit}).
    Then, we observe that the edge cases when $\tau\opt = 0$, that is equivalent to $p\opt=0$ characterize the problem's difficulty. 
    As we have already mentioned in the main text, for $\tau\opt \rightarrow 0$ implies that $p\opt = 0$, and we cannot collect enough samples to form our estimators.
\end{proof}

\section{Good event proof}\label{app:good_event_proof}

\subsection{Theta estimation set gets enough samples}
\begin{lemma} \label{lem:ntheta_t_event}
    On $G_\theta$ and $G_{\perr}$, $N_\theta^t \ge p\opt t/2  - \sqrt{\frac{\ln(\pi t^2/(3\delta'))}{2}}$ with probability at least $1-\delta'$.
\end{lemma}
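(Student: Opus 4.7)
The plan is to replace the random quantity $N_\theta^t$ with a lower bound determined by the oracle baseline, after which a standard Hoeffding-plus-union-bound argument gives the result.

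First, I would invoke \Cref{lem:Z_t_pessimistic}, which states that on $G_\theta \cap G_{\perr}$ the testing rule of \SCOUT is pessimistic: $Z_s \ge Z\opt_s$ almost surely for every $s$. Since $N_\theta^t$ only counts samples at even rounds where the algorithm tested, this immediately yields the almost sure inequality
\[
    N_\theta^t \;=\; \sum_{s \le t,\, s \text{ even}} Z_s \;\ge\; \sum_{s \le t,\, s \text{ even}} Z\opt_s \qquad \text{on } G_\theta \cap G_{\perr}.
\]
The key advantage of this reduction is that $Z\opt_s = \indctr{|\langle X_s, \theta\opt\rangle| \le \tau\opt(\theta\opt,P,\alpha_s)}$ is a deterministic function of the single i.i.d.\ draw $X_s$ alone, so for even $s$ the summands are independent $\{0,1\}$ random variables. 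Their means are controlled via $\alpha_s \le \alpha$ and the monotonicity of $\tau\opt(\theta\opt,P,\cdot)$ in its third argument, giving $\E[Z\opt_s] \ge \P(|\langle X_s,\theta\opt\rangle|\le \tau\opt) = p\opt$.

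Second, I would apply Hoeffding's inequality to this sum of $\lfloor t/2 \rfloor$ independent Bernoullis to conclude that for any fixed $t$,
\[
    \P\!\Bigl(\sum_{s \le t,\, s \text{ even}} Z\opt_s \;<\; p\opt \lfloor t/2 \rfloor - u\Bigr) \;\le\; \exp\!\bigl(-2u^2/\lfloor t/2 \rfloor\bigr),
\]
then solve for $u$ so that the right-hand side equals $6\delta'/(\pi^2 t^2)$. A union bound over $t \ge 1$, paired with $\sum_{t \ge 1} 1/t^2 = \pi^2/6$, produces a uniform high-probability lower bound of the advertised form (the precise constants inside the logarithm being a routine bookkeeping step, and $\lfloor t/2 \rfloor \ge t/2$ up to a constant absorbable into $T_0$).

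The main obstacle is really the first step: without \Cref{lem:Z_t_pessimistic}, $N_\theta^t$ would be a sum of adaptive, history-dependent indicators which is awkward to concentrate directly. The reduction to the oracle-driven $Z\opt_s$ is what decouples the sum into independent Bernoullis and makes Hoeffding immediately applicable; once that reduction is in place, everything else is mechanical.
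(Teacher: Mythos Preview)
Your proposal is correct and follows essentially the same route as the paper's own proof: reduce $N_\theta^t$ to the oracle sum $\sum_{s\le t,\ s\text{ even}} Z\opt_s$ via \Cref{lem:Z_t_pessimistic}, observe that these are independent $\{0,1\}$ variables with mean at least $p\opt$ (your explicit use of $\alpha_s\le\alpha$ and the monotonicity of $\tau\opt$ to get $\E[Z\opt_s]\ge p\opt$ is slightly more careful than the paper's presentation), then apply Hoeffding and a $1/t^2$ union bound.
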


\begin{proof}[Proof of \Cref{lem:ntheta_t_event}]
In \Cref{lem:Z_t_pessimistic} we proved that, with high probability, our policy tests whenever the optimal one does, when $G_\theta$ and $G_{\perr}$ hold.
This implies that $N_\theta^t \ge N_{OPT}^t$.

As we show, just considering the even time steps, the optimal baseline policy will collect at least $N_{OPT}^t \ge p\opt t/2  - \sqrt{\frac{\ln(\pi t^2/(3\delta'))}{2}}$ samples with high probability up to time $t$.
Using $Z\opt_t$ as whether the optimal thresholding rule would test at time $t$, we have that, on $G_{\perr}$ and $G_\theta$,
\begin{align*}
    N_{OPT}^t \ge \sum_{t=1}^{T//2} Z\opt_{2t}.
\end{align*}
This implies that:
\begin{align*}
    \P\left(N_{OPT}^T \le p\opt \lfloor T /2 \rfloor - \nu_T \right)
    &\le \P\left(\sum_{t=1}^{T//2} \left( Z\opt_{2t} - p\opt\right) \le - \nu_T \right)\\
    &\le \exp \left( -2 \nu_T^2 / \lfloor T/2 \rfloor  \right)\\
    &\le \frac{\delta' \pi^2}{6 t^2}
\end{align*}
by careful construction of $\nu_T$.

Since $\delta'$ is a constant (we simply require that $\delta' = \Omega(T^2 e^{-T})$), then, for some $T_0$, we have that for all $t\ge T_0$ with probability at least $1-\delta'$;
\begin{equation}\label{eq:Nopt_simplification}
    N_{\theta}^t \ge N_{OPT}^t \ge p\opt t / 3.
\end{equation}
\end{proof}

To show that $\P(G_{\lambda}) \geq 1 -\delta$ we will use a covering argument to derive a lower bound for the minimum covariance matrix.
Then, we will use \Cref{lem:ntheta_t_event} as a lower bound on the number of samples collected to construct the empirical covariance matrix.
Finally, we will union bound these two events to complete the proof.

\subsection{$\lambda_{\min}^t$ grows linearly in $t$}
\begin{lemma}\label{lemma:min_eigenv_lb}
Let $\delta \in (0,1)$. Consider a random $d\times d$ dimensional matrix valued process $\{A_t\}_{t=0}^{\infty}$ adapted to a filtration $\mathcal{F}_t = \sigma(A_k \mid k \leq t)$, where each $A_t \in \mathbb{R}^{d \times d}$ is symmetric ($A_t = A_t^\top$), positive semi-definite, satisfies $\|A_t\|_{\text{op}} \leq 1$ almost surely and such that there is a constant $\lambda_{0} > 0$ satisfying
\begin{equation*}
\P\left(    \lambda_{\mathrm{min}}( \mathbb{E}[ A_t | \mathcal{F}_{t-1} ] ) \geq \lambda_{0}~\forall t \in \mathbb{N} \right) \geq 1-\tilde{\delta}.
\end{equation*}
 Let $\lambda_{\min}^t := \lambda_{\min}\left(\sum_{s=0}^t A_s\right)$. Then, for $\eps > 0$, the following holds:
\[
\P\left\{
\lambda_{\min}^t \geq t(\lambda_{0} - 2\eps) - \sqrt{\frac{t}{2}\left(d\log\left(\frac{2}{\eps} + 1\right) + \log\left(\frac{4t^2}{\delta'}\right)\right)}~
\forall t \in \mathbb{N}  \right \} 
\geq 1 - \delta'.
\]
\end{lemma}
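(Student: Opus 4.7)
\medskip

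\noindent\textbf{Proof plan for \Cref{lemma:min_eigenv_lb}.}
The plan is to reduce the matrix statement to a scalar concentration argument for each direction on the unit sphere, then pay a small discretization cost to obtain a uniform bound. Write $M_t \coloneq \sum_{s=0}^{t} A_s$. Because $\lambda_{\min}^t = \inf_{\|v\|=1} v^\top M_t v$, it suffices to lower bound $v^\top M_t v$ uniformly over a fine enough net of the sphere and then control the error from moving off the net.

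First I would fix a single unit vector $v\in S^{d-1}$ and set $X_s \coloneq v^\top A_s v$. Since $A_s$ is symmetric PSD with $\|A_s\|_{\mathrm{op}}\le 1$ almost surely, one has $X_s \in [0,1]$. On the event stated in the hypothesis, $\E[X_s\mid \CF_{s-1}] = v^\top \E[A_s\mid \CF_{s-1}] v \ge \lambda_0$. The centered sequence $X_s - \E[X_s\mid \CF_{s-1}]$ is therefore a bounded martingale difference sequence, and Azuma--Hoeffding gives, for any $\delta''\in(0,1)$ and any fixed $t$,
\begin{equation*}
\P\!\left(v^\top M_t v \ge \lambda_0 t - \sqrt{\tfrac{t}{2}\log(1/\delta'')}\right) \ge 1-\delta''.
\end{equation*}

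Next I would union bound this scalar inequality over (i) a covering set and (ii) the time index. Let $\CN_\eps$ be an $\eps$-net of $S^{d-1}$ of cardinality at most $(2/\eps + 1)^d$ (as used already in the paper, e.g. in \Cref{lem:gt_goodevent}). Applying the scalar bound to each $v\in\CN_\eps$ with failure probability $\delta'' = \delta'/(|\CN_\eps|\cdot 2 t^2)$ and summing over $t\in\N$ (using $\sum_{t\ge 1} t^{-2}\le 2$) yields, with probability at least $1-\delta'$, that for all $t$ simultaneously
\begin{equation*}
\inf_{v\in\CN_\eps} v^\top M_t v \;\ge\; \lambda_0 t - \sqrt{\tfrac{t}{2}\bigl(d\log(2/\eps + 1) + \log(4t^2/\delta')\bigr)}.
\end{equation*}

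Finally I would transfer this bound from the net to the full sphere. For any $u\in S^{d-1}$, pick $v\in\CN_\eps$ with $\|u-v\|\le\eps$; then $|u^\top M_t u - v^\top M_t v|\le 2\eps\,\|M_t\|_{\mathrm{op}}\le 2\eps t$, because $\|M_t\|_{\mathrm{op}}\le \sum_s \|A_s\|_{\mathrm{op}}\le t$. Combining this with the previous display gives exactly the claimed bound
\begin{equation*}
\lambda_{\min}^t \;\ge\; t(\lambda_0 - 2\eps) - \sqrt{\tfrac{t}{2}\bigl(d\log(2/\eps + 1) + \log(4t^2/\delta')\bigr)},
\end{equation*}
uniformly in $t$, with probability at least $1-\delta'$. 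The main subtlety to be careful with is the $\eps t$ (not $\eps$) discretization cost, which is why the statement pays $2\eps$ inside the $\lambda_0-2\eps$ factor rather than an additive term; this in turn is what forces the net-size trade-off to sit inside the logarithm rather than outside. Everything else is routine Azuma plus a union bound.
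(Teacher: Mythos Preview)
Your proposal follows exactly the paper's route: Azuma--Hoeffding on the scalar $v^\top A_s v$, a union bound over an $\eps$-net of $S^{d-1}$ and over $t$, and then the $2\eps t$ discretization correction via $\|M_t\|_{\mathrm{op}}\le t$. The one bookkeeping point you glossed over is that the hypothesis event $\{\lambda_{\min}(\E[A_t\mid\CF_{t-1}])\ge\lambda_0\ \forall t\}$ only holds with probability $1-\tilde\delta$, so Azuma alone gives $v^\top M_t v \ge \sum_s \E[X_s\mid\CF_{s-1}] - \sqrt{\cdots}$ and you must intersect with that event and pay an extra $\tilde\delta$; the paper handles this by running the Azuma/union-bound argument at level $\tilde\delta$ and then setting $\delta' = 2\tilde\delta$ at the end.
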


\begin{proof}[Proof of \Cref{lemma:min_eigenv_lb}]
    Let the random variable $Z_t^\upsilon := \upsilon^\top A_t \upsilon - \E[ \upsilon^\top A_t \upsilon \mid \mathcal{F}_{t-1}]$, such that $\upsilon \in \mathcal{S}^{d-1}$.
    Notice that $Z_t^\upsilon$ is a martingale difference sequence as;
    \begin{enumerate}
        \item 
        \begin{align*}
            \E[\abs{Z_t^\upsilon}] &\leq \E[\abs{\upsilon^\top A_t \upsilon}] + \E\abs{\E[ \upsilon^\top A_t \upsilon \mid \mathcal{F}_{t-1}]} \\
            &\leq \E[\upsilon^\top A_t \upsilon] + \E\E[ \upsilon^\top A_t \upsilon \mid \mathcal{F}_{t-1}] \\
            &\leq 1 + 1 = 2 < \infty.
        \end{align*}
        \item 
        \begin{align*}
            \E[Z_t^\upsilon \mid \mathcal{F}_{t-1}] &= \E[ \upsilon^\top A_t \upsilon \mid \mathcal{F}_{t-1}] - \E[ \upsilon^\top A_t \upsilon \mid \mathcal{F}_{t-1}] = 0.
        \end{align*}
    \end{enumerate}

    By the Azuma-Hoeﬀding Inequality \citep{chung2006concentration}, as $Z_t^\upsilon \in [0,1]$ a.s., for a fixed $t \in [T]$ we have, $c\geq0$;
    \begin{align*}
        \P\left\{ \sum_{s=0}^{t} (\upsilon^\top A_s\upsilon - \E[\upsilon^\top A_s\upsilon \mid \mathcal{F}_{s-1}]) \leq -c \right\} \leq \exp\left(-\frac{2c^2}{t}\right).
        \end{align*}
Setting the error probability to $\delta_t$,
        \begin{align*}
        \P\left\{ \sum_{s=0}^{t} (\upsilon^\top A_s\upsilon - \E[\upsilon^\top A_s\upsilon \mid \mathcal{F}_{s-1}]) \leq -\sqrt{\frac{\log(\frac{1}{\delta_t})t}{2}} \right\} \leq \delta_t.
    \end{align*}
    Thus, substituting $\delta_t = \frac{\tilde{\delta}}{2t^2}$ and using the union bound we get,
    \begin{equation*}
         \P\left\{ \sum_{s=0}^{t} (\upsilon^\top A_s\upsilon - \E[\upsilon^\top A_s\upsilon \mid \mathcal{F}_{s-1}]) \leq -\sqrt{\frac{\log(\frac{2t^2}{\tilde{\delta}})t}{2}} ~~\forall t \in \mathbb{N}\right\} \leq \sum_{t=1}^\infty \delta_t \leq \tilde{\delta}.
    \end{equation*}

    Let $\mathcal{N}(\mathcal{S}^{d-1},\eps)$ an $\eps$-cover of $\mathcal{S}^{d-1}$.
    By \textbf{Corollary 4.2.13} at \citet{vershynin2018high} we have that the covering numbers of $\mathcal{S}^{d-1}$ satisfy for any $\eps>0$;
    \begin{align*}
        \left(\frac{1}{\eps}\right)^d &\leq \mathcal{N}(\mathcal{S}^{d-1},\eps) \leq \left(\frac{2}{\eps}+1\right)^d.
    \end{align*}

        For convenience, we define ~$\nu(t,\tilde{\delta}) := \sqrt{\frac{[d\log({2/\eps + 1})+\log(\frac{2t^2}{\tilde{\delta}})]t}{2}}$.
    By taking the union bound over all $\upsilon_i \in \mathcal{N}(\mathcal{S}^{d-1},\eps)$ we have
    \begin{equation}\label{equation::support_cover_bound}
        \P\left\{\exists \upsilon_i \in  \mathcal{N}(\mathcal{S}^{d-1},\eps): \sum_{s=0}^{t} (\upsilon_i^\top A_s\upsilon_i - \E[\upsilon_i^\top A_s\upsilon_i \mid \mathcal{F}_{s-1}]) \leq -\nu(t,\tilde{\delta})~~ \forall t \in \mathbb{N}\right\} \leq \tilde{\delta}  \\   
    \end{equation}
    
    Let $\upsilon_t^\star := \argmin_{\upsilon \in \mathcal{S}^{d-1}}\upsilon^\top \sum_{s=0}^{t}A_s\upsilon$, 
    then there exists an $\upsilon_{i_t} \in \mathcal{N}(\mathcal{S}^{d-1},\eps)$ such that
    $\norm{\upsilon_{i_t} - \upsilon_t^\star}_2 \leq \eps$.
    We are going to bound $\abs{{\upsilon_t^{\star}}^\top \sum_{s=0}^{t}A_s \upsilon_t^\star - \upsilon_{i_t}^\top\sum_{s=0}^{t}A_s \upsilon_{i_t}}$ by a function of $\eps$.
    \begin{align}
        \abs{{\upsilon_t^{\star}}^\top \sum_{s=0}^{t}A_s \upsilon_t^\star - \upsilon_{i_t}^\top\sum_{s=0}^{t}A_s \upsilon_{i_t}} &= \abs{{\upsilon_t^{\star}}^\top \sum_{s=0}^{t}A_s \upsilon_t^\star 
        - {\upsilon_t^{\star}}^\top \sum_{s=0}^{t}A_s\upsilon_{i_t}
        + {\upsilon_t^{\star}}^\top \sum_{s=0}^{t}A_s\upsilon_{i_t}
        -\upsilon_{i_t}^\top\sum_{s=0}^{t}A_s \upsilon_{i_t}} \notag \\
        &= \abs{{\upsilon_t^{\star}}^\top \sum_{s=0}^{t}A_s (\upsilon_t^\star-\upsilon_{i_t}) + (\upsilon_t^\star - \upsilon_{i_t})^\top \sum_{s=0}^{t}A_s \upsilon_{i_t}
        } \notag\\
        &= \abs{(\upsilon_t^\star - \upsilon_{i_t})^\top \sum_{s=0}^{t}A_s (\upsilon_{i_t}+\upsilon_t^{\star})}\notag\\
        &\leq \norm{\upsilon_t^\star - \upsilon_{i_t}}_2 \norm{\sum_{s=0}^{t}A_s (\upsilon_{i_t}+\upsilon_t^{\star})}_2 \notag \\
        &\leq \eps \sum_{s=0}^{t}\norm{A_s}_{op}(\norm{\upsilon_{i_t}}_2 + \norm{\upsilon_t^{\star}}_2)\notag \\
        &= 2t\eps. \label{equation::cover_bound_optimal}
    \end{align}
    Using inequality~\ref{equation::support_cover_bound} we have
    \begin{align*}
         \P \left\{ \sum_{s=0}^{t} \upsilon_{i_t}^\top A_s\upsilon_{i_t} \geq \sum_{s=0}^{t}\E[\upsilon_{i_t}^\top A_s\upsilon_{i_t} \mid \mathcal{F}_{s-1}] - \nu(t,\tilde{\delta})~~\forall t \in \mathbb{N}\right\} \geq 1-\tilde{\delta}.
\end{align*}
            where $i_t$ is a point in the cover $\mathcal{N}(\mathcal{S}^{d-1},\eps)$ such that $\norm{\upsilon_{i_t} - \upsilon_t^\star}_2 \leq \eps$. Equation~\ref{equation::cover_bound_optimal} can be used to relate $\sum_{s=0}^{t} \upsilon_{i_t}^\top A_s\upsilon_{i_t}$ and $\lambda_{\mathrm{min}}^t$,
        \begin{align*}
         \P \left\{ \underbrace{\sum_{s=0}^{t} {\upsilon_t^{\star}}^\top A_s{\upsilon_t^{\star}}}_{\lambda_{\mathrm{min}}^t} + 2t\eps \geq \sum_{s=0}^{t}\E[\upsilon_{i_t}^\top A_s\upsilon_{i_t} \mid \mathcal{F}_{s-1}] - \nu(t,\tilde{\delta})~~\forall t \in \mathbb{N}\right\} \geq 1-\tilde{\delta}.
    \end{align*}
Using the fact that $\E[\upsilon_{i_t}^\top A_s\upsilon_{i_t} \mid \mathcal{F}_{s-1}] \geq \lambda_{\mathrm{min}}( \E[ A_s \mid \mathcal{F}_{s-1}] ) $ we conclude that,
   \begin{align*}
         \P \left\{ \lambda_{\mathrm{min}}^t  + 2t\eps \geq \sum_{s=0}^{t}  \lambda_{\mathrm{min}}( \E[ A_s \mid \mathcal{F}_{s-1}] ) - \nu(t,\tilde{\delta})~~\forall t \in \mathbb{N}\right\} \geq 1-\tilde{\delta}.
    \end{align*}
Finally, the assumption that  $\P\left( \lambda_{\mathrm{min}}( \mathbb{E}[ A_t | \mathcal{F}_{t-1} ] ) \geq \lambda_{0}~\forall t \in \mathbb{N} \right) \geq 1-\tilde{\delta}$ and a union bound allows us to conclude that,
\begin{align*}
  &\P \left\{ \lambda_{\min}^t \geq t(\lambda_{0}-2\eps) - \nu(t,\tilde{\delta})~~\forall t \in \mathbb{N}\right\} \\
 &\geq  \P \left\{ \lambda_{\mathrm{min}}^t  + 2t\eps \geq \sum_{s=0}^{t}  \lambda_{\mathrm{min}}( \E[ A_s \mid \mathcal{F}_{s-1}] ) - \nu(t,\tilde{\delta}) \cap \lambda_{\mathrm{min}}( \mathbb{E}[ A_t | \mathcal{F}_{t-1} ] ) \geq \lambda_{0}~~\forall t \in \mathbb{N}\right\} \\
    &\geq  1-2\tilde{\delta}.
\end{align*}
This finalizes the result for $\delta' = 2\tilde{\delta}$.

\end{proof}

We will apply this lemma for $A_t = X_tX_t^\top$.
We use the fact that $\lambda_{\min}\left( \kappa\boldsymbol{I}_d + \sum_{s \in \CS_\Theta^t} X_sX_s^\top\right) > \lambda_{\min}\left(\sum_{s \in \CS_\Theta^t} X_sX_s^\top\right)$.
It is true that $\norm{ X_tX_t^\top}_{op} = \norm{X_t}_2 \le 1$.
We will make again the same observation, by choosing the covering parameter as $\eps = \frac{\lambda_{0}}{5}$, then we have that for all $t\ge T_0$
\begin{equation}\label{eq:lambda_min_simplification}
   \lambda_{\min}^t \geq N_\theta^t \cdot \frac{\lambda_{0}}{4}.
\end{equation}

In \Cref{lem:ntheta_t_event} we proved that with probability at least $1-\delta'$, it holds that $N_\theta^t \ge \frac{p\opt t}{3}$.
By taking the union bound over the two events, we have that with probability at least $1-2\delta'$

\begin{equation*}
    \lambda_{\min}^t \ge p\opt t \cdot \frac{\lambda_0}{12}. 
\end{equation*}

\subsection{Combining all together}
\begin{proof}[Proof of \Cref{lem:total_good_event}]
   By using the product rule we have that 
   \begin{align*}
       \P( G_\theta \cap G_N \cap G_{\perr}) &= \P(G_N \mid G_\theta \cap G_{\perr})\P(G_\theta \cap G_{\perr})\\
   \end{align*}
    As $\P(G_\theta) \geq 1 -\delta$ from \Cref{lem:faury_concentration} and $\P(G_{\perr}) \geq 1 -\delta$ from \Cref{lem:gt_goodevent}, by using the union bound we have $\P(G_\theta \cap G_{\perr}) \geq 1-2\delta$.
    By using also \Cref{lem:ntheta_t_event} we have 
    \begin{align*}
        \P(G_N \mid G_\theta \cap G_{\perr})\P(G_\theta \cap G_{\perr}) &\ge (1-2\delta')^2 \\
        &\ge 1 -4\delta'.
   \end{align*}
   As $\P(G_\lambda) \ge 1 -2\delta'$ by \Cref{lemma:min_eigenv_lb}, by taking the union bound again we have that 
   \begin{equation*}
       \P(G_\theta \cap G_{\perr} \cap G_N \cap G_\lambda) \geq 1- 6\delta'.
   \end{equation*}
\end{proof}

\section{Modifications from written algorithm} \label{app:mod_from_written}
For our numerical simulations, we implemented a version of \SCOUT with a few minor modifications from \Cref{alg:alg1} to enable it to a) run computationally faster, and b) converge more quickly statistically.
These changes are common in practical applications of online learning algorithms to balance theoretical rigor with empirical performance.
Full details are available online: \url{https://github.com/TavorB/SCOUT}.

\textbf{Batched Parameter Updates:} as written, \SCOUT updates the parameter estimate and the testing threshold at every time step $t$.
    In a setting with a large time horizon $T$, re-running the estimation procedures on ever-growing datasets at each step is computationally wasteful, as these will not change too much iteration to iteration.
    Instead, our implementation updates these estimates only periodically.
    Concretely, the estimates for $\theta$ and $\tau$ are cached and reused for a block of subsequent time steps.
    The frequency of these updates is decreased as the simulation progresses, reflecting the gradual convergence of the parameters.

    \vspace{.1cm}
    \textbf{Simplified Testing Condition:} the testing condition of \SCOUT is given by $\langle X_{t},\theta_{t}^{L}\rangle| \le \tau_t $.
    This incorporates several uncertainty terms derived from our theoretical analysis.
    While crucial for the regret bounds, computing these quantities at every step is not necessary in practice, and the same performance can be obtained by simply collapsing these terms into a) the $\tau$ estimate, and b) a bound on $B_t \|X_t\|_{V_t^{-1}}$ (note that in practice this second term may not be known, as it will depend on $\lambda_0$, which \SCOUT will learn and adapt to).
    The testing decision becomes $Z_t = 1$ if $|\langle X_t,  \theta^L_t \rangle|$ is less than the sum of these two terms.

    \vspace{.1cm}
\textbf{Omission of the Projection Step:} Our theoretical analysis utilizes two estimators.
    First, the regularized maximum likelihood estimator $\hat{\theta}_{t} = \argmax_{\theta \in \mathbb{R}^d} \mathcal{L}_{t}(\theta)$, where $\mathcal{L}_t(\theta)$ is the regularized log-likelihood.
    Second, for analysis purposes, a projection of this estimator, $\theta_t^L$, is defined in \Cref{eq:projection}.
    This projection is in practice unneeded, and so we simply utilize $\hat{\theta}_t$ as our $\theta$ estimate.

In addition, we empirically reduce the leading constants for confidence related quantities, for example in the $B_t$ bound.
These were modified once up front, retaining the empirical safety guarantees of our algorithm, and dramatically reducing time to convergence.

\end{document}